\PassOptionsToPackage{table}{xcolor}
\documentclass{fairmeta}

\usepackage{amsmath,amssymb}
\usepackage{amsfonts}
\usepackage{url}
\usepackage{booktabs}
\usepackage{xspace}
\usepackage{amsthm}
\usepackage{algorithm}
\usepackage{algpseudocode}

\usepackage{amsmath,amsfonts,bm}

\def\eqref#1{equation~\ref{#1}}

\def\1{\bm{1}}

\DeclareMathAlphabet{\mathsfit}{\encodingdefault}{\sfdefault}{m}{sl}
\SetMathAlphabet{\mathsfit}{bold}{\encodingdefault}{\sfdefault}{bx}{n}

\newtheorem{theorem}{Theorem}[section]
\newtheorem{proposition}[theorem]{Proposition}

\newtheorem{remark}[theorem]{Remark}
\newtheorem{lemma}[theorem]{Lemma}
\newtheorem{assumption}[theorem]{Assumption}

\definecolor{WarmOrange}{HTML}{C9B3E6}
\definecolor{LightWarmOrange}{HTML}{E8DDF4}
\DeclareRobustCommand{\method}{\textsc{Meld}\xspace}
\hypersetup{
  pdftitle={MELD: Recursive Self-Improvement via Hierarchical Coordination for Multi-Agent Scientific Discovery},
  pdfauthor={Sizhe Tang, Zuyuan Zhang, Zeyu Fang, Tian Lan}
}

\title{\method: Recursive Self-Improvement via\\
Hierarchical Coordination for Multi-Agent\\
Scientific Discovery}

\author[1]{Sizhe Tang}
\author[1]{Zuyuan Zhang}
\author[1]{Zeyu Fang}
\author[1]{Tian Lan}
\affiliation[1]{George Washington University}
\abstract{Multi-agent scientific discovery with large language models raises
a coupled learning problem: research assignments determine both
which experiments are pursued and the experiences from which
agents adapt.
We introduce \method{}, a unified hierarchical test-time training
framework that jointly adapts 
a team-level Coordinator policy and agent-level individual Scientist policies within a single discovery process. The Coordinator generates a joint portfolio of search branches and research directives over a shared derivation tree, while Scientists develop hypotheses and execution plans implemented by a shared frozen Executor. Both levels receive alternating GRPO-style parameter updates under a shared smooth best-of-budget objective, allowing team organization and individual research behavior to co-evolve in a recursive self-improvement loop.
Our analysis establishes objective alignment for the hierarchy, characterizes conditions for coordination gains over independent parallel discovery and the first-order benefit of private adaptation, and derives a finite-budget
regret bound for adapted models.
On 22 MLGym and MLE-bench tasks, \method{} is best on 20 of them and achieves the highest
aggregate normalized scores on MLGym, using a 4B backbone
for coordination and research.
Most baselines use proprietary Scientists, and all systems
share one frozen Executor.
}
\authoremails{\email{s.tang1@gwu.edu}}
\metadata[Project Code]{\url{https://github.com/sizhe-tang/MELD-RSI}}
\date{October 2026}

\begin{document}
\maketitle

\section{Introduction}

Scientific discovery requires judiciously allocating a limited exploration/evaluation budget across competing hypotheses. Large language models (LLMs) increasingly support this process by proposing, testing, and refining candidates~\citep{funsearch,alphaevolve}, while test-time training (TTT) uses discovery feedback to adapt their research policies~\citep{thetaevolve,tttdiscover,arts}. For a team of collaborative agents, each local research assignment determines both the team's overall progress and the learning experience that shapes the team's subsequent research strategy.
Agent coordination in scientific discovery, such as focusing collective effort on promising research lineages while maintaining sufficient exploration, becomes part of the learning process rather than merely a scheduling mechanism.

Existing approaches provide complementary capabilities. TTT-Discover and ARTS adapt task-local proposal or research policies~\citep{tttdiscover,arts}, while CORAL coordinates parallel discovery through shared persistent memory~\citep{coral}. Agent Q-Mix learns communication and coordination across a training distribution~\citep{agentqmix,rashid2018qmix}, without jointly adapting its coordination strategy and underlying LLM researchers to each new discovery task. These advances leave a central joint learning problem unresolved: \emph{how can a team-level coordination policy and individual agent-level research policies be learned together within a single discovery process?} We address this problem through a unified framework that jointly adapts both levels via task-local parameter updates, allowing team organization and individual research behavior to co-evolve.

We introduce \textbf{\method{}}, a unified hierarchical TTT framework for multi-agent scientific discovery. A team-level Coordinator generates a joint portfolio of parent branches and research directives over a shared derivation tree. Initially identical Scientists, each with a private adapter, develop their assignments into hypotheses and execution plans implemented by a shared frozen Executor. The Coordinator and Scientists learn through alternating GRPO-style parameter updates~\citep{shao2024deepseekmath} under a shared smooth best-of-budget objective. Thus, \method{} creates a recursive self-improvement (RSI) loop: Joint portfolio coordination organizes research across agents and shapes the team's learning experience, while private agents' evolving capabilities and outcomes guide subsequent coordination.

Our theoretical analysis establishes objective alignment for the proposed hierarchy, characterizes conditions for coordination gains over independent parallel discovery at matched individual outcome distributions and budgets, and quantifies the first-order benefit of private adaptation at a common model. We further derive a finite-budget regret bound for adapted models and a conditional optimization rate. Across 12 MLGym~\citep{nathani2025mlgym} and 10 MLE-bench~\citep{chan2025mlebench} tasks, \method{} attains the best three-seed mean score among the compared methods on 20 tasks (Table~\ref{tab:mlgym-main-results}) and the highest aggregate normalized scores on MLGym, with an interquartile mean of $0.748$ versus $0.684$ for ARTS$^{*}$. These system-level comparisons use a 4B backbone for \method{}'s coordination and research, proprietary Scientist models for most baselines, and the same frozen Executor throughout.

Our contributions are threefold:
\begin{itemize}
    \item \textbf{A unified framework for task-local RSI.}
    We jointly learn a team-level portfolio coordination policy and agent-level Scientist research policies within a shared-objective hierarchy in a single discovery task.

    \item \textbf{Theory of coordination and private adaptation.}
   We quantify the benefits of \method{} and relate surrogate optimization to finite-budget discovery regret at adapted policy.

    \item \textbf{Strong empirical performance.}
    We report the best score on 20 of 22 MLGym and MLE-bench tasks and leading aggregate performance on MLGym with a compact research backbone, complemented by search trajectories and adaptation ablations.
\end{itemize}

\section{Related Work}

\paragraph{Test-Time Training for Discovery.}
TTT updates model parameters using a signal from the test instance or task~\citep{sun2020ttt}, whereas test-time scaling allocates additional inference computation without updating them~\citep{snell2024testtime}. Early discovery systems such as FunSearch~\citep{funsearch} and AlphaEvolve~\citep{alphaevolve} iteratively generate and select candidates with a fixed proposal model, while recent methods center adaptation on a single task-local proposal policy~\citep{thetaevolve,tttdiscover}, strategic advisor~\citep{pacevolvepp}, uncertainty-guided exploration component~\citep{ugttt}, or tree-search Scientist~\citep{arts}. \method instead organizes the experiences of multiple concurrently adapting research policies.

\paragraph{Multi-Agent Systems and Learned Orchestration.}
In discovery, CORAL~\citep{coral} and SwarmResearch~\citep{swarmresearch} coordinate parallel agents through shared memory or centralized orchestration. Parameter-trained systems~\citep{metaagentx,martimars2} typically learn across a training distribution, while TacoMAS~\citep{tacomas} co-evolves role-specific capabilities and topology at inference. Related MARL work learns emergent role representations during training~\citep{wang2020roma}, whereas \method induces persistent task-local specialization through assignment-conditioned private adapter updates. GPTSwarm~\citep{zhuge2024gptswarm}, ADAS~\citep{hu2024adas}, and the RL-trained Conductor~\citep{nielsen2025conductor} establish that orchestration can be learned. \method differs by alternating task-local weight updates to the team-level coordination policy and multiple agent-level Scientist policies within the same unseen discovery run, under one shared best-of-budget objective, so that team organization and individual research behavior can co-evolve.

\section{Preliminaries}
\label{sec:preliminaries}

\paragraph{Executable Discovery Task.}
Following \citet{arts}, we study test-time discovery on a single
previously unseen executable task
$\mathcal{P}=(\mathcal{W}_0,m,s_0,T)$, where $\mathcal{W}_0$ contains
the task specification, data, baseline artifact $c_0$, and evaluation
harness; $m$ is the task-specific metric; $s_0=s(c_0)$ is the normalized
baseline utility; and $T$ is a system-level budget shared by all agents.
Given the raw evaluator metric $q_m(c)$, we define the normalized utility
$s(c)=g_m\!\left(q_m(c)\right)\in[0,1]$, where $g_m$ is fixed for each task
and monotone in the preferred direction of $m$, so that larger $s(c)$ always
indicates a better artifact. In our implementation,
$g_m(q)=\sigma(z_m(q))$, where $z_m$ performs metric-specific
standardization and direction correction, and $\sigma$ is the logistic sigmoid.

Let $\mathcal{C}^{\mathrm{val}}_T(\Pi)$ denote the successfully evaluated
candidates produced within budget by an adaptive multi-agent policy
$\Pi$. We optimize expected best-of-budget utility, corresponding to a
simple-regret objective in fixed-budget pure exploration~\citep{bubeck2009pure},
\begin{equation}
    J_T(\Pi;\mathcal{P})
    =
    \mathbb{E}_{\Pi}\!\left[
        \max_{c\in\{c_0\}\cup\mathcal{C}^{\mathrm{val}}_T(\Pi)}
        s(c)
    \right],
    \qquad
    \mathfrak{R}_T=s^\star-J_T(\Pi;\mathcal{P}),
    \label{eq:discovery-objective}
\end{equation}
where $s^\star$ is the optimal feasible score. All parameter updates use
only interaction data from this task.

\paragraph{Shared Discovery Process.}
At round $t$, the shared history is represented by a rooted derivation
tree $\mathcal{G}_t=(V_t,E_t)$, whose root stores $(c_0,s_0)$. Together
with the remaining budget $b_t$, it defines the discovery state
$\mathcal{S}_t=(\mathcal{P},\mathcal{G}_t,b_t)$. Conditioned on
$\mathcal{S}_t$, policy $\Pi$ proposes $n_t\leq N$ concurrent expansions, where $N$ is the maximum number of active research units,
with expansion $i$ given by
\begin{equation}
    \xi_t^i=(p_t^i,h_t^i,e_t^i),
    \qquad
    c_t^i=\operatorname{Exec}(c_{p_t^i};h_t^i,e_t^i),
    \qquad
    (\ell_t^i,s_t^i)
    =
    \operatorname{Eval}_{m}(c_t^i;\mathcal{W}_0),
    \label{eq:discovery-expansion}
\end{equation}
where $p_t^i\in V_t$ selects a parent node, $h_t^i$ is a testable
hypothesis, and $e_t^i$ is an execution plan. The evaluator returns feedback
$\ell_t^i$ and utility $s_t^i$, with $s_t^i=\bot$ on failure.
The Executor remains frozen, confining parameter
adaptation to the policies that select and develop research directions.

Each successful evaluation adds
$v_t^i=(h_t^i,e_t^i,c_t^i,\ell_t^i,s_t^i)$ to the tree. Defining
$V_t^{+}=\{v_t^i:s_t^i\neq\bot\}$, we update $V_{t+1}=V_t\cup V_t^{+}$ and
$E_{t+1}=E_t\cup\{(p_t^i,v_t^i):v_t^i\in V_t^{+}\}$. Failed attempts consume
budget and provide learning signals but are not added as expandable nodes.
Section~\ref{sec:method} instantiates $\Pi$ hierarchically: the team-level Coordinator assigns $z_t^i=(p_t^i,d_t^i)$, and each agent-level Scientist $i$ produces $u_t^i=(h_t^i,e_t^i)$; their alternating adaptation forms \method's
multi-agent RSI loop.

\section{\method: Hierarchical Test-Time Training for Self-Improving Multi-Agent Discovery}
\label{sec:method}

\method couples joint research assignment with private policy adaptation. A learnable \emph{Coordinator} assigns research directives to $N$ agents (\emph{research units}), each pairing a private Scientist with the shared frozen Executor. The two policy levels, the team-level Coordinator and the agent-level Scientists, receive alternating task-local updates under a shared smooth best-of-budget objective. Assignments thus shape both current exploration and the experience available for private adaptation.

\subsection{Hierarchical Policy}
\label{sec:hierarchical-policy}

At round $t$, the coordination policy $\pi^{\mathrm{coord}}_\phi$ observes the shared discovery state $\mathcal S_t$ and generates a \emph{portfolio} $Z_t=\{z_t^i\}_{i\in\mathcal I_t}$ for active units $\mathcal I_t\subseteq\{1,\ldots,N\}$. Each assignment $z_t^i=(p_t^i,d_t^i)$ specifies a parent node $p_t^i$ from a retrieved tree shortlist and a research directive $d_t^i$: a schema-constrained natural-language statement of the research question, the parent's bottleneck, and the constraints to respect. The portfolio is emitted as one structured completion, allowing assignments to depend on one another before candidates are generated.

Given $(\mathcal S_t,z_t^i)$, Scientist $i$ samples a hypothesis and execution plan $u_t^i=(h_t^i,e_t^i)\sim\pi^{\mathrm{sci}}_{\theta_i}(\cdot\mid\mathcal S_t,z_t^i)$. The frozen Executor implements the plan as $c_t^i=\operatorname{Exec}(c_{p_t^i};h_t^i,e_t^i)$, which is evaluated as in Eq.~(\ref{eq:discovery-expansion}). Scientists sample independently conditional on the state and portfolio. With $U_t=\{u_t^i\}_{i\in\mathcal I_t}$,
\begin{equation}
\pi_\Theta(Z_t,U_t\mid\mathcal S_t)
=\pi^{\mathrm{coord}}_\phi(Z_t\mid\mathcal S_t)\prod_{i\in\mathcal I_t}\pi^{\mathrm{sci}}_{\theta_i}(u_t^i\mid\mathcal S_t,z_t^i),
\qquad \Theta=(\phi,\theta_{1:N}).
\label{eq:hierarchical-policy}
\end{equation}
The Coordinator decides \emph{where and what} to research; each Scientist decides \emph{how}. Coordination resides in the joint assignment policy, even though Scientists sample independently once assigned.

The Coordinator and Scientists share one frozen backbone, with a LoRA adapter~\citep{hu2022lora} $\phi$ for coordination and a private $\theta_i$ for each Scientist. All Scientists start identically, $\theta_{1,0}=\cdots=\theta_{N,0}$, and every adapter resets at task start; neither the backbone nor the Executor is updated. The tree shares artifacts and feedback, while private adapters retain assignment-conditioned updates without forcing identical policy changes.

\subsection{A Shared Best-of-Budget Objective}
\label{sec:shared-objective-rl}

Equation~(\ref{eq:discovery-objective}) values the best validated artifact within budget. Parent-relative gains can reward branch improvements that leave the best-so-far utility unchanged, while scoring candidates separately ignores their combination. We therefore evaluate each round against the shared discovery state, using a smooth potential whose terminal value approximates best-of-budget utility.

\paragraph{Smooth potential.}
Let $K$ be the evaluator-call budget allowed by $T$, and $V$ the derivation-tree node set in state $\mathcal S$, including the root. Define
\begin{equation}
\begin{aligned}
\Phi_{\tau,K}(\mathcal S)
&=\tau\log\!\Big(\tfrac{1}{K+1}\Big[\textstyle\sum_{v\in V}e^{s_v/\tau}+(K+1-|V|)\,e^{s_{\min}/\tau}\Big]\Big),\\
J_{\tau,K}(\Theta)&=\mathbb E_{\Pi_\Theta}\big[\Phi_{\tau,K}(\mathcal S_{\mathrm{end}})\big],
\end{aligned}
\label{eq:method-smooth-objective}
\end{equation}
where $\tau>0$ is a temperature, $\mathcal S_{\mathrm{end}}$ is the state at budget exhaustion, and failed or unused slots carry $s_{\min}=0$. Padding fixes the normalization as the tree grows: each validated candidate replaces a sentinel slot rather than changing the denominator. This log-sum-exp lies below the best utility by at most $\tau\log(K+1)$ (Proposition~\ref{prop:objective-alignment}(i)), making $J_{\tau,K}$ a shared surrogate of $J_T$ for both policy levels.

\paragraph{Potential increments as shared credit.}
A round receives $r_t^{\Phi}=\Phi_{\tau,K}(\mathcal S_{t+1})-\Phi_{\tau,K}(\mathcal S_t)$ for its collective outcome. The temperature sets how closely the potential tracks the maximum while retaining credit for near-frontier candidates. Joint assignments are thus evaluated through realized candidate utilities, not an explicit diversity bonus. The increments telescope, $\sum_t r_t^{\Phi}=\Phi_{\tau,K}(\mathcal S_{\mathrm{end}})-\Phi_{\tau,K}(\mathcal S_0)$; the exact score-function gradient~\citep{williams1992reinforce} credits each decision with the full downstream sum (Proposition~\ref{prop:objective-alignment}). Direct Monte Carlo estimation requires continuations to budget exhaustion, so \method limits adaptation to a \emph{one-round horizon}.

At snapshot $\mathcal S$, a policy block $\beta\in\{\mathrm{coord},\mathrm{sci}_1,\ldots,\mathrm{sci}_N\}$ takes action $a$ in context $x$: $a=Z$, $x=\mathcal S$ for the Coordinator, and $a=u^i$, $x=(\mathcal S,z^i)$ for Scientist $i$. Its local objective is
\begin{equation}
\begin{aligned}
J^{\mathrm{loc}}_\beta(\mathcal S)
&=\mathbb E_{a\sim\pi_\beta(\cdot\mid x)}\big[\Phi_{\tau,K}(\mathcal S\oplus X(a))-\Phi_{\tau,K}(\mathcal S)\big],\\
\nabla_\beta J^{\mathrm{loc}}_\beta(\mathcal S)
&=\mathbb E\big[r^\Phi(a)\,\nabla_\beta\log\pi_\beta(a\mid x)\big].
\end{aligned}
\label{eq:local-objective}
\end{equation}
Here $X(a)$ contains the round's validated candidates, $\mathcal S\oplus X(a)$ extends the snapshot with their nodes, and $r^{\Phi}(a)$ is the realized potential increment; the gradient identity is established in Proposition~\ref{prop:objective-alignment}. The expectation includes execution and evaluation randomness and the other policies' responses at the fixed behavior model. For a Scientist update, the full portfolio is also fixed. This supplies round-local credit rather than an unbiased full-return gradient: the Coordinator is credited through its portfolio, each Scientist through its response, and both against the same team outcome.

\subsection{Alternating Test-Time Adaptation}
\label{sec:coadaptation}

\paragraph{Matched-snapshot groups.}
Because the tree and policies evolve, relative feedback is compared at a fixed context. We combine GRPO-style sampling~\citep{shao2024deepseekmath} with a leave-one-out baseline~\citep{ahmadian2024rloo}. Each group contains $G\geq2$ independent \emph{forks} of one snapshot, with the discovery state, remaining budget, parent shortlist, and behavior model $\Theta_{\mathrm{old}}$ fixed. The $N$ research units provide parallel exploration within a fork, while the $G$ forks provide matched-context samples for adaptation. In a \emph{Coordinator group}, each fork samples a complete portfolio $Z^{(g)}\sim\pi^{\mathrm{coord}}_{\phi_{\mathrm{old}}}(\cdot\mid\mathcal S)$ and executes its units using the current Scientist policies. In a \emph{Scientist group}, one portfolio $Z$ is fixed across forks, each Scientist independently samples $u_i^{(g)}\sim\pi^{\mathrm{sci}}_{\theta_{i,\mathrm{old}}}(\cdot\mid\mathcal S,z^i)$, and their responses are executed together. Fork $g$ yields validated candidates $X^{(g)}$, reward $r_g^{\Phi}=\Phi_{\tau,K}(\mathcal S\oplus X^{(g)})-\Phi_{\tau,K}(\mathcal S)$, and advantage $\widehat A_g=r^{\Phi}_g-\frac{1}{G-1}\sum_{g'\neq g}r^{\Phi}_{g'}$, whose baseline excludes fork $g$ and, by conditional independence, preserves the sequence-score estimator's expectation in Eq.~(\ref{eq:local-objective}). Scientists share this team advantage but update only through their own response tokens, without pooling samples across snapshots or private policies.

\paragraph{Token-level update.}
For the policy being updated, let $y_g=(y_{g,1},\ldots,y_{g,L_g})$ be its structured response in fork $g$ and $\rho_{g,j}=\pi_\beta(y_{g,j}\mid x,y_{g,<j})/\pi_{\beta,\mathrm{old}}(y_{g,j}\mid x,y_{g,<j})$ the token importance ratio. Broadcasting $\widehat A_g$ to its response tokens gives the token-averaged PPO-style surrogate~\citep{schulman2017ppo},
\begin{equation}
\mathcal L_\beta
=-\frac{1}{G}\sum_{g=1}^{G}\frac{1}{L_g}\sum_{j=1}^{L_g}
\min\!\big(\rho_{g,j}\widehat A_g,\ \operatorname{clip}(\rho_{g,j},1-\epsilon_{\mathrm{lo}},1+\epsilon_{\mathrm{hi}})\,\widehat A_g\big)
+\beta_{\mathrm{KL}}\,\widehat D_{\mathrm{KL}}\big(\pi_\beta\,\|\,\pi_\beta^{\mathrm{ref}}\big),
\label{eq:grpo_objective}
\end{equation}
where $\pi_\beta^{\mathrm{ref}}$ is the frozen task-start policy, distinct from the per-group behavior policy $\pi_{\beta,\mathrm{old}}$, and $\widehat D_{\mathrm{KL}}$ is token-averaged on the same responses. Token-length normalization, clipping, and KL regularization define the implemented surrogate rather than the exact sequence-score gradient of Eq.~(\ref{eq:local-objective}) (Appendix~\ref{app:objective-proof}).

\paragraph{Update and commit.}
Algorithm~\ref{alg:matched-group} summarizes a complete group with sufficient remaining calls. Coordinator and Scientist groups alternate using fresh samples, with no replay, until the budget no longer admits a full group, after which the current policies are rolled out without further updates (Appendix~\ref{app:hparams}). Fork rewards share the pre-commit snapshot, and their sum need not equal the merged potential increment. Because updated policies collect the next group, Coordinator updates change subsequent learning assignments while Scientist updates change the outcomes informing subsequent coordination, which enables policy-level RSI.

\begin{algorithm}[t]
\caption{One complete matched-snapshot update group in \method}
\label{alg:matched-group}
\begin{algorithmic}[1]
\State Freeze $\mathcal S$ and behavior policies $\Theta_{\mathrm{old}}$ during collection.
\State Sample independent portfolios (Coordinator group) or one shared portfolio (Scientist group).
\State For each fork, sample Scientists' responses, execute and evaluate them, and compute $r_g^\Phi$.
\State Compute leave-one-out advantages $\widehat A_g$.
\State Take one step of Eq.~(\ref{eq:grpo_objective}) on $\phi$ (Coordinator) or on all private $\theta_i$ in parallel (Scientist).
\State Commit validated candidates, charge all evaluator calls to the budget, and switch the group type.
\end{algorithmic}
\end{algorithm}

\section{Theoretical Analysis}
\label{sec:theory}

This section shows that both levels of \method follow partial gradients of one objective aligned with $J_T$ (Section~\ref{sec:theory-objective}), quantifies what coordination and private adaptation add over parallel discovery (Section~\ref{sec:theory-organization}), and bounds the discovery regret after adaptation (Section~\ref{sec:theory-regret}), with proofs in Appendix~\ref{app:theory-proofs}.

\subsection{One Objective Across the Hierarchy}
\label{sec:theory-objective}

Although the team-level Coordinator and the agent-level Scientists act on different objects, Algorithm~\ref{alg:matched-group} trains both levels with the same reward $r^{\Phi}_t$. We show that the exact gradients of the two levels are partial gradients of a single objective approximating $J_T$ in Eq.~(\ref{eq:discovery-objective}).

\paragraph{Notation.}
Let $\Pi_\Theta$ denote the policy that applies $\pi_\Theta$ of Eq.~(\ref{eq:hierarchical-policy}) at every round with $\Theta$ fixed. A run of $\Pi_\Theta$ draws $\mathcal S_0\sim\nu$ and evolves by the transition kernel $\mathcal Q(\mathcal S_{t+1}\mid\mathcal S_t,Z_t,U_t)$, which comprises execution, evaluation, and bookkeeping, and $\mathbb E_{\Pi_\Theta}$ denotes expectation over runs. A policy block $\beta\in\{\mathrm{coord},\mathrm{sci}_1,\ldots,\mathrm{sci}_N\}$ acts on a context $x$ with an action $a$ (Section~\ref{sec:shared-objective-rl}), with $(a,x)=(Z_t,\mathcal S_t)$ for the Coordinator, which acts every round, and $(a,x)=(u^i_t,(\mathcal S_t,z^i_t))$ for Scientist $i$, which acts when $i\in\mathcal I_t$. We write $(a^{\beta}_t,x^{\beta}_t)$ for this pair and $\nabla_\beta$ for the gradient with respect to block $\beta$. Let $B(\mathcal S)=\max_{v\in V}s_v$, so that $J_T(\Pi_\Theta;\mathcal P)=\mathbb E_{\Pi_\Theta}[B(\mathcal S_{\mathrm{end}})]$, and let $R^{\Phi}_{t:\mathrm{end}}=\sum_{t'\ge t}r^{\Phi}_{t'}=\Phi_{\tau,K}(\mathcal S_{\mathrm{end}})-\Phi_{\tau,K}(\mathcal S_t)$ be the \emph{reward-to-go} of the team, which telescopes because $r^{\Phi}_t$ is a potential-based shaping reward~\citep{ng1999shaping}.

\begin{assumption}[Regularity]
\label{ass:regularity}
(R1) Given the sampled actions, $\nu$ and $\mathcal Q$ do not depend on $\Theta$. (R2) Each $\pi_\beta(\cdot\mid x)$ is differentiable in its block, with support independent of the block's parameters. (R3) Block gradients and expectations over runs may be interchanged.
\end{assumption}

Conditions (R1) and (R2) hold for softmax token policies with a frozen Executor and evaluator, and (R3) is the standard dominated-differentiation condition (Appendix~\ref{app:trajectory-factorization}).

\begin{proposition}[One objective across the hierarchy]
\label{prop:objective-alignment}
(i) For every $\Theta$, $J_T(\Pi_\Theta;\mathcal P)-\tau\log(K+1)\le J_{\tau,K}(\Theta)\le J_T(\Pi_\Theta;\mathcal P)$. Under Assumption~\ref{ass:regularity}: (ii) for every block $\beta$,
\begin{equation}
\nabla_\beta J_{\tau,K}(\Theta)
=\mathbb E_{\Pi_\Theta}\Big[\textstyle\sum_{t:\,\beta\text{ acts at }t}R^{\Phi}_{t:\mathrm{end}}\,\nabla_\beta\log\pi_\beta(a_t^\beta\mid x_t^\beta)\Big];
\label{eq:block-gradient}
\end{equation}
(iii) writing $R^{\Phi}_{t:\mathrm{end}}=r^{\Phi}_t+R^{\Phi}_{t+1:\mathrm{end}}$ in Eq.~(\ref{eq:block-gradient}), the $r^{\Phi}_t$-terms sum to $\mathbb E_{\Pi_\Theta}\big[\sum_{t:\,\beta\text{ acts at }t}\nabla_\beta J^{\mathrm{loc}}_\beta(\mathcal S_t)\big]$ with $J^{\mathrm{loc}}_\beta$ as in Eq.~(\ref{eq:local-objective}), evaluated for a Scientist block at the realized portfolio $Z_t$; (iv) for a matched-snapshot group at $\mathcal S$ with behavior model $\Theta_{\mathrm{old}}$ (Section~\ref{sec:coadaptation}), the leave-one-out estimator satisfies $\mathbb E\big[\widehat A_g\,\nabla_\beta\log\pi_\beta(a_g\mid x)\big]=\nabla_\beta J^{\mathrm{loc}}_\beta(\mathcal S)$ at $\Theta_{\mathrm{old}}$, with $a_g$ the action of block $\beta$ in fork $g$.
\end{proposition}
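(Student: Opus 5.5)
The plan is to prove the four parts in order, each by an elementary argument: a deterministic sandwich for (i), then three applications of the score-function identity $\mathbb E[\nabla_\beta\log\pi_\beta(a\mid x)\mid x]=0$ for (ii)--(iv).

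\textbf{Part (i).} I would prove the bound pathwise for a fixed terminal state $\mathcal S_{\mathrm{end}}$ and then take $\mathbb E_{\Pi_\Theta}$. Write $B=B(\mathcal S_{\mathrm{end}})$.
\begin{itemize}
\item The bracket in Eq.~(\ref{eq:method-smooth-objective}) contains exactly $K+1$ terms, since $|V|\le K+1$: the root plus at most $K$ validated candidates.
\item Every term is at most $e^{B/\tau}$, because $s_{\min}=0\le s_v\le B$ for all $v\in V$. Hence the average is at most $e^{B/\tau}$, which gives $\Phi_{\tau,K}\le B$.
\item The sum is at least its largest term $e^{B/\tau}$, since all padding terms are nonnegative. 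Hence $\Phi_{\tau,K}\ge B-\tau\log(K+1)$.
\item Taking expectations and using $J_T(\Pi_\Theta;\mathcal P)=\mathbb E_{\Pi_\Theta}[B(\mathcal S_{\mathrm{end}})]$ yields (i).
\end{itemize}

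\textbf{Part (ii).} I would use the likelihood-ratio argument on the trajectory law.
\begin{itemize}
\item To avoid a random horizon, embed every run in a fixed horizon $T_{\max}\le K$. This is possible because each round with at least one active unit consumes at least one evaluator call. Budget exhaustion becomes an absorbing state in which no block acts.
\item By (R1) and Eq.~(\ref{eq:hierarchical-policy}), the run density factorizes as $\nu(\mathcal S_0)\prod_t \pi^{\mathrm{coord}}_\phi(Z_t\mid\mathcal S_t)\prod_{i\in\mathcal I_t}\pi^{\mathrm{sci}}_{\theta_i}(u^i_t\mid\mathcal S_t,z^i_t)\,\mathcal Q(\mathcal S_{t+1}\mid\cdots)$.
\item Only the factors of block $\beta$ depend on that block's parameters. (R2) and (R3) then give $\nabla_\beta J_{\tau,K}=\mathbb E[\Phi_{\tau,K}(\mathcal S_{\mathrm{end}})\sum_{t:\beta\text{ acts}}\nabla_\beta\log\pi_\beta(a^\beta_t\mid x^\beta_t)]$.
\item Next, write $\Phi_{\tau,K}(\mathcal S_{\mathrm{end}})=\Phi_{\tau,K}(\mathcal S_t)+R^\Phi_{t:\mathrm{end}}$ separately inside each $t$-term.
\item The term $\Phi_{\tau,K}(\mathcal S_t)$ is measurable with respect to $x^\beta_t$, or with respect to its past. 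Conditioning on it and using $\mathbb E[\nabla_\beta\log\pi_\beta(a\mid x)\mid x]=0$ removes it. This is where (R2) is needed: the fixed support makes the score have conditional mean zero.
\end{itemize}

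\textbf{Part (iii).} I would condition each summand $r^\Phi_t\nabla_\beta\log\pi_\beta(a^\beta_t\mid x^\beta_t)$ on $x^\beta_t$.
\begin{itemize}
\item For the Coordinator, $x=\mathcal S_t$. The conditional law of $(Z_t,U_t,\mathcal S_{t+1})$ is exactly the one in Eq.~(\ref{eq:local-objective}), and the Scientists' response distributions do not depend on $\phi$.
\item For Scientist $i$, $x=(\mathcal S_t,z^i_t)$. I would additionally condition on the realized portfolio $Z_t$, and use that the other Scientists sample independently given $(\mathcal S_t,Z_t)$.
\item In both cases the conditional expectation equals $\mathbb E[r^\Phi(a)\nabla_\beta\log\pi_\beta]$ at the fixed behavior model. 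This is $\nabla_\beta J^{\mathrm{loc}}_\beta(\mathcal S_t)$, by the same score-function identity applied with $\Phi_{\tau,K}(\mathcal S)$ held constant.
\item The tower property then gives the stated sum.
\end{itemize}

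\textbf{Part (iv).} Expand the advantage:
\[
\mathbb E[\widehat A_g\nabla_\beta\log\pi_\beta(a_g\mid x)]=\mathbb E[r^\Phi_g\nabla_\beta\log\pi_\beta(a_g\mid x)]-\tfrac{1}{G-1}\textstyle\sum_{g'\neq g}\mathbb E[r^\Phi_{g'}\nabla_\beta\log\pi_\beta(a_g\mid x)].
\]
\begin{itemize}
\item The first term equals $\nabla_\beta J^{\mathrm{loc}}_\beta(\mathcal S)$, as in (iii).
\item For each cross term, forks are conditionally independent given the frozen snapshot and $\Theta_{\mathrm{old}}$. In a Scientist group this also holds given the shared portfolio $Z$. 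So each cross term factors as $\mathbb E[r^\Phi_{g'}]\cdot\mathbb E[\nabla_\beta\log\pi_\beta(a_g\mid x)]=0$.
\end{itemize}

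\textbf{Main obstacle.} I expect the hardest step to be making (ii) rigorous: handling the random stopping at budget exhaustion and the random active set $\mathcal I_t$. The fixed-horizon absorbing-state embedding above is what I would try first. I would verify that the event ``$\beta$ acts at $t$'' is determined by $(\mathcal S_t,Z_t)$, so that the conditional-mean-zero argument still applies to each Scientist term.
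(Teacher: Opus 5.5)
Your proposal is correct and follows essentially the same route as the paper: a pathwise sandwich on the $K+1$ slot terms for (i), then the likelihood-ratio trick with the block score decomposition and removal of the $\mathcal F_t^\beta$-measurable baseline $\Phi_{\tau,K}(\mathcal S_t)$ for (ii), and for (iii)--(iv) the conditional log-derivative identity combined with the independence of the leave-one-out baseline from $a_g$. The differences are only in ordering and bookkeeping. The paper proves (iv) first and reuses its local-gradient identity in (iii). It also handles the random horizon with indicators $\mathbf 1\{\beta\text{ acts at }t\}$ over $t<R\le K$ rather than your absorbing-state embedding.
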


Proposition~\ref{prop:objective-alignment} establishes that the two levels of \method share a single objective, which approximates $J_T$ up to the smoothing error $\tau\log(K+1)$ (i). Because the Executor and the evaluator are frozen and $\pi_\Theta$ factorizes as in Eq.~(\ref{eq:hierarchical-policy}), the gradient of every block weights its own score function by the same reward-to-go (ii), which is the shared-return policy gradient of cooperative multi-agent learning~\citep{peshkin2000learning,foerster2018coma}. The population update behind Algorithm~\ref{alg:matched-group} is therefore block-coordinate ascent on the snapshot-local objectives $J^{\mathrm{loc}}_\beta(\mathcal S_t)$, whose gradients form the current-round component of Eq.~(\ref{eq:block-gradient}) (iii) and are estimated without bias by the one-round estimator of Section~\ref{sec:coadaptation} (iv), whereas the omitted terms involving $R^{\Phi}_{t+1:\mathrm{end}}$ capture the effect of a decision on later rounds and constitute the bias of the one-round horizon. Appendix~\ref{app:token-normalization} analyzes the implemented surrogate of Eq.~(\ref{eq:grpo_objective}) and the rejection of malformed responses. Since the credit depends only on realized utilities and increases strictly in them (Appendix~\ref{app:frontier-proof}), coordination is rewarded through outcomes alone, which Section~\ref{sec:theory-organization} quantifies.

\subsection{Coordination and Private Adaptation versus Parallel Discovery}
\label{sec:theory-organization}

Parallel discovery runs $N$ units, each choosing its assignment independently given the shared state, with frozen agents~\citep{coral} or a single adapted policy~\citep{tttdiscover}, whereas \method draws the assignments jointly and adapts each Scientist privately. We compare the two designs at equal individual quality and equal budget, so that any difference is attributable to the design rather than to stronger units or additional evaluations. Fix a snapshot $\mathcal S$ with at least $N$ evaluator calls remaining and fix the Scientist models. A \emph{portfolio law} $\pi(Z\mid\mathcal S)$ draws a full portfolio $Z=(z^1,\ldots,z^N)$, after which unit $i$ produces and evaluates its candidate independently of the other units, using only $(\mathcal S,z^i)$, and returns a utility $Y_i\in[0,1]$, with $0$ for a failed call. Let $q_i$ be the $i$th marginal of $\pi$ and $\pi^{\mathrm{par}}=\prod_iq_i$ its \emph{independent parallel} counterpart. The two laws induce the same unit marginals $H=(H_1,\ldots,H_N)$, $H_i(y)=\Pr(Y_i\le y\mid\mathcal S)$, and differ only in the dependence among assignments, which is all that coordination controls. Write $B_{\mathrm{new}}=\max\{B(\mathcal S),Y_1,\ldots,Y_N\}$, $H_\pi(y)=\Pr_\pi(\max_iY_i\le y\mid\mathcal S)$, and $\Delta\mathcal A=\sum_i(e^{Y_i/\tau}-1)$ for the soft-mass added to $\mathcal A(\mathcal S)=\sum_{v\in V}e^{s_v/\tau}+(K+1-|V|)$, the bracket in Eq.~(\ref{eq:method-smooth-objective}), so that $r^{\Phi}=\psi(\Delta\mathcal A)$ for the strictly concave and increasing function $\psi(w)=\tau\log(1+w/\mathcal A(\mathcal S))$. Let
\begin{equation}
\mathrm{MN}(H)=\min_{c\in[B(\mathcal S),1]}\Big\{c+\textstyle\sum_i\int_c^1\big(1-H_i(y)\big)\,dy\Big\}
\label{eq:coordination-objects}
\end{equation}
be the Meilijson--N\'adas bound on $\mathbb E[B_{\mathrm{new}}]$ given $H$~\citep{meilijson1979convex}, with $c\ge B(\mathcal S)$ because $B_{\mathrm{new}}$ includes the incumbent.

\begin{theorem}[Value of coordination at matched marginals]
\label{thm:coordination-advantage}
In the setting above, for every portfolio law $\pi$:
(i) $\mathbb E_\pi[B_{\mathrm{new}}]-\mathbb E_{\pi^{\mathrm{par}}}[B_{\mathrm{new}}]=\int_{B(\mathcal S)}^1\big(\prod_iH_i(y)-H_\pi(y)\big)\,dy\le\mathrm{MN}(H)-\mathbb E_{\pi^{\mathrm{par}}}[B_{\mathrm{new}}]$;
(ii) $\mathbb E_\pi[r^{\Phi}]\le\psi(\mathbb E[\Delta\mathcal A])$, where $\mathbb E[\Delta\mathcal A]$ depends on $H$ only, with equality if and only if $\Delta\mathcal A$ is $\pi$-a.s.\ constant; under $\pi^{\mathrm{par}}$ the inequality is strict unless every $Y_i$ is degenerate;
(iii) if $(Y_1,\ldots,Y_N)$ is negatively associated under $\pi$~\citep{joagdev1983negative}, then $\mathbb E_\pi[r^{\Phi}]\ge\mathbb E_{\pi^{\mathrm{par}}}[r^{\Phi}]$.
\end{theorem}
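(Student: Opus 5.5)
The plan is to treat the three parts separately, since each rests on a different elementary fact: (i) on the tail-integral formula for expectations, (ii) on Jensen's inequality for the concave $\psi$, and (iii) on a convex-order comparison between sums of negatively associated and independent variables.

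\emph{Part (i).} Since $B_{\mathrm{new}}\in[B(\mathcal S),1]$, I will write
\[
\mathbb E[B_{\mathrm{new}}]=B(\mathcal S)+\int_{B(\mathcal S)}^1\Pr(B_{\mathrm{new}}>y)\,dy .
\]
For $y\ge B(\mathcal S)$ the event $\{B_{\mathrm{new}}\le y\}$ coincides with $\{\max_iY_i\le y\}$. Its probability is $H_\pi(y)$ under $\pi$ and $\prod_iH_i(y)$ under $\pi^{\mathrm{par}}$. The second claim uses that unit $i$'s outcome depends only on $(\mathcal S,z^i)$, so independent assignments yield independent $Y_i$ with the same marginals $H_i$. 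Subtracting the two tail integrals gives the stated identity.

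For the upper bound I will fix any $c\in[B(\mathcal S),1]$ and use
\[
B_{\mathrm{new}}\le c+\sum_i(Y_i-c)^+ .
\]
This holds because the incumbent is at most $c$, and if some $Y_j>c$ then $B_{\mathrm{new}}-c\le(Y_j-c)^+$. Taking expectations and using $\mathbb E(Y_i-c)^+=\int_c^1(1-H_i)\,dy$, then minimizing over $c$, gives $\mathbb E_\pi[B_{\mathrm{new}}]\le\mathrm{MN}(H)$. Subtracting $\mathbb E_{\pi^{\mathrm{par}}}[B_{\mathrm{new}}]$ from both sides finishes (i).

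\emph{Part (ii).} First, $\mathbb E[\Delta\mathcal A]=\sum_i\mathbb E[e^{Y_i/\tau}-1]$ is a sum of marginal expectations, so it depends on $H$ only. Since $\psi$ is strictly concave, Jensen's inequality gives $\mathbb E_\pi[\psi(\Delta\mathcal A)]\le\psi(\mathbb E\Delta\mathcal A)$, with equality if and only if $\Delta\mathcal A$ is a.s.\ constant. Under $\pi^{\mathrm{par}}$ the summands are independent, so
\[
\mathrm{Var}(\Delta\mathcal A)=\sum_i\mathrm{Var}(e^{Y_i/\tau}).
\]
This vanishes only if each $e^{Y_i/\tau}$, and hence each $Y_i$ (the map is strictly monotone), is degenerate. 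So the inequality is strict otherwise.

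\emph{Part (iii), the main step.} I will invoke the Shao (2000) comparison theorem: if $(X_i)$ is negatively associated and $(X_i^\ast)$ is an independent copy with the same marginals, then $\mathbb E f(\sum X_i)\le\mathbb E f(\sum X_i^\ast)$ for every convex $f$. I will apply it to $X_i=e^{Y_i/\tau}-1$. These are nondecreasing functions of disjoint single coordinates, so negative association is preserved. Their marginals match those under $\pi^{\mathrm{par}}$, and $\pi^{\mathrm{par}}$ realizes the independent copy. Taking $f=-\psi$, which is convex, yields $\mathbb E_\pi[\psi(\Delta\mathcal A)]\ge\mathbb E_{\pi^{\mathrm{par}}}[\psi(\Delta\mathcal A)]$.

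The technical check here is that $\psi$ is finite and concave on the whole range of $\Delta\mathcal A$. This holds because $\Delta\mathcal A\ge0>-\mathcal A(\mathcal S)$ and all variables are bounded, so Shao's integrability conditions are satisfied. If citing Shao is not acceptable, I would fall back on proving the convex-order comparison directly for bounded variables. This can be done by induction on $N$, using the NA covariance inequality $\mathrm{Cov}(g(X_1),h(\sum_{j>1}X_j))\le0$ for nondecreasing $g,h$. I expect this self-contained route to be the main obstacle.
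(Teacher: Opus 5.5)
Your proposal is correct and follows the paper's proof step for step. It uses the tail-integral identity with the product CDF under $\pi^{\mathrm{par}}$, and the pointwise ceiling $B_{\mathrm{new}}\le c+\sum_i(Y_i-c)^+$ minimized over $c\in[B(\mathcal S),1]$. It then applies Jensen with strict concavity of $\psi$; your variance computation just spells out the paper's ``independent summands'' degeneracy step. Finally it applies Shao's convex-order theorem to the increasing transforms $e^{Y_i/\tau}-1$ with $\varphi=-\psi$.

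One small wording fix in the ceiling argument: when $B_{\mathrm{new}}>c$, take $j$ to be an index attaining $B_{\mathrm{new}}$. An arbitrary $j$ with $Y_j>c$ need not satisfy $B_{\mathrm{new}}-c\le(Y_j-c)^+$.
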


Theorem~\ref{thm:coordination-advantage} shows that an independent law can change $\mathbb E[B_{\mathrm{new}}]$ and $\mathbb E[r^{\Phi}]$ only through the marginals $H$, whereas a joint law with the same marginals adds the dependence term in (i), up to the coupling bound $\mathrm{MN}(H)$. By strict concavity of $\psi$, the training reward at fixed marginals is largest when $\Delta\mathcal A$ is deterministic, which no independent law with non-degenerate units achieves (ii), and it is at least the parallel reward under negative association (iii). When one of $N$ directions has utility $s_{\mathrm{hi}}>B(\mathcal S)$ and the others $s_{\mathrm{lo}}\le B(\mathcal S)$, assigning them by a uniformly random permutation attains the bounds in (i) and (ii) and exceeds independent assignment in expected best-of-round utility by $(s_{\mathrm{hi}}-B(\mathcal S))(1-1/N)^N$ (Appendix~\ref{app:coordination-construction}). Since the Coordinator emits the portfolio as one completion (Section~\ref{sec:hierarchical-policy}) and is trained on $r^{\Phi}$, such laws are expressible and rewarded by its objective.

\paragraph{Private adaptation.}
To isolate the learning signal that coordination gives each Scientist, fix $\mathcal S$ and a full portfolio $Z$, let all Scientists start from a common model with weights $W_0$, and let $\bar r^{\Phi}(W_1,\ldots,W_N)=\mathbb E[r^{\Phi}\mid\mathcal S,Z]$ be the expected round reward when Scientist $i$ responds from weights $W_i=W_0+\gamma U_iV_i$, with LoRA scaling $\gamma>0$ and factors $\theta_i=(U_i,V_i)$ starting from the standard zero-output initialization $(0,V_0)$~\citep{hu2022lora}. Write $\Gamma_i=(\nabla_{W_i}\bar r^{\Phi})V_0^{\top}$ at the common model, $\bar\Gamma=N^{-1}\sum_i\Gamma_i$, and $\mathcal D(Z;\mathcal S)=N^{-2}\sum_{i<j}\|\Gamma_i-\Gamma_j\|_F^2$ for the dispersion of these assignment-conditioned gradients. We compare one population gradient step of size $\eta$ on each private adapter with the same step on a single shared adapter, whose population gradient is the average of the $N$ Scientist gradients because Eq.~(\ref{eq:grpo_objective}) averages over the $GN$ pooled responses.

\begin{proposition}[Private adaptation retains dispersion]
\label{prop:private-adaptation}
For differentiable $\bar r^{\Phi}$, the first-order change of $\bar r^{\Phi}$ divided by $N$ is $\eta\gamma^2\big(\|\bar\Gamma\|_F^2+\mathcal D(Z;\mathcal S)\big)$ for private adapters and $\eta\gamma^2\|\bar\Gamma\|_F^2$ for the shared adapter. If every Scientist receives the same assignment, then $\mathcal D(Z;\mathcal S)=0$.
\end{proposition}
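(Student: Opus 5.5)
The plan is a direct first-order computation in the LoRA factors at the zero-output initialization, followed by an elementary variance decomposition of $N^{-1}\sum_i\|\Gamma_i\|_F^2$. The dispersion statement then comes from a symmetry argument.

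\textbf{Step 1: gradients at the common model.} Write $G_i=\nabla_{W_i}\bar r^{\Phi}$ at the common model, so that $\Gamma_i=G_iV_0^{\top}$. By the chain rule through $W_i=W_0+\gamma U_iV_i$,
\[
\nabla_{U_i}\bar r^{\Phi}=\gamma\,G_iV_i^{\top},
\qquad
\nabla_{V_i}\bar r^{\Phi}=\gamma\,U_i^{\top}G_i .
\]
At $(U_i,V_i)=(0,V_0)$ the first is $\gamma\Gamma_i$ and the second vanishes. So one gradient step of size $\eta$ on private adapter $i$ moves only $U_i$, to $U_i=\eta\gamma\Gamma_i$. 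This gives $\Delta W_i=\eta\gamma^2\Gamma_iV_0$.

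To first order, the change of $\bar r^{\Phi}$ is $\sum_i\langle G_i,\Delta W_i\rangle_F$. Using $\langle G,\Gamma V_0\rangle_F=\operatorname{tr}(V_0G^{\top}\Gamma)=\langle GV_0^{\top},\Gamma\rangle_F$, this becomes $\eta\gamma^2\sum_i\langle\Gamma_i,\Gamma_i\rangle_F=\eta\gamma^2\sum_i\|\Gamma_i\|_F^2$.

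\textbf{Step 2: the shared adapter.} With one adapter $(U,V)$ used by all Scientists, I would follow the convention stated before the proposition. Because Eq.~(\ref{eq:grpo_objective}) averages over pooled responses, the population gradient is the average of the per-Scientist gradients, namely $\gamma\bar\Gamma$ in $U$ and $0$ in $V$. The step then gives $\Delta W_i=\eta\gamma^2\bar\Gamma V_0$ for every $i$. The first-order change is
\[
\sum_i\langle\Gamma_i,\eta\gamma^2\bar\Gamma\rangle_F=N\eta\gamma^2\|\bar\Gamma\|_F^2 .
\]

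\textbf{Step 3: decomposition.} The identity
\[
\sum_{i<j}\|\Gamma_i-\Gamma_j\|_F^2=N\sum_i\|\Gamma_i\|_F^2-\Big\|\sum_i\Gamma_i\Big\|_F^2,
\]
divided by $N^2$, yields $N^{-1}\sum_i\|\Gamma_i\|_F^2=\|\bar\Gamma\|_F^2+\mathcal D(Z;\mathcal S)$. Dividing both first-order changes by $N$ then gives the two claimed expressions.

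\textbf{Step 4: identical assignments.} This is the step needing the most care. If $z^1=\cdots=z^N$, then at the common model all units have the same conditional response law given $(\mathcal S,z^i)$. The round reward $r^{\Phi}=\psi(\Delta\mathcal A)$ depends on the utilities only through the symmetric sum $\Delta\mathcal A$. Hence $\bar r^{\Phi}(W_1,\dots,W_N)$ is invariant under permutations of its arguments. Differentiating this invariance at the symmetric point $W_1=\cdots=W_N=W_0$ gives $G_i=G_j$ for all $i,j$, so $\Gamma_i=\Gamma_j$ and $\mathcal D=0$.

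The main obstacle is making this exchangeability precise: unit $i$'s output must depend only on $(\mathcal S,z^i,W_i)$ and independent randomness, exactly as in the portfolio-law setting. I would also state explicitly that the shared-adapter comparison uses the paper's averaged-gradient convention, not the raw sum. Under the raw sum the shared step would gain a factor $N$, and that factor would need to be absorbed into $\eta$.
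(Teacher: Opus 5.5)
Your proposal is correct and follows the paper's proof essentially step for step: the LoRA chain rule at the zero-output initialization gives $\Delta W_i=\eta\gamma^2\Gamma_iV_0=\eta\gamma^2G_iV_0^{\top}V_0$, the shared adapter takes the mean-reduced gradient $\gamma\bar\Gamma$, the pairwise variance identity splits the first-order gain, and permutation symmetry of $\bar r^{\Phi}$ at the symmetric point handles identical assignments. The exchangeability condition you flag as the main obstacle is exactly the hypothesis the paper adds in its own proof: the assignment is the only Scientist-specific input, and execution and evaluation act identically and independently on each $(\mathcal S,z,u_i)$.
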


Proposition~\ref{prop:private-adaptation} shows that a shared adapter follows only the mean signal $\bar\Gamma$, whereas private adapters also retain the dispersion $\mathcal D(Z;\mathcal S)$. Because the Coordinator sets the law of $Z$, it also sets the expected dispersion available to the Scientists, which is the mechanism behind the co-adaptation loop of Section~\ref{sec:coadaptation}. Pooling forgoes exactly this term, since a shared adapter cannot move along $\Gamma_i-\bar\Gamma$, in exchange for the lower estimation variance of averaging $GN$ responses.

\subsection{From Co-Adaptation to Discovery Regret}
\label{sec:theory-regret}

We now show that adaptation converges, in the sense that the discovery regret in Eq.~(\ref{eq:discovery-objective}) of the model after $C$ Coordinator--Scientist cycles decays geometrically in $C$ to a floor. Let $\Omega$ be a convex set that almost surely contains every iterate of Algorithm~\ref{alg:matched-group} from the initial model $\Theta^{(0)}$, $\mathcal H=\{\Pi_\Theta:\Theta\in\Omega\}$, $F(\Theta)=J_{\tau,K}(\Theta)$ with $F^{\star}=\sup_{\Omega}F$, and $\varepsilon_{\mathrm{cov}}=s^{\star}-\sup_{\Pi\in\mathcal H}J_T(\Pi;\mathcal P)$ the coverage error of $\mathcal H$. Unlike a run of $\Pi_\Theta$, Algorithm~\ref{alg:matched-group} updates $\Theta$ between groups, performing $M=2C$ block updates $\Theta^{(m+1)}=\Theta^{(m)}+\eta\,\widehat\nabla_m$ that alternate between the Coordinator block $\phi$ and the stacked Scientist block $\theta_{1:N}$, where $\widehat\nabla_m$ is the negative gradient of Eq.~(\ref{eq:grpo_objective}) with respect to the block $\mathsf b_m$ updated at step $m$, computed on the group collected at $\Theta^{(m)}$ and padded with zeros outside $\mathsf b_m$, and $\mathcal F_m$ is the history before the group at step $m$ is collected.

\begin{assumption}[Curvature and estimation]
\label{ass:regret}
(A1) $\nabla F$ is $L$-Lipschitz on $\Omega$ and $\tfrac12\|\nabla F(\Theta)\|^2\ge\mu\,(F^{\star}-F(\Theta))$ for all $\Theta\in\Omega$, with $\mu\le L$. (A2) With $\bar\nabla_m=\mathbb E[\widehat\nabla_m\mid\mathcal F_m]$, each block estimate satisfies $\|\bar\nabla_m-\nabla_{\mathsf b_m}F(\Theta^{(m)})\|\le\delta$ and $\mathbb E[\|\widehat\nabla_m-\bar\nabla_m\|^2\mid\mathcal F_m]\le\sigma^2$.
\end{assumption}

\begin{theorem}[Convergence of the discovery regret]
\label{thm:regret}
Under Assumptions~\ref{ass:regularity} and~\ref{ass:regret}, for $0<\eta\le1/(4L)$ and $q=1-\eta\mu/4$,
\begin{equation}
\mathfrak R_T\big(\Pi_{\Theta^{(M)}}\big)=s^{\star}-\mathbb E\big[J_T(\Pi_{\Theta^{(M)}};\mathcal P)\big]
\le\varepsilon_{\mathrm{cov}}+\tau\log(K+1)+q^{C}\big(F^{\star}-F(\Theta^{(0)})\big)+\frac{5L\eta\sigma^2+8\delta^2}{\mu}.
\label{eq:regret-bound}
\end{equation}
\end{theorem}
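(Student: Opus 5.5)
The plan is to split the regret into a coverage part, a smoothing part, and an optimization part, and then to control the optimization part with a PL-type descent argument over the $M=2C$ alternating block steps.

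\textbf{Step 1: decomposition.} For any $\Pi\in\mathcal H$, Proposition~\ref{prop:objective-alignment}(i) gives $J_T(\Pi_\Theta;\mathcal P)\ge J_{\tau,K}(\Theta)$ and $J_{\tau,K}(\Theta)\ge J_T(\Pi_\Theta;\mathcal P)-\tau\log(K+1)$. Taking the supremum over $\Omega$ yields $F^\star\ge\sup_{\mathcal H}J_T-\tau\log(K+1)=s^\star-\varepsilon_{\mathrm{cov}}-\tau\log(K+1)$. Hence
\begin{equation*}
s^\star-\mathbb E[J_T(\Pi_{\Theta^{(M)}})]\le s^\star-\mathbb E[F(\Theta^{(M)})]\le\varepsilon_{\mathrm{cov}}+\tau\log(K+1)+\big(F^\star-\mathbb E[F(\Theta^{(M)})]\big).
\end{equation*}
It therefore remains to bound the expected optimality gap $e_m=F^\star-\mathbb E[F(\Theta^{(m)})]$.

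\textbf{Step 2: one-step ascent with bias and variance.} Write $g_m=\nabla_{\mathsf b_m}F(\Theta^{(m)})$, padded with zeros, and note that $\langle\nabla F,g_m\rangle=\|g_m\|^2$. $L$-smoothness gives
\begin{equation*}
F(\Theta^{(m+1)})\ge F(\Theta^{(m)})+\eta\langle g_m,\widehat\nabla_m\rangle-\tfrac{L\eta^2}{2}\|\widehat\nabla_m\|^2.
\end{equation*}
Take the conditional expectation given $\mathcal F_m$. Split the first-order term as $\langle g_m,\bar\nabla_m\rangle\ge\tfrac12\|g_m\|^2-\tfrac12\delta^2$ by Young's inequality. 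Bound the second-order term by $\mathbb E\|\widehat\nabla_m\|^2\le\sigma^2+2\|g_m\|^2+2\delta^2$. With $\eta\le 1/(4L)$ this gives
\begin{equation*}
\mathbb E[F(\Theta^{(m+1)})\mid\mathcal F_m]\ge F(\Theta^{(m)})+\tfrac{\eta}{4}\|g_m\|^2-\eta\delta^2-\tfrac{L\eta^2}{2}\sigma^2,
\end{equation*}
after absorbing the $L\eta^2\delta^2$ term into $\eta\delta^2$ up to constants.

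\textbf{Step 3 (main obstacle): from block gradients to PL over one cycle.} The PL inequality in (A1) concerns the full gradient, while each step uses only one block. So I will work at the level of a full cycle, consisting of a Coordinator step $m=2c$ followed by a Scientist step $m=2c+1$. The Scientist-block gradient at $\Theta^{(2c+1)}$ differs from that at $\Theta^{(2c)}$ by at most $L\|\Theta^{(2c+1)}-\Theta^{(2c)}\|=L\eta\|\widehat\nabla_{2c}\|$. This gives
\begin{equation*}
\|g_{2c}\|^2+\|g_{2c+1}\|^2\ge\tfrac12\|\nabla F(\Theta^{(2c)})\|^2-2L^2\eta^2\|\widehat\nabla_{2c}\|^2.
\end{equation*}
The error term is controlled by the same variance and bias bounds and is absorbed using $\eta\le1/(4L)$. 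Applying PL at $\Theta^{(2c)}$ then gives the cycle recursion
\begin{equation*}
e_{2c+2}\le(1-\eta\mu/4)\,e_{2c}+c_1\eta\delta^2+c_2L\eta^2\sigma^2 .
\end{equation*}
Getting the constants to come out as $q=1-\eta\mu/4$, $5L\eta\sigma^2$ and $8\delta^2$ takes careful bookkeeping of the Young splits. I would first try splitting with weight $1/2$ throughout, and then tighten the splits if the constants exceed these.

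\textbf{Step 4: unrolling.} Iterating the recursion over the $C$ cycles gives
\begin{equation*}
e_M\le q^C e_0+\frac{c_1\eta\delta^2+c_2L\eta^2\sigma^2}{1-q},
\end{equation*}
where $1-q=\eta\mu/4$ and $e_0=F^\star-F(\Theta^{(0)})$. The additive term becomes $(4c_1\delta^2+4c_2L\eta\sigma^2)/\mu$, which matches $(5L\eta\sigma^2+8\delta^2)/\mu$ provided $c_1\le2$ and $c_2\le5/4$. Substituting this bound on $e_M$ into Step 1 gives Eq.~(\ref{eq:regret-bound}).
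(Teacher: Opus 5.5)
Your proposal is correct and follows the paper's proof essentially step for step. It uses the same coverage/smoothing/optimization decomposition via Proposition~\ref{prop:objective-alignment}(i), the same one-block smoothness bound with a Young split on the bias, the same Lipschitz cross-block correction over a Coordinator--Scientist cycle with PL applied at the cycle start, and the same geometric unrolling. To land exactly on the stated constants, keep the per-step bias term at $\tfrac34\eta\delta^2$ instead of rounding it to $\eta\delta^2$; otherwise $c_1$ exceeds $2$ once the cross-block $\delta^2$ contribution is added. Also, do not collapse $\|g_{2c}\|^2+\tfrac12\|\nabla_\vartheta F(\Theta^{(2c)})\|^2$ to $\tfrac12\|\nabla F(\Theta^{(2c)})\|^2$ before absorbing the cross-block loss: the discarded $\tfrac12\|g_{2c}\|^2$ is what absorbs the $O(L^2\eta^3)\|g_{2c}\|^2$ term without degrading $q=1-\eta\mu/4$.
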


Theorem~\ref{thm:regret} shows that the surrogate optimality gap $F^{\star}-F(\Theta^{(0)})$ decays geometrically in the number of Coordinator--Scientist cycles, with two block updates per cycle regardless of $N$ because the Scientists update in parallel as one block, down to a floor set by the variance $\sigma^2$ of the group estimate, which averages $G$ forks, and by its bias $\delta$, which absorbs the one-round horizon, the implemented surrogate, and the mismatch between the snapshot of a group and the run objective (Appendix~\ref{app:block-rate}). Adaptation does not reduce the coverage and smoothing errors, and Eq.~(\ref{eq:regret-bound}) bounds the regret of a full-budget rollout of the adapted model, which Appendix~\ref{app:regret-any-state} extends, under a snapshot-mismatch assumption, to the interleaved \method run, in which adaptation consumes part of the budget before the final model is rolled out.

\section{Experiments}
\label{sec:experiments}

\subsection{Experimental Setup}
\label{sec:experiment-setup}

\paragraph{Tasks and metrics.}
We use 12 tasks from MLGym~\citep{nathani2025mlgym} and 10 from MLE-bench~\citep{chan2025mlebench}
with private evaluators and a shared nominal budget of scored
candidates per task (Appendix~\ref{app:experimental-details}). Table~\ref{tab:mlgym-main-results}
reports the mean best held-out score over three seeds, and we aggregate the 12 normalized MLGym
scores (Appendix~\ref{app:aggregate}) by Median, interquartile mean (IQM), Mean, and Optimality
Gap~\citep{agarwal2021rliable}.

\paragraph{Compared systems.}
We rerun AIRA~\citep{toledo2026ai}, MLEvolve~\citep{du2026mlevolve}, and CORAL~\citep{coral} in
the same harness with frozen Claude Haiku~4.5 as their Scientist and as the shared Executor
(Appendix~\ref{app:baselines}). \method adapts LoRA adapters on a frozen
Qwen3-4B-Instruct-2507~\citep{yang2025qwen3} backbone (Appendix~\ref{app:hparams}), so this
system-level comparison tests whether task-time adaptation lets a compact open-weight Scientist
compete with frozen proprietary Scientists.
We also rerun ARTS$^{*}$~\citep{arts}, the test-time-training variant of ARTS, which fine-tunes a
single tree-search Scientist on the same backbone.

\subsection{Main Results}
\label{sec:task-level-results}

\newcommand{\meldbestcell}[1]{\cellcolor{WarmOrange}\textbf{#1}}
\newcommand{\meldsecondcell}[1]{\cellcolor{LightWarmOrange}\textbf{#1}}

\begin{table}[t]
    \centering
    \caption{Best held-out scores within the nominal evaluator-call budget
    on 12 MLGym and 10 MLE-bench tasks, averaged over three independent
    seeds. All baseline values are our same-harness reruns with the shared
    Executor and evaluator. Baseline is the starting solution of each task,
    and Human Best is our independently measured task-specific reference for MLGym
    (Appendix~\ref{app:tasks}) and the Kaggle top-1 score for MLE-bench.
    Dark and light purple mark the best and second-best automated methods.}
    \label{tab:mlgym-main-results}
    \resizebox{\textwidth}{!}{%
    \begin{tabular}{@{}lccccccc@{}}
        \toprule
        & & \multicolumn{4}{c}{\textit{Prior Works}} & \multicolumn{1}{c}{\textit{Ours}} & \\
        \cmidrule(lr){3-6}\cmidrule(lr){7-7}
        \textbf{Task} & \textbf{Baseline} & \textbf{CORAL} & \textbf{AIRA} & \textbf{MLEvolve} & \textbf{ARTS$^{*}$} & \textbf{\method} & \textbf{Human Best} \\
        \specialrule{\lightrulewidth}{\aboverulesep}{0pt}
        \rowcolor[HTML]{8C6BB1}\multicolumn{8}{c}{\rule[-1.1ex]{0pt}{3.6ex}\color{white}\textbf{MLGym}} \\
        \specialrule{\lightrulewidth}{0pt}{\belowrulesep}
        Titanic (acc $\uparrow$) & 0.766 & 0.947 & 0.921 & 0.947 & \meldsecondcell{0.952} & \meldbestcell{0.972} & 0.830 \\
        CIFAR-10 (acc $\uparrow$) & 0.497 & \meldsecondcell{0.962} & \meldbestcell{0.963} & 0.955 & 0.959 & 0.960 & 0.994 \\
        Fashion-MNIST (acc $\uparrow$) & 0.848 & 0.943 & \meldsecondcell{0.947} & 0.937 & 0.945 & \meldbestcell{0.957} & 0.968 \\
        House Prices ($R^2$ $\uparrow$) & 0.880 & 0.924 & 0.929 & 0.929 & \meldsecondcell{0.931} & \meldbestcell{0.943} & 0.990 \\
        \midrule
        MNLI (acc $\uparrow$) & 52.51 & \meldsecondcell{84.22} & 84.19 & 84.00 & 84.13 & \meldbestcell{84.23} & 92.50 \\
        Language Modeling (loss $\downarrow$) & 4.238 & \meldsecondcell{3.665} & 3.682 & 3.766 & 3.672 & \meldbestcell{3.663} & 3.280 \\
        \midrule
        Battle of the Sexes (payoff $\uparrow$) & 1.023 & \meldsecondcell{1.446} & 1.445 & 1.445 & \meldsecondcell{1.446} & \meldbestcell{1.447} & 1.667 \\
        Prisoner's Dilemma (payoff $\uparrow$) & 2.372 & 2.637 & 2.636 & \meldsecondcell{2.644} & 2.642 & \meldbestcell{2.646} & 3.000 \\
        Colonel Blotto (score $\uparrow$) & $-0.248$ & \meldsecondcell{0.253} & \meldsecondcell{0.253} & 0.252 & 0.252 & \meldbestcell{0.254} & 0.500 \\
        \midrule
        MountainCar (reward $\uparrow$) & 53.26 & 81.91 & 71.16 & 67.62 & \meldsecondcell{84.53} & \meldbestcell{93.51} & 100.00 \\
        Breakout (reward $\uparrow$) & 41.93 & 71.64 & 73.94 & 50.20 & \meldsecondcell{78.46} & \meldbestcell{84.97} & 100.00 \\
        MetaMaze (reward $\uparrow$) & 20.12 & 74.88 & 72.39 & 58.59 & \meldsecondcell{75.41} & \meldbestcell{81.19} & 100.00 \\
        \specialrule{\lightrulewidth}{\aboverulesep}{0pt}
        \rowcolor[HTML]{8C6BB1}\multicolumn{8}{c}{\rule[-1.1ex]{0pt}{3.6ex}\color{white}\textbf{MLE-bench}} \\
        \specialrule{\lightrulewidth}{0pt}{\belowrulesep}
        Spaceship Titanic (acc $\uparrow$) & 0.7747 & \meldbestcell{0.8356} & 0.8287 & \meldbestcell{0.8356} & \meldsecondcell{0.8352} & 0.8310 & 0.828 \\
        Nomad 2018 (MCW-RMSLE $\downarrow$) & 0.1086 & 0.0543 & 0.0599 & 0.0587 & \meldsecondcell{0.0535} & \meldbestcell{0.0501} & 0.051 \\
        Jigsaw Toxic (CW-AUC $\uparrow$) & 0.9712 & 0.9789 & 0.9797 & \meldsecondcell{0.9803} & 0.9801 & \meldbestcell{0.9805} & 0.989 \\
        \midrule
        APTOS 2019 (QWK $\uparrow$) & 0.7287 & 0.8407 & 0.8624 & 0.7811 & \meldsecondcell{0.8625} & \meldbestcell{0.8661} & 0.936 \\
        Plant Pathology (MCW-AUC $\uparrow$) & 0.9481 & 0.9527 & 0.9851 & 0.9915 & \meldsecondcell{0.9921} & \meldbestcell{0.9957} & 0.984 \\
        Histopathologic Cancer (AUC $\uparrow$) & 0.9463 & 0.9906 & 0.9874 & \meldsecondcell{0.9914} & 0.9908 & \meldbestcell{0.9961} & 1.000 \\
        \midrule
        Vesuvius Ink ($F_{0.5}$ $\uparrow$) & 0.3293 & \meldsecondcell{0.3948} & 0.3817 & 0.3701 & 0.3942 & \meldbestcell{0.4084} & 0.831 \\
        CHAMPS (log-MAE $\downarrow$) & 1.2353 & 1.2327 & 1.2256 & 1.2263 & \meldsecondcell{1.2247} & \meldbestcell{1.1592} & $-3.240$ \\
        HMS Brain (KL $\downarrow$) & 0.8865 & 0.8270 & 0.7389 & \meldsecondcell{0.7286} & 0.7352 & \meldbestcell{0.5404} & 0.272 \\
        RSNA Brain Tumor (AUC $\uparrow$) & 0.5529 & 0.6647 & 0.6471 & 0.6482 & \meldsecondcell{0.6709} & \meldbestcell{0.7129} & 0.621 \\
        \bottomrule
    \end{tabular}%
    }
\end{table}

\paragraph{Broad gains across task families.}
\method achieves the best reported point estimate on 20 of 22 tasks, trailing only on CIFAR-10
(AIRA, by $0.003$) and Spaceship Titanic (CORAL and MLEvolve, by $0.0046$;
Table~\ref{tab:mlgym-main-results}). On MLGym, gains are generally modest on supervised-learning
and game-theory tasks but reach $8.98$, $6.51$, and $5.78$ points over ARTS$^{*}$, the strongest
baseline on MountainCar, Breakout, and MetaMaze, so the advantage spans task families but is
largest on sequential control.

\begin{figure}[t]
    \centering
    \begin{subfigure}[t]{0.245\textwidth}
        \centering
        \includegraphics[width=\linewidth]{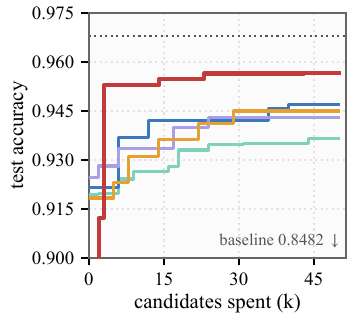}
        \caption{Fashion-MNIST}
        \label{fig:dyn-fmnist}
    \end{subfigure}\hfill
    \begin{subfigure}[t]{0.245\textwidth}
        \centering
        \includegraphics[width=\linewidth]{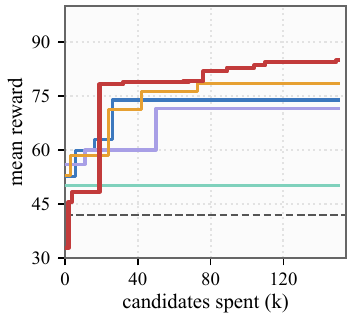}
        \caption{Breakout}
        \label{fig:dyn-breakout}
    \end{subfigure}\hfill
    \begin{subfigure}[t]{0.245\textwidth}
        \centering
        \includegraphics[width=\linewidth]{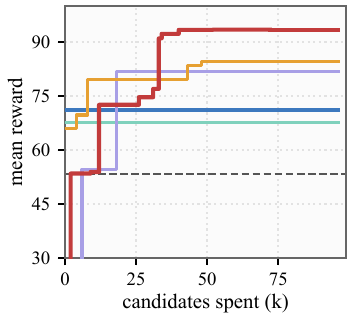}
        \caption{MountainCar}
        \label{fig:dyn-mountaincar}
    \end{subfigure}\hfill
    \begin{subfigure}[t]{0.245\textwidth}
        \centering
        \includegraphics[width=\linewidth]{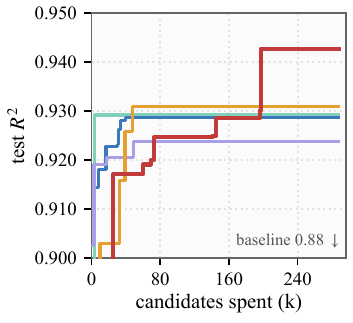}
        \caption{House Prices}
        \label{fig:dyn-houseprice}
    \end{subfigure}

    \caption{Best-so-far held-out score versus scored candidates for
    \method{} (red), ARTS$^{*}$ (orange), AIRA (blue), CORAL (purple), MLEvolve (green),
    the Baseline (dashed), and the Human Best (dotted, when visible).}
    \label{fig:training-dynamics}
\end{figure}

\paragraph{Early and delayed improvements.}
In Figure~\ref{fig:training-dynamics}, \method leads early on Fashion-MNIST, whereas on Breakout
and MountainCar it overtakes the strongest early baselines later in the search. On Breakout and
House Prices, it keeps improving long after every baseline has plateaued, overtaking ARTS$^{*}$ on
House Prices only after two-thirds of the budget. Its gains are therefore not confined to the
initial batch. These are search trajectories; the adaptation ablations below compare update
configurations.

\begin{figure}[t]
    \centering
    \includegraphics[width=\textwidth]{
        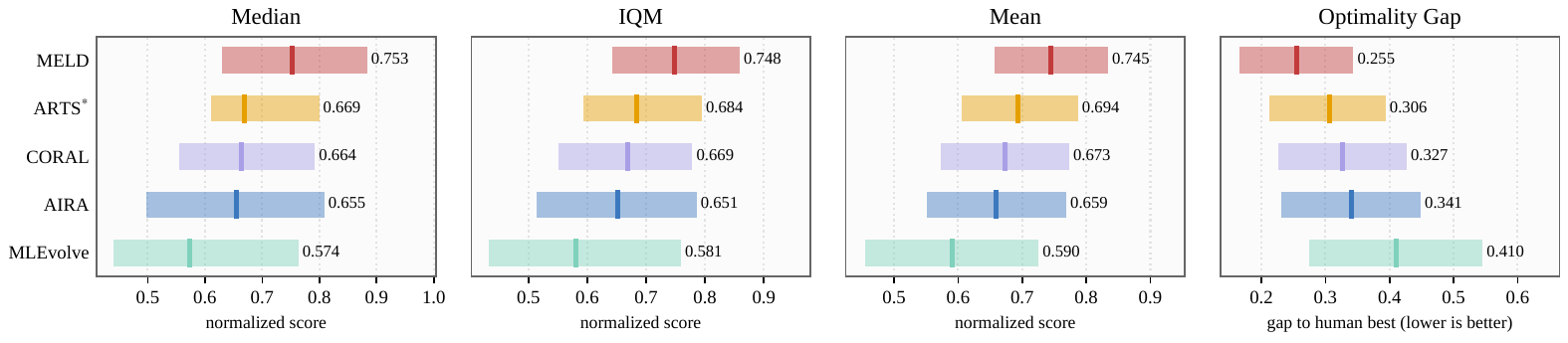}
    \caption{Median, IQM, Mean, and Optimality Gap of normalized scores (Baseline${}=0$, Human
    Best${}=1$) over the 12 MLGym tasks, with 95\% bootstrap intervals. \method is best on all four.}
    \label{fig:aggregate}
\end{figure}

\paragraph{Aggregate performance and scope.}
All four aggregate statistics rank \method first
(Figure~\ref{fig:aggregate}). Its Median, IQM, and Mean are $0.753$,
$0.748$, and $0.745$, versus $0.669$, $0.684$, and $0.694$ for ARTS$^{*}$,
the strongest aggregate baseline. Optimality Gap decreases from $0.306$
to $0.255$ (about $17\%$). Median and IQM gains indicate that the advantage
is not confined to the largest control-task improvements.

\subsection{Ablations}
\label{sec:adaptation_ablations}

We test whether adapting both policy levels improves candidate generation beyond either update
alone. On Iterated Prisoner's Dilemma (PD) and Colonel Blotto (CB), full \method{}, Coordinator-only,
Scientist-only, and Frozen each generate a fresh 24-candidate batch after one Coordinator--Scientist
cycle (Appendix~\ref{app:ablation_protocol}). Over three seeds, full \method{} raises the PD batch
median from $2.35$ to $2.57$ and threshold hits from $2$ to $10$ per 24 attempts
(Table~\ref{tab:adaptation_ablation}), versus only $5$ and $6$ hits for Coordinator-only and
Scientist-only. On CB, medians stay near zero, while full \method{} raises hits from $2$ to $6$,
versus $3$, $4$, and $2$ for Coordinator-only, Scientist-only, and Frozen. Joint adaptation thus
yields stronger post-cycle generation than either update alone in these experiments, improving
median and hits on PD but only hits on CB, a gain that neither the batch median nor a cumulative
best score reveals. With one research unit ($N{=}1$) at the same budget, \method{} loses nearly half
of its CHAMPS margin over ARTS$^{*}$, as log-MAE rises from $1.1592$ to $1.1899$, so the full gain
requires multiple units.
\section{Conclusion}
\label{sec:conclusion}

We introduced \method{}, a hierarchical TTT framework that jointly adapts team-level Coordinator and agent-level Scientist policies on one unseen task. Updated policies generate the experience and feedback for later adaptation, forming a recursive self-improvement loop with team-level coordination and agent-level research exploration. Our analysis establishes objective alignment, characterizes conditions for coordination gains beyond independent parallel discovery and the first-order benefit of private adaptation, and bounds the discovery regret of adapted models. Across 12 MLGym and 10 MLE-bench tasks, \method{} is best on 20 and achieves the highest aggregate normalized scores on MLGym with a 4B backbone. Current limitations are round-local credit and task-local adaptation within a fixed research hierarchy, and cross-task transfer could be natural future work.

\bibliographystyle{assets/plainnat}
\bibliography{ref}

\clearpage
\beginappendix

\section{Proofs for Section~\ref{sec:theory}}
\label{app:theory-proofs}

This appendix contains complete proofs of Proposition~\ref{prop:objective-alignment}, Theorem~\ref{thm:coordination-advantage}, Proposition~\ref{prop:private-adaptation}, and Theorem~\ref{thm:regret}, including a two-block ascent rate with biased estimates (Appendix~\ref{app:block-rate}), and the exact-gap and frontier-seeking properties of the potential (Appendix~\ref{app:frontier-proof}). We first fix notation and collect elementary facts about the potential that are used throughout.

\subsection{Notation and Elementary Properties of the Potential}
\label{app:notation}

\paragraph{Matrices and blocks.}
For matrices $A,B$ of equal shape, $\langle A,B\rangle_F=\operatorname{tr}(A^{\top}B)$ and $\|A\|_F^2=\langle A,A\rangle_F$. When a parameter block consists of several matrices, inner products and squared norms are summed over the matrices, and every identity below holds matrix by matrix. The trainable blocks are $\beta\in\{\mathrm{coord},\mathrm{sci}_1,\ldots,\mathrm{sci}_N\}$ with policies $\pi_\beta$, actions $a^{\beta}$, and contexts $x^{\beta}$ as in Section~\ref{sec:shared-objective-rl}: $a^{\mathrm{coord}}_t=Z_t$, $x^{\mathrm{coord}}_t=\mathcal S_t$, and $a^{\mathrm{sci}_i}_t=u_t^i$, $x^{\mathrm{sci}_i}_t=(\mathcal S_t,z_t^i)$.

\paragraph{States and the potential.}
A state $\mathcal S$ carries the derivation tree with node set $V(\mathcal S)$ and utilities $s_v\in[0,1]$; the root carries $s_0$, every other node is a validated candidate, and $B(\mathcal S)=\max_{v\in V(\mathcal S)}s_v$. Since every validated candidate consumed one evaluator call, $n_{\mathrm{val}}(\mathcal S):=|V(\mathcal S)|-1\le K$, and the number of padded slots
\begin{equation}
n_{\mathrm{pad}}(\mathcal S):=K+1-|V(\mathcal S)|=K-n_{\mathrm{val}}(\mathcal S)\ \ge\ 0
\label{eq:app-padded-slots}
\end{equation}
counts evaluator calls that failed or have not been used. Failed calls therefore do not appear in $V(\mathcal S)$ but are accounted for by Eq.~(\ref{eq:app-padded-slots}). With $s_{\min}=0$, define
\begin{equation}
\mathcal A(\mathcal S)=\sum_{v\in V(\mathcal S)}e^{s_v/\tau}+n_{\mathrm{pad}}(\mathcal S)\,e^{s_{\min}/\tau},
\qquad
\Phi_{\tau,K}(\mathcal S)=\tau\log\frac{\mathcal A(\mathcal S)}{K+1},
\label{eq:app-potential}
\end{equation}
which is Eq.~(\ref{eq:method-smooth-objective}); $\mathcal A(\mathcal S)$ is a sum of exactly $K+1$ positive terms, one per evaluator slot including the root. For a state $\mathcal S'$ and a collection $X$ of new validated candidates, $\mathcal S'\oplus X$ denotes the state extended by their call-indexed nodes; repeated utilities remain separate entries. Failed calls leave $n_{\mathrm{pad}}$ unchanged because it counts failed and unused slots together.

\begin{lemma}[Slot bookkeeping]
\label{lem:app-bookkeeping}
Let $\mathcal S'$ be a state and $X$ the validated candidates produced within its remaining evaluator-call budget. Then
\begin{equation}
\mathcal A(\mathcal S'\oplus X)=\mathcal A(\mathcal S')+\sum_{c\in X}\big(e^{s(c)/\tau}-e^{s_{\min}/\tau}\big)\ \ge\ \mathcal A(\mathcal S').
\label{eq:app-bookkeeping}
\end{equation}
In particular, failed calls leave $\mathcal A$ unchanged, and for a single candidate $c$ with utility $s$,
\begin{equation}
\Phi_{\tau,K}(\mathcal S'\oplus\{c\})-\Phi_{\tau,K}(\mathcal S')=\tau\log\Big(1+\frac{e^{s/\tau}-e^{s_{\min}/\tau}}{\mathcal A(\mathcal S')}\Big).
\label{eq:app-marginal-gain}
\end{equation}
\end{lemma}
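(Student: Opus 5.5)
The plan is to do slot bookkeeping directly from the definition in Eq.~(\ref{eq:app-potential}). Write $V'=V(\mathcal S')$ and let $X=\{c_1,\ldots,c_k\}$ be the validated candidates, each entered as its own call-indexed node, so repeated utilities stay separate. Then $V(\mathcal S'\oplus X)=V'\cup\{v_{c_1},\ldots,v_{c_k}\}$ is a disjoint union with $|V(\mathcal S'\oplus X)|=|V'|+k$.

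First I check that the padding count stays nonnegative, so that $\mathcal A$ is still a sum of $K+1$ slot terms. The candidates in $X$ come from the remaining evaluator-call budget of $\mathcal S'$. Each validated candidate uses one call, so $n_{\mathrm{val}}(\mathcal S')+k\le K$, and Eq.~(\ref{eq:app-padded-slots}) gives $n_{\mathrm{pad}}(\mathcal S'\oplus X)=n_{\mathrm{pad}}(\mathcal S')-k\ge0$. Failed calls create no node, so they change neither $|V|$ nor $n_{\mathrm{pad}}$. This is the content of the remark that $n_{\mathrm{pad}}$ counts failed and unused slots together.

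Next I substitute into $\mathcal A$. The node sum gains $\sum_{c\in X}e^{s(c)/\tau}$, and the padding term loses $k\,e^{s_{\min}/\tau}$. Grouping these by candidate gives Eq.~(\ref{eq:app-bookkeeping}). Each bracket is nonnegative because $s(c)\in[0,1]$, $s_{\min}=0$, and $x\mapsto e^{x/\tau}$ is increasing for $\tau>0$. Hence $\mathcal A(\mathcal S'\oplus X)\ge\mathcal A(\mathcal S')$. If every call fails, then $X=\emptyset$ and $\mathcal A$ is unchanged.

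For the single-candidate formula I take $X=\{c\}$. The normalizer $K+1$ cancels in the difference of logarithms:
\[
\Phi_{\tau,K}(\mathcal S'\oplus\{c\})-\Phi_{\tau,K}(\mathcal S')=\tau\log\frac{\mathcal A(\mathcal S')+e^{s/\tau}-e^{s_{\min}/\tau}}{\mathcal A(\mathcal S')}.
\]
Dividing through by $\mathcal A(\mathcal S')$ gives Eq.~(\ref{eq:app-marginal-gain}), and $\mathcal A(\mathcal S')>0$ because it is a sum of $K+1$ positive terms. There is no real analytic obstacle here. The one point that needs care is the budget accounting: the argument relies on $X$ being produced within the remaining budget, since that is what keeps the padded-slot count from going negative and keeps the fixed denominator $K+1$ legitimate.
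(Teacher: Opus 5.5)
Your proposal is correct and follows the paper's proof essentially step for step: each validated candidate adds $e^{s(c)/\tau}$ to the node sum and removes one sentinel term via $n_{\mathrm{pad}}$, failed calls change neither, nonnegativity comes from $s(c)\ge 0=s_{\min}$, and the marginal-gain formula follows by dividing by $\mathcal A(\mathcal S')$ and taking $\tau\log$. Your explicit check that $n_{\mathrm{pad}}(\mathcal S'\oplus X)\ge 0$ from the remaining-budget condition is a small addition that the paper leaves implicit in its slot count, Eq.~(\ref{eq:app-padded-slots}).
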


\begin{proof}
Each validated candidate $c\in X$ adds the node term $e^{s(c)/\tau}$ to the first sum in Eq.~(\ref{eq:app-potential}) and, having consumed one evaluator call, reduces $n_{\mathrm{pad}}$ by one, removing one term $e^{s_{\min}/\tau}$; a failed call reduces neither $|V|$ nor the number of remaining slots as counted in Eq.~(\ref{eq:app-padded-slots}), since $n_{\mathrm{pad}}$ counts failed and unused slots alike. This gives the equality in Eq.~(\ref{eq:app-bookkeeping}). The inequality follows from $s(c)\ge 0=s_{\min}$. Dividing Eq.~(\ref{eq:app-bookkeeping}) with $X=\{c\}$ by $\mathcal A(\mathcal S')$ and taking $\tau\log$ gives Eq.~(\ref{eq:app-marginal-gain}).
\end{proof}

\begin{lemma}[Monotone, telescoping increments]
\label{lem:app-telescoping}
Along any run, the round rewards $r_t^{\Phi}=\Phi_{\tau,K}(\mathcal S_{t+1})-\Phi_{\tau,K}(\mathcal S_t)$ satisfy $r_t^{\Phi}\ge 0$, and for any $0\le t_1\le t_2\le R$,
\begin{equation}
\sum_{t'=t_1}^{t_2-1}r_{t'}^{\Phi}=\Phi_{\tau,K}(\mathcal S_{t_2})-\Phi_{\tau,K}(\mathcal S_{t_1});
\qquad\text{in particular}\qquad
\sum_{t'=0}^{R-1}r_{t'}^{\Phi}=\Phi_{\tau,K}(\mathcal S_{\mathrm{end}})-\Phi_{\tau,K}(\mathcal S_0).
\label{eq:app-telescoping}
\end{equation}
Moreover, $R^{\Phi}_{t:\mathrm{end}}=r^{\Phi}_t+R^{\Phi}_{t+1:\mathrm{end}}$ for every round $t$.
\end{lemma}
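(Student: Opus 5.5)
The proof will rest entirely on Lemma~\ref{lem:app-bookkeeping}. Its only other ingredient is the definition of the reward-to-go. There are three claims to establish: nonnegativity of each round reward, the telescoping identity, and the one-step recursion for $R^{\Phi}_{t:\mathrm{end}}$.

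For nonnegativity, I will fix a round $t$ and write $\mathcal S_{t+1}=\mathcal S_t\oplus X_t$. Here $X_t$ is the set of validated candidates committed in round $t$, possibly empty, and failed calls add no node. This is exactly the bookkeeping rule of Section~\ref{sec:preliminaries}: only successful evaluations add nodes, and failed ones only consume budget. Lemma~\ref{lem:app-bookkeeping} then gives $\mathcal A(\mathcal S_{t+1})\ge\mathcal A(\mathcal S_t)>0$. Because $w\mapsto\tau\log(w/(K+1))$ is increasing, it follows that $r^{\Phi}_t\ge0$.

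One point needs care. Every candidate in $X_t$ must lie within the remaining evaluator-call budget, so that $n_{\mathrm{pad}}$ stays nonnegative and the lemma applies. I will justify this from $n_{\mathrm{val}}\le K$, which holds since each validated candidate consumed one call. This is the only step with any content beyond algebra, and I expect it to be the main (mild) obstacle, because a literal reading of the bookkeeping lemma requires it.

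The telescoping identity is a finite sum of differences $\Phi_{\tau,K}(\mathcal S_{t'+1})-\Phi_{\tau,K}(\mathcal S_{t'})$ for $t'=t_1,\ldots,t_2-1$, and it collapses to $\Phi_{\tau,K}(\mathcal S_{t_2})-\Phi_{\tau,K}(\mathcal S_{t_1})$. The case $t_1=t_2$ is an empty sum, which equals $0$. Taking $t_1=0$ and $t_2=R$, with $\mathcal S_R=\mathcal S_{\mathrm{end}}$ the state at budget exhaustion, gives the special case.

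For the recursion, I will use the definition $R^{\Phi}_{t:\mathrm{end}}=\sum_{t'\ge t}r^{\Phi}_{t'}$, a finite sum that stops at $R-1$. Splitting off the $t'=t$ term gives $R^{\Phi}_{t:\mathrm{end}}=r^{\Phi}_t+R^{\Phi}_{t+1:\mathrm{end}}$, with the convention $R^{\Phi}_{R:\mathrm{end}}=0$. Applying the telescoping identity to both sides shows this is consistent with the closed form $R^{\Phi}_{t:\mathrm{end}}=\Phi_{\tau,K}(\mathcal S_{\mathrm{end}})-\Phi_{\tau,K}(\mathcal S_t)$.
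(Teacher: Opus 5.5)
Your proposal is correct and matches the paper's proof: nonnegativity from Lemma~\ref{lem:app-bookkeeping} together with monotonicity of $\log$, and term-by-term telescoping of the sum. The one small difference is the recursion: you split off the $t'=t$ term of $\sum_{t'\ge t}r^{\Phi}_{t'}$, while the paper reads it off the closed form $\Phi_{\tau,K}(\mathcal S_{\mathrm{end}})-\Phi_{\tau,K}(\mathcal S_t)$ via the case $t_1=t$, $t_2=t+1$; these are equivalent, since the main text gives both expressions for $R^{\Phi}_{t:\mathrm{end}}$.
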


\begin{proof}
Nonnegativity follows from Lemma~\ref{lem:app-bookkeeping} applied to the validated candidates of round $t$, since $\log$ is increasing. The sums telescope term by term. The last identity is the case $t_1=t$, $t_2=t+1$ combined with the definition $R^{\Phi}_{t:\mathrm{end}}=\Phi_{\tau,K}(\mathcal S_{\mathrm{end}})-\Phi_{\tau,K}(\mathcal S_t)$.
\end{proof}

\subsection{Trajectory Factorization and Regularity}
\label{app:trajectory-factorization}

A run of $\Pi_\Theta$ with $\Theta=(\phi,\theta_{1:N})$ held fixed produces the trajectory
\begin{equation}
\zeta=(\mathcal S_0,Z_0,U_0,\mathcal S_1,Z_1,U_1,\ldots,Z_{R-1},U_{R-1},\mathcal S_R),
\qquad \mathcal S_R=\mathcal S_{\mathrm{end}},
\label{eq:app-trajectory}
\end{equation}
where $R\le K$ is the number of nonempty rounds until the budget is exhausted, $Z_t=\{z_t^i\}_{i\in\mathcal I_t}$, and $U_t=\{u_t^i\}_{i\in\mathcal I_t}$. Let $\mathcal Q(\mathcal S_{t+1}\mid\mathcal S_t,Z_t,U_t)$ be the kernel comprising execution by the frozen Executor, evaluation, the tree update of Section~\ref{sec:preliminaries}, and budget accounting, and let $\nu$ be the law of the initial state. By the factorization of Eq.~(\ref{eq:hierarchical-policy}),
\begin{equation}
p_\Theta(\zeta)=\nu(\mathcal S_0)\prod_{t=0}^{R-1}\Big[\pi^{\mathrm{coord}}_\phi(Z_t\mid\mathcal S_t)\prod_{i\in\mathcal I_t}\pi^{\mathrm{sci}}_{\theta_i}(u_t^i\mid\mathcal S_t,z_t^i)\;\mathcal Q(\mathcal S_{t+1}\mid\mathcal S_t,Z_t,U_t)\Big].
\label{eq:app-factorization}
\end{equation}

Throughout we work under Assumption~\ref{ass:regularity}, whose conditions (R1)--(R3) refer to the factorization in Eq.~(\ref{eq:app-factorization}): (R1) $\nu$ and $\mathcal Q$ do not depend on $\Theta$; (R2) each policy density is differentiable in its own block with parameter-independent support; (R3) differentiation with respect to any block may be interchanged with the expectation over $\zeta$.

\begin{remark}[The assumptions for LLM policies]
\label{rem:app-llm-regularity}
At a fixed context, bounded-length token actions over a finite vocabulary have finite support. Softmax probabilities on a parameter-independent admissible support are differentiable, giving (R2); any decoding mask is held fixed when taking derivatives. For the full trajectory law, however, contexts depend on random execution feedback, so finite action spaces alone do not imply (R3). A sufficient condition is an integrable envelope $D_\beta(\zeta)$ such that $\sup_{\Theta\in\mathcal U}\|\nabla_\beta p_\Theta(\zeta)\|\le D_\beta(\zeta)$ in a neighborhood $\mathcal U$ of the differentiated model. The bounded potential then permits dominated differentiation. We impose (R3) on the trajectory law and its fixed-snapshot conditional laws. Assumption (R1) records the parameter independence of the Executor, evaluator, and state transition given actions.
\end{remark}

We define the filtrations under which actions are drawn. Let $\mathcal E_t=\sigma(\mathcal S_0,Z_0,U_0,\ldots,\mathcal S_t)$ be the history up to and including $\mathcal S_t$, and $\mathcal E_t^{+}=\mathcal E_t\vee\sigma(Z_t)$ the history after the portfolio of round $t$ is drawn. For $\beta=\mathrm{coord}$ write $\mathcal F_t^{\beta}=\mathcal E_t$; for $\beta=\mathrm{sci}_i$ write $\mathcal F_t^{\beta}=\mathcal E_t^{+}$. In both cases $x_t^\beta$ is $\mathcal F_t^{\beta}$-measurable, and conditionally on $\mathcal F_t^{\beta}$ the action is drawn as $a_t^\beta\sim\pi_\beta(\cdot\mid x_t^\beta)$; for $\beta=\mathrm{sci}_i$ this uses that, given $\mathcal S_t$ and $Z_t$, the responses of different Scientists are conditionally independent by Eq.~(\ref{eq:hierarchical-policy}).

\begin{lemma}[Score identity]
\label{lem:app-score-identity}
Under (R2), for every block $\beta$ and every context $x$,
\begin{equation}
\mathbb E_{a\sim\pi_\beta(\cdot\mid x)}\big[\nabla_\beta\log\pi_\beta(a\mid x)\big]=0 .
\label{eq:app-score-identity}
\end{equation}
\end{lemma}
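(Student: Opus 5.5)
The plan is to use the standard log-derivative argument over the finite action support, with (R2) doing all of the work. Fix a block $\beta$ and a context $x$. By Remark~\ref{rem:app-llm-regularity}, the action set at a fixed context is a finite set $\mathcal X_x$ of bounded-length token sequences. By (R2), this admissible support does not depend on the block's parameters, and $\pi_\beta(a\mid x)>0$ for every $a\in\mathcal X_x$. Consequently $\log\pi_\beta(a\mid x)$ is well defined and differentiable in the block for each $a$ in the support, and we have the pointwise identity
\[
\nabla_\beta\log\pi_\beta(a\mid x)=\frac{\nabla_\beta\pi_\beta(a\mid x)}{\pi_\beta(a\mid x)} .
\]

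The computation then proceeds in three steps:
\begin{enumerate}
\item Write the expectation as a finite sum, $\sum_{a\in\mathcal X_x}\pi_\beta(a\mid x)\,\nabla_\beta\log\pi_\beta(a\mid x)$.
\item Cancel $\pi_\beta(a\mid x)$ against the denominator in the log-derivative, which gives $\sum_{a}\nabla_\beta\pi_\beta(a\mid x)$.
\item Exchange the gradient and the finite sum to get $\nabla_\beta\sum_{a\in\mathcal X_x}\pi_\beta(a\mid x)=\nabla_\beta 1=0$.
\end{enumerate}
The normalization in the last step holds for every parameter value in a neighborhood. This is exactly where parameter-independence of the support matters: it keeps the sum over a fixed index set, so no boundary or support-change term appears.

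The only delicate point is the exchange of $\nabla_\beta$ and the sum. For a finite support this is trivial. If one wants the general version (a countable or continuous action space), I would instead write the expectation as $\int\nabla_\beta\pi_\beta(a\mid x)\,d\lambda(a)$ and justify differentiating under the integral by a local integrable envelope. This is the fixed-context analogue of (R3), which the remark already imposes on fixed-snapshot conditional laws. I expect this justification to be the main obstacle, but only in the non-finite setting; for the paper's token policies the argument is complete once the finite sum is written down.
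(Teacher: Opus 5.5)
Your proof is correct and follows the paper's argument step for step: you write the expectation as a finite sum over the parameter-independent support, cancel $\pi_\beta(a\mid x)$ against the denominator of the log-derivative, and interchange $\nabla_\beta$ with the finite sum to differentiate the constant $1$. Your remark on extending to non-finite action spaces via an integrable envelope goes beyond what the paper needs, since the paper relies only on the finiteness of the token-action support.
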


\begin{proof}
Let $\mathcal X$ be the parameter-independent support of $\pi_\beta(\cdot\mid x)$, which is finite for bounded-length token actions (Remark~\ref{rem:app-llm-regularity}). Then
$\mathbb E[\nabla_\beta\log\pi_\beta(a\mid x)]=\sum_{a\in\mathcal X}\pi_\beta(a\mid x)\,\frac{\nabla_\beta\pi_\beta(a\mid x)}{\pi_\beta(a\mid x)}=\sum_{a\in\mathcal X}\nabla_\beta\pi_\beta(a\mid x)=\nabla_\beta\sum_{a\in\mathcal X}\pi_\beta(a\mid x)=\nabla_\beta 1=0$,
where the interchange of $\nabla_\beta$ and the finite sum uses only (R2), since $\mathcal X$ does not move with the parameters.
\end{proof}

\begin{lemma}[Baseline lemma]
\label{lem:app-baseline}
Let $\beta$ act at round $t$, and let $\chi$ be an integrable, $\mathcal F_t^{\beta}$-measurable random variable such that $\chi\,\nabla_\beta\log\pi_\beta(a_t^\beta\mid x_t^\beta)$ is integrable. Then
$\mathbb E\big[\chi\,\nabla_\beta\log\pi_\beta(a_t^\beta\mid x_t^\beta)\big]=0$.
\end{lemma}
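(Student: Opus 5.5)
The plan is to condition on $\mathcal F_t^{\beta}$ and pull $\chi$ out of the inner expectation, using the tower property. Lemma~\ref{lem:app-score-identity} then makes the inner conditional expectation of the score vanish. The one issue is that $\chi$ is random, so we must check that the conditional expectation is well defined and that factoring out $\chi$ is legitimate.

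\textbf{Step 1: tower property.} Since $\chi\,\nabla_\beta\log\pi_\beta(a_t^\beta\mid x_t^\beta)$ is integrable by hypothesis, its conditional expectation given $\mathcal F_t^{\beta}$ exists. By the tower property,
\[
\mathbb E\big[\chi\,\nabla_\beta\log\pi_\beta(a_t^\beta\mid x_t^\beta)\big]=\mathbb E\Big[\mathbb E\big[\chi\,\nabla_\beta\log\pi_\beta(a_t^\beta\mid x_t^\beta)\,\big|\,\mathcal F_t^{\beta}\big]\Big].
\]

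\textbf{Step 2: pull out $\chi$.} Because $\chi$ is $\mathcal F_t^{\beta}$-measurable, the inner term equals $\chi\,\mathbb E[\nabla_\beta\log\pi_\beta(a_t^\beta\mid x_t^\beta)\mid\mathcal F_t^{\beta}]$. The integrability of the product justifies this step.

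\textbf{Step 3: evaluate the conditional score.} Appendix~\ref{app:trajectory-factorization} records two facts:
\begin{itemize}
\item $x_t^\beta$ is $\mathcal F_t^{\beta}$-measurable;
\item conditionally on $\mathcal F_t^{\beta}$, $a_t^\beta\sim\pi_\beta(\cdot\mid x_t^\beta)$.
\end{itemize}
For the Scientist blocks the second fact relies on conditional independence given $(\mathcal S_t,Z_t)$ in Eq.~(\ref{eq:hierarchical-policy}). Hence the conditional expectation is the finite sum $\sum_{a\in\mathcal X}\pi_\beta(a\mid x_t^\beta)\nabla_\beta\log\pi_\beta(a\mid x_t^\beta)$ evaluated at the realized context. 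This is a bounded, finite-support computation, so it holds pointwise, not merely almost surely up to integrability issues. By Lemma~\ref{lem:app-score-identity} the sum equals $0$ for every context.

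\textbf{Conclusion.} The inner expectation is therefore $\chi\cdot 0=0$ almost surely, and so the outer expectation vanishes.

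The only delicate point is Step 2. If we only know that the product is integrable, pulling out $\chi$ is still valid: for $\mathcal G$-measurable $\chi$ with $\chi Y$ and $Y$ integrable, $\mathbb E[\chi Y\mid\mathcal G]=\chi\,\mathbb E[Y\mid\mathcal G]$ holds by the generalized conditional-expectation rule. Here $Y$, the score, is bounded, since it takes finitely many values at a given context and, where it is unbounded across contexts, the regular conditional law still gives the explicit finite sum. So the step goes through either way, and no further argument is needed.
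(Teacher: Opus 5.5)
Your proof is correct and follows the paper's argument: condition on $\mathcal F_t^{\beta}$, pull out the $\mathcal F_t^{\beta}$-measurable $\chi$, and use the conditional law $a_t^\beta\sim\pi_\beta(\cdot\mid x_t^\beta)$ so that Lemma~\ref{lem:app-score-identity} makes the inner conditional score vanish. One small correction to your last paragraph: taking finitely many values at each context does not make the score globally bounded, so integrability of the score alone is not guaranteed; the justification that works is your fallback, namely computing $\mathbb E[\chi\,\nabla_\beta\log\pi_\beta(a_t^\beta\mid x_t^\beta)\mid\mathcal F_t^{\beta}]=\chi\sum_a\pi_\beta(a\mid x_t^\beta)\nabla_\beta\log\pi_\beta(a\mid x_t^\beta)$ directly from the regular conditional law, which needs only the assumed integrability of the product.
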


\begin{proof}
By the tower property and $\mathcal F_t^{\beta}$-measurability of $\chi$ and of $x_t^\beta$,
$\mathbb E[\chi\,\nabla_\beta\log\pi_\beta(a_t^\beta\mid x_t^\beta)]=\mathbb E\big[\chi\,\mathbb E[\nabla_\beta\log\pi_\beta(a_t^\beta\mid x_t^\beta)\mid\mathcal F_t^{\beta}]\big]$, and the inner conditional expectation is $\mathbb E_{a\sim\pi_\beta(\cdot\mid x_t^\beta)}[\nabla_\beta\log\pi_\beta(a\mid x_t^\beta)]=0$ by Lemma~\ref{lem:app-score-identity}.
\end{proof}

\begin{lemma}[Block score decomposition]
\label{lem:app-score-decomposition}
Under (R1)--(R2), for every block $\beta$ and every trajectory $\zeta$ in the support of $p_\Theta$,
\begin{equation}
\nabla_\beta\log p_\Theta(\zeta)=\sum_{t:\,\beta\text{ acts at }t}\nabla_\beta\log\pi_\beta(a_t^\beta\mid x_t^\beta),
\label{eq:app-score-decomposition}
\end{equation}
where $\mathrm{coord}$ acts at every round and $\mathrm{sci}_i$ acts at the rounds with $i\in\mathcal I_t$.
\end{lemma}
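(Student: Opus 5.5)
The plan is to take the logarithm of the trajectory factorization in Eq.~(\ref{eq:app-factorization}) and differentiate term by term with respect to block $\beta$. For $\zeta$ in the support of $p_\Theta$, every factor in Eq.~(\ref{eq:app-factorization}) is strictly positive. The product therefore becomes a finite sum of logarithms:
\begin{equation*}
\log p_\Theta(\zeta)=\log\nu(\mathcal S_0)+\sum_{t=0}^{R-1}\Big[\log\pi^{\mathrm{coord}}_\phi(Z_t\mid\mathcal S_t)+\sum_{i\in\mathcal I_t}\log\pi^{\mathrm{sci}}_{\theta_i}(u_t^i\mid\mathcal S_t,z_t^i)+\log\mathcal Q(\mathcal S_{t+1}\mid\mathcal S_t,Z_t,U_t)\Big].
\end{equation*}
The sum is finite because $R\le K$.

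Next I classify the summands by their parameter dependence. By (R1), $\log\nu(\mathcal S_0)$ and every $\log\mathcal Q(\cdot)$ term are free of $\Theta$ once the trajectory's actions are fixed, so their $\nabla_\beta$ vanishes. The coordinator term depends only on $\phi$, and the Scientist term for unit $i$ depends only on $\theta_i$. This holds because the parameters are disjoint blocks: $\Theta=(\phi,\theta_{1:N})$, with separate adapters on a frozen backbone. It also uses the fact that the contexts $\mathcal S_t$ and $z_t^i$ are fixed coordinates of $\zeta$, not functions of $\Theta$. Consequently, $\nabla_\beta$ annihilates every policy term that does not belong to block $\beta$.

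For $\beta=\mathrm{coord}$, what survives is $\sum_{t=0}^{R-1}\nabla_\phi\log\pi^{\mathrm{coord}}_\phi(Z_t\mid\mathcal S_t)$, since the coordinator acts at every round. For $\beta=\mathrm{sci}_i$, what survives is the sum over rounds with $i\in\mathcal I_t$ of $\nabla_{\theta_i}\log\pi^{\mathrm{sci}}_{\theta_i}(u_t^i\mid\mathcal S_t,z_t^i)$. Each surviving term is differentiable by (R2). Because the support is parameter-independent, the trajectory remains in the support in a neighbourhood of $\Theta$, so the logarithm and its derivative are well defined there.

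The only subtle point I expect is justifying that the support of $p_\Theta$, and with it the positivity of each factor, does not shift as the block parameters move, so that differentiating $\log p_\Theta$ at $\zeta$ is legitimate. This follows from (R2) for the policy factors and from (R1) for $\nu$ and $\mathcal Q$. The rest is bookkeeping, and it yields Eq.~(\ref{eq:app-score-decomposition}) with $(a_t^\beta,x_t^\beta)$ as defined in Appendix~\ref{app:notation}.
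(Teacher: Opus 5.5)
Your proposal is correct and follows the paper's proof: take logarithms in the factorization of Eq.~(\ref{eq:app-factorization}), use (R1) to eliminate the $\log\nu$ and $\log\mathcal Q$ terms, and use the disjointness of the blocks $\phi,\theta_1,\ldots,\theta_N$ to keep only the factors of block $\beta$. Your extra remarks — that every factor is positive on the support, and that this support does not move with the parameters — are a slightly more careful version of the same argument.
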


\begin{proof}
Taking logarithms in Eq.~(\ref{eq:app-factorization}),
$\log p_\Theta(\zeta)=\log\nu(\mathcal S_0)+\sum_t\big[\log\pi^{\mathrm{coord}}_\phi(Z_t\mid\mathcal S_t)+\sum_{i\in\mathcal I_t}\log\pi^{\mathrm{sci}}_{\theta_i}(u_t^i\mid\mathcal S_t,z_t^i)+\log\mathcal Q(\mathcal S_{t+1}\mid\mathcal S_t,Z_t,U_t)\big]$.
By (R1) the terms $\log\nu$ and $\log\mathcal Q$ have zero gradient with respect to every block. The coordination factors depend only on $\phi$ and the Scientist-$i$ factors only on $\theta_i$, so differentiating with respect to $\phi$ keeps exactly the coordination terms, and differentiating with respect to $\theta_i$ keeps exactly the terms with $i\in\mathcal I_t$.
\end{proof}

\subsection{Proof of Proposition~\ref{prop:objective-alignment}}
\label{app:objective-proof}

\begin{proof}[Proof of Proposition~\ref{prop:objective-alignment}]
\emph{Part (i).} Write $B=B(\mathcal S)$. Factoring $e^{B/\tau}$ out of every term of Eq.~(\ref{eq:app-potential}) gives $\mathcal A(\mathcal S)=e^{B/\tau}K_{\mathrm{eff}}(\mathcal S)$ with $K_{\mathrm{eff}}(\mathcal S)=\sum_{v\in V(\mathcal S)}e^{(s_v-B)/\tau}+n_{\mathrm{pad}}(\mathcal S)\,e^{(s_{\min}-B)/\tau}$, the number of evaluator slots counted in frontier units: a node contributes $1$ if it attains $B$ and $e^{-d/\tau}$ if it trails the frontier by $d$, and a sentinel ($s_{\min}=0$) contributes $e^{-B/\tau}$. Hence $\Phi_{\tau,K}(\mathcal S)=\tau\log\frac{\mathcal A(\mathcal S)}{K+1}=B-\tau\log\frac{K+1}{K_{\mathrm{eff}}(\mathcal S)}$. For the range of $K_{\mathrm{eff}}$, the root belongs to $V(\mathcal S)$, so $B\ge s_0\ge0=s_{\min}$; every exponent $s_v-B$ and $s_{\min}-B$ is therefore at most $0$, each of the $K+1$ terms lies in $(0,1]$, and the node attaining $B$ contributes exactly $1$. Thus $1\le K_{\mathrm{eff}}(\mathcal S)\le K+1$, which gives the two-sided bound $B-\tau\log(K+1)\le\Phi_{\tau,K}(\mathcal S)\le B$; the identity above is its exact form and is used in Appendix~\ref{app:frontier-proof}. In particular $\Phi_{\tau,K}(\mathcal S)\in[-\tau\log(K+1),1]$ because $0\le B\le1$.

For the expectations, every node of $\mathcal S_{\mathrm{end}}$ other than the root is a validated candidate produced within the budget, and every such candidate is a node; hence $V(\mathcal S_{\mathrm{end}})\setminus\{\mathrm{root}\}=\mathcal C^{\mathrm{val}}_T(\Pi_\Theta)$ and
\begin{equation}
B(\mathcal S_{\mathrm{end}})=\max_{c\in\{c_0\}\cup\mathcal C^{\mathrm{val}}_T(\Pi_\Theta)}s(c),
\qquad
J_T(\Pi_\Theta;\mathcal P)=\mathbb E_{\zeta\sim\Pi_\Theta}[B(\mathcal S_{\mathrm{end}})].
\label{eq:app-best-is-max}
\end{equation}
Taking expectations in the two-sided bound at $\mathcal S=\mathcal S_{\mathrm{end}}$ gives part (i). We next prove part (ii), then part (iv), and finally part (iii), which uses the local-gradient identity established in part (iv).

\emph{Part (ii).} Fix a block $\beta$. By (R3) and the log-derivative identity $\nabla_\beta p_\Theta=p_\Theta\nabla_\beta\log p_\Theta$ on the support,
\begin{equation}
\nabla_\beta J_{\tau,K}(\Theta)=\int\Phi_{\tau,K}(\mathcal S_{\mathrm{end}}(\zeta))\,\nabla_\beta p_\Theta(\zeta)\,d\zeta
=\mathbb E_{\zeta\sim\Pi_\Theta}\big[\Phi_{\tau,K}(\mathcal S_{\mathrm{end}})\,\nabla_\beta\log p_\Theta(\zeta)\big].
\label{eq:app-log-derivative}
\end{equation}
Substituting Lemma~\ref{lem:app-score-decomposition},
\begin{equation}
\nabla_\beta J_{\tau,K}(\Theta)=\sum_{t}\mathbb E\Big[\mathbf 1\{\beta\text{ acts at }t\}\,\Phi_{\tau,K}(\mathcal S_{\mathrm{end}})\,\nabla_\beta\log\pi_\beta(a_t^\beta\mid x_t^\beta)\Big].
\label{eq:app-terminal-form}
\end{equation}
For each round $t$ at which $\beta$ acts, write $\Phi_{\tau,K}(\mathcal S_{\mathrm{end}})=\Phi_{\tau,K}(\mathcal S_t)+R^{\Phi}_{t:\mathrm{end}}$ (Lemma~\ref{lem:app-telescoping}). The random variable $\Phi_{\tau,K}(\mathcal S_t)$ is bounded (part (i)) and $\mathcal F_t^{\beta}$-measurable, and the event $\{\beta$ acts at $t\}$ is $\mathcal F_t^{\beta}$-measurable as well: for the Coordinator it is $\{t<R\}$, a function of $\mathcal S_t$, and for Scientist $i$ it is $\{i\in\mathcal I_t\}$, a function of $Z_t$. Applying Lemma~\ref{lem:app-baseline} with $\chi=\mathbf 1\{\beta\text{ acts at }t\}\,\Phi_{\tau,K}(\mathcal S_t)$ removes the first term:
\begin{equation}
\mathbb E\big[\mathbf 1\{\beta\text{ acts at }t\}\,\Phi_{\tau,K}(\mathcal S_t)\,\nabla_\beta\log\pi_\beta(a_t^\beta\mid x_t^\beta)\big]=0 .
\label{eq:app-baseline-removed}
\end{equation}
What remains of Eq.~(\ref{eq:app-terminal-form}) is Eq.~(\ref{eq:block-gradient}). The same argument shows that any $\mathcal F_t^{\beta}$-measurable baseline $b_t(x_t^\beta)$ may be subtracted from $R^{\Phi}_{t:\mathrm{end}}$ without changing the expectation. Finally, by Lemma~\ref{lem:app-telescoping}, $R^{\Phi}_{t:\mathrm{end}}=\sum_{t'\ge t}r_{t'}^{\Phi}$, which is the reward-to-go form of the same gradient.

\emph{Part (iv).} Fix a snapshot $\mathcal S$, a block $\beta$ with context $x$, and the behavior model $\Theta_{\mathrm{old}}$ at which the group is collected; for a Scientist group also condition on the full fixed portfolio $Z$. All gradient identities in this part are evaluated at $\beta=\beta_{\mathrm{old}}$, with the other blocks held fixed. The $G\ge2$ forks use independent random draws from this snapshot; within fork $g$, block $\beta$ draws $a_g\sim\pi_\beta(\cdot\mid x)$, all other decisions of the round are drawn from the behavior model (for $\beta=\mathrm{coord}$, every Scientist responds to the sampled portfolio from $\pi^{\mathrm{sci}}_{\theta_{i,\mathrm{old}}}$; for $\beta=\mathrm{sci}_i$, the portfolio is fixed and the other Scientists respond from $\pi^{\mathrm{sci}}_{\theta_{j,\mathrm{old}}}$), the round is executed and evaluated through $\mathcal Q$, producing the candidate set $X^{(g)}$, and
\begin{equation}
r^{\Phi}_g=\Phi_{\tau,K}\big(\mathcal S\oplus X^{(g)}\big)-\Phi_{\tau,K}(\mathcal S).
\label{eq:app-fork-increment}
\end{equation}
By independence of the fork draws, the pairs $(a_g,r^{\Phi}_g)_{g=1}^{G}$ are independent and identically distributed conditionally on the fixed group context. Define the conditional mean increment
\begin{equation}
\bar r(a)=\mathbb E\big[r^{\Phi}_g\,\big|\,a_g=a\big],
\label{eq:app-mean-increment}
\end{equation}
which does not depend on $\beta$'s parameters: by (R1), given $a$ the law of $X^{(g)}$ is determined by the behavior model of the other blocks and by $\mathcal Q$. By the definition in Eq.~(\ref{eq:local-objective}), $J^{\mathrm{loc}}_\beta(\mathcal S)=\sum_a\pi_\beta(a\mid x)\,\bar r(a)$, and therefore
\begin{equation}
\begin{aligned}
\nabla_\beta J^{\mathrm{loc}}_\beta(\mathcal S)
&=\sum_a\nabla_\beta\pi_\beta(a\mid x)\,\bar r(a)
=\sum_a\pi_\beta(a\mid x)\,\nabla_\beta\log\pi_\beta(a\mid x)\,\bar r(a)\\
&=\mathbb E\big[\bar r(a_g)\,\nabla_\beta\log\pi_\beta(a_g\mid x)\big].
\end{aligned}
\label{eq:app-local-gradient-mean}
\end{equation}
By the tower property, $\mathbb E[r^{\Phi}_g\nabla_\beta\log\pi_\beta(a_g\mid x)]=\mathbb E\big[\mathbb E[r^{\Phi}_g\mid a_g]\,\nabla_\beta\log\pi_\beta(a_g\mid x)\big]=\mathbb E[\bar r(a_g)\nabla_\beta\log\pi_\beta(a_g\mid x)]$, so
\begin{equation}
\nabla_\beta J^{\mathrm{loc}}_\beta(\mathcal S)=\mathbb E\big[r^{\Phi}_g\,\nabla_\beta\log\pi_\beta(a_g\mid x)\big].
\label{eq:app-local-gradient}
\end{equation}
Now consider the leave-one-out baseline $b_g=\frac{1}{G-1}\sum_{g'\ne g}r^{\Phi}_{g'}$. It is a function of the forks other than $g$ and is therefore independent of $a_g$; moreover it is bounded, since each $r^{\Phi}_{g'}\in[0,1+\tau\log(K+1)]$ by Lemma~\ref{lem:app-telescoping} and part (i). Hence
\begin{equation}
\mathbb E\big[b_g\,\nabla_\beta\log\pi_\beta(a_g\mid x)\big]=\mathbb E[b_g]\;\mathbb E\big[\nabla_\beta\log\pi_\beta(a_g\mid x)\big]=\mathbb E[b_g]\cdot 0=0
\label{eq:app-loo-zero}
\end{equation}
by Lemma~\ref{lem:app-score-identity}. Since $\widehat A_g=r^{\Phi}_g-b_g$, subtracting Eq.~(\ref{eq:app-loo-zero}) from Eq.~(\ref{eq:app-local-gradient}) gives $\mathbb E[\widehat A_g\nabla_\beta\log\pi_\beta(a_g\mid x)]=\nabla_\beta J^{\mathrm{loc}}_\beta(\mathcal S)$; averaging over $g$, the group estimator $\frac1G\sum_g\widehat A_g\nabla_\beta\log\pi_\beta(a_g\mid x)$ is unbiased for the same quantity.

\emph{Part (iii).} Take $\mathcal S=\mathcal S_t$ in a run of $\Pi_\Theta$. By Lemma~\ref{lem:app-telescoping}, the round-$t$ summand of Eq.~(\ref{eq:block-gradient}) splits as
\begin{equation}
\mathbb E\big[R^{\Phi}_{t:\mathrm{end}}\nabla_\beta\log\pi_\beta(a_t^\beta\mid x_t^\beta)\big]
=\underbrace{\mathbb E\big[r^{\Phi}_t\,\nabla_\beta\log\pi_\beta(a_t^\beta\mid x_t^\beta)\big]}_{\text{current-round term}}
+\underbrace{\mathbb E\big[R^{\Phi}_{t+1:\mathrm{end}}\nabla_\beta\log\pi_\beta(a_t^\beta\mid x_t^\beta)\big]}_{\text{downstream term}}.
\label{eq:app-round-split}
\end{equation}
Conditioning the current-round term on $\mathcal F_t^{\beta}$ places us exactly in the setting of Eq.~(\ref{eq:app-local-gradient}) with snapshot $\mathcal S_t$, context $x_t^\beta$, and the other blocks at the parameters $\Theta$ under which the run is generated (which is the behavior model within a group); hence the current-round term equals $\mathbb E[\mathbf 1\{\beta\text{ acts at }t\}\nabla_\beta J^{\mathrm{loc}}_\beta(\mathcal S_t)]$, and summing over $t$ gives the current-round sum stated in part (iii). The downstream terms, those carrying $R^{\Phi}_{t+1:\mathrm{end}}$, are not zero in general, because $R^{\Phi}_{t+1:\mathrm{end}}$ depends on $a_t^\beta$ through $\mathcal S_{t+1}$, and this is the bias accepted in Section~\ref{sec:shared-objective-rl}. In Algorithm~\ref{alg:matched-group} the $G$ forks of a group are committed together, so the state actually reached is $\mathcal S\oplus\bigcup_gX^{(g)}$ rather than a single fork's successor; the estimator targets $\nabla_\beta J^{\mathrm{loc}}_\beta(\mathcal S)$ at the shared pre-commit snapshot, as stated, and by concavity of $\tau\log(1+\cdot/\mathcal A(\mathcal S))$ the merged increment is at most the sum of the fork increments.
\end{proof}

\subsubsection{The Implemented Surrogate: Rejection, Length Normalization, and Standardization}
\label{app:token-normalization}

\paragraph{Rejection of malformed responses.}
The judge discards malformed hypotheses and regenerates them without consuming a scored candidate (Appendix~\ref{app:hparams}). Let $\alpha(a)\in[0,1]$ be the probability that the judge accepts $a$, which does not depend on the parameters (a deterministic judge is the case $\alpha\in\{0,1\}$), so the effective policy of a block is the accept-conditioned law $\tilde\pi_\beta(a\mid x)=\pi_\beta(a\mid x)\alpha(a)/P_\beta(\mathrm{acc}\mid x)$ with $P_\beta(\mathrm{acc}\mid x)=\sum_a\pi_\beta(a\mid x)\alpha(a)$, whose support $\{\alpha>0\}\cap\mathcal X$ is parameter independent; (R2) and all statements above hold for $\tilde\pi_\beta$. The implemented score, however, is $\nabla_\beta\log\pi_\beta(a\mid x)=\nabla_\beta\log\tilde\pi_\beta(a\mid x)+\nabla_\beta\log P_\beta(\mathrm{acc}\mid x)$, whose second term is constant in $a$ and equals $\mathbb E_{\tilde\pi_\beta}[\nabla_\beta\log\pi_\beta(a\mid x)]$. In the leave-one-out estimator this term is multiplied by $\widehat A_g=r^{\Phi}_g-b_g$ with $\mathbb E[b_g]=\mathbb E[r^{\Phi}_g]$, so its contribution has expectation $\big(\mathbb E[r^{\Phi}_g]-\mathbb E[b_g]\big)\nabla_\beta\log P_\beta(\mathrm{acc}\mid x)=0$; part (iv) therefore holds verbatim for the accept-conditioned objective. The un-baselined identity in Eq.~(\ref{eq:app-local-gradient}) does not: it acquires the bias $\mathbb E[r^{\Phi}_g]\nabla_\beta\log P_\beta(\mathrm{acc}\mid x)$, which is one more reason for the baseline.

\paragraph{Relation to token-length normalization.}
The estimator in part (iv) uses the score of the entire generated action. Let $L(a)\ge1$ be its response length and
$h_\beta(a)=\nabla_\beta\log\pi_\beta(a\mid x)
=\sum_{j=1}^{L(a)}\nabla_\beta\log\pi_\beta(y_j\mid x,y_{<j})$.
With advantages held fixed, at the behavior model the negative gradient of the unregularized, unclipped loss in Eq.~(\ref{eq:grpo_objective}) is
\begin{equation}
\widehat g_{\mathrm{tok}}
=\frac1G\sum_{g=1}^G\frac{\widehat A_g}{L(a_g)}h_\beta(a_g),
\qquad
\widehat g_{\mathrm{seq}}
=\frac1G\sum_{g=1}^G\widehat A_g h_\beta(a_g).
\label{eq:app-token-versus-sequence}
\end{equation}
Indeed, each importance ratio equals one there, and its derivative is the corresponding token score. Condition throughout on the snapshot, the behavior model, and, for a Scientist group, the full fixed portfolio. Write $\bar r(a)=\mathbb E[r_g^\Phi\mid a_g=a]$ and $\bar r_0=\mathbb E[r_g^\Phi]$. Since the leave-one-out baseline is independent of $a_g$ and has mean $\bar r_0$,
\begin{equation}
\mathbb E[\widehat g_{\mathrm{tok}}]
=\mathbb E_{a\sim\pi_{\beta,\mathrm{old}}}\left[
\frac{\bar r(a)-\bar r_0}{L(a)}\,h_\beta(a)\right]_{\beta=\beta_{\mathrm{old}}}.
\label{eq:app-token-expectation}
\end{equation}
In particular, the baseline contribution is
$\bar r_0\mathbb E[h_\beta(a)/L(a)]$, which need not vanish for action-dependent lengths. If $L(a)=L_0$ throughout the support, then
$\mathbb E[\widehat g_{\mathrm{tok}}]=L_0^{-1}\nabla_\beta J^{\mathrm{loc}}_\beta(\mathcal S)$; otherwise part (iv) establishes unbiasedness for $\widehat g_{\mathrm{seq}}$, not for $\widehat g_{\mathrm{tok}}$. The implementation further divides $\widehat A_g$ by the within-group standard deviation of $\{r^{\Phi}_{g'}\}_{g'=1}^{G}$ (Appendix~\ref{app:hparams}). This factor is a function of all $G$ forks, including fork $g$, so the resulting estimator is a data-dependent rescaling of $\widehat g_{\mathrm{tok}}$ whose expectation is likewise not covered by part (iv); as $G$ grows the factor concentrates around the population standard deviation and the rescaling becomes a constant. Length normalization, advantage standardization, clipping, and KL regularization thus define the implemented surrogate; none of them alters the potential alignment of part (i) or the block structure of part (ii).

\subsection{Exact Surrogate Gap and Frontier-Seeking Increments}
\label{app:frontier-proof}

The proof of Proposition~\ref{prop:objective-alignment}(i) shows that $\Phi_{\tau,K}(\mathcal S)=B(\mathcal S)-\tau\log\frac{K+1}{K_{\mathrm{eff}}(\mathcal S)}$ with $K_{\mathrm{eff}}(\mathcal S)\in[1,K+1]$ the number of slots counted in frontier units: the surrogate undercounts the best utility by exactly $\tau$ times the log-ratio of nominal to frontier-equivalent slots, most when a single node stands far above every other slot and not at all when all slots tie. The same bookkeeping characterizes the per-round increment earned by one candidate.

\begin{proposition}[Frontier-seeking increments]
\label{prop:frontier}
Let $\mathcal S'$ be a state with $B=B(\mathcal S')$ and $K_{\mathrm{eff}}=K_{\mathrm{eff}}(\mathcal S')$, and let $c$ be a validated candidate of utility $s$ produced by one of its remaining evaluator calls. Then
\begin{equation}
\Phi_{\tau,K}(\mathcal S'\oplus\{c\})-\Phi_{\tau,K}(\mathcal S')
=\tau\log\Big(1+\frac{e^{(s-B)/\tau}-e^{-B/\tau}}{K_{\mathrm{eff}}}\Big),
\label{eq:frontier-gain}
\end{equation}
which is strictly increasing in $s$. If $s\le B$, the increment is at most $\tau\log(1+1/K_{\mathrm{eff}})\le\tau\log2$; if $s=B+\omega$ with $\omega>0$, it is at least $\tau\log\big(1+(e^{\omega/\tau}-1)/K_{\mathrm{eff}}\big)$, which exceeds the former bound as soon as $\omega>\tau\log2$ and is at least $\omega-\tau\log(2K_{\mathrm{eff}})$ whenever $\omega\ge\tau\log2$.
\end{proposition}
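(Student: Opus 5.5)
The plan is to start from the single-candidate marginal gain of Lemma~\ref{lem:app-bookkeeping}, Eq.~(\ref{eq:app-marginal-gain}), and rewrite it in frontier units using the factorization from the proof of Proposition~\ref{prop:objective-alignment}(i), namely $\mathcal A(\mathcal S')=e^{B/\tau}K_{\mathrm{eff}}$. With $s_{\min}=0$, the ratio inside the logarithm becomes
\[
\frac{e^{s/\tau}-1}{e^{B/\tau}K_{\mathrm{eff}}}=\frac{e^{(s-B)/\tau}-e^{-B/\tau}}{K_{\mathrm{eff}}}.
\]
This gives Eq.~(\ref{eq:frontier-gain}) directly. Strict monotonicity in $s$ follows because $s\mapsto e^{(s-B)/\tau}$ is strictly increasing, $K_{\mathrm{eff}}>0$, and $\tau\log(1+\cdot)$ is strictly increasing on the admissible range; the argument stays nonnegative since $s\ge0$.

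For $s\le B$, drop the negative term $-e^{-B/\tau}$ and use $e^{(s-B)/\tau}\le1$. This gives an increment of at most $\tau\log(1+1/K_{\mathrm{eff}})$, which is at most $\tau\log 2$ because $K_{\mathrm{eff}}\ge1$. For $s=B+\omega$ with $\omega>0$, use $e^{-B/\tau}\le 1$, valid since $B\ge0$. Then the numerator is at least $e^{\omega/\tau}-1$, which yields the stated lower bound.

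The comparison with the former bound is the step I expect to need the most care. Both bounds share the denominator $K_{\mathrm{eff}}$ of the same state $\mathcal S'$, so it suffices to show $e^{\omega/\tau}-1>1$, which is exactly $\omega>\tau\log 2$. One subtlety is that the "former bound" is stated as $\tau\log(1+1/K_{\mathrm{eff}})$, not merely $\tau\log2$. I will compare against that quantity, and note that the comparison with $\tau\log 2$ also holds after using $K_{\mathrm{eff}}\ge1$ on the appropriate side.

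For the last claim, assume $\omega\ge\tau\log2$. Then $e^{\omega/\tau}\ge2$, so $e^{\omega/\tau}-1\ge e^{\omega/\tau}/2$. Hence the increment is at least
\[
\tau\log\bigl(e^{\omega/\tau}/(2K_{\mathrm{eff}})\bigr)=\omega-\tau\log(2K_{\mathrm{eff}}),
\]
where I also use $\log(1+x)\ge\log x$ for $x>0$. The only real bookkeeping to double-check is the sign of $e^{-B/\tau}$ and the use of $B\ge s_0\ge0$, which was already established in the proof of Proposition~\ref{prop:objective-alignment}(i).
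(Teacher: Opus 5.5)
Your proof is correct and matches the paper's argument step for step: factor $\mathcal A(\mathcal S')=e^{B/\tau}K_{\mathrm{eff}}$ in the marginal-gain formula, drop (resp.\ bound by $1$) the term $e^{-B/\tau}$ in the non-frontier (resp.\ frontier) case, reduce the comparison to $e^{\omega/\tau}-1>1$, and use $e^{\omega/\tau}-1\ge\tfrac12 e^{\omega/\tau}$ with $\log(1+x)\ge\log x$. Delete your aside that the frontier bound also exceeds $\tau\log2$: that would require $e^{\omega/\tau}-1>K_{\mathrm{eff}}$, which does not follow from $\omega>\tau\log2$ when $K_{\mathrm{eff}}>1$ (here $K_{\mathrm{eff}}\ge1$ works against you), so compare only against $\tau\log(1+1/K_{\mathrm{eff}})$, as the paper does.
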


At any state the credit for a candidate is thus strictly increasing in its realized utility: tying or trailing the frontier earns a bounded, $\tau$-scaled reward, whereas advancing it by $\omega$ earns a reward within $\tau\log2$ of $\omega$ when few nodes sit near the frontier ($K_{\mathrm{eff}}\approx1$) and less in a crowded frontier, where repeated confirmation of a known direction is discounted; the temperature interpolates between the maximum ($\tau\to0$) and the average utility over slots ($\tau\to\infty$). Increments of $B$ itself would vanish on every round without a new best; the potential yields a dense signal that still rewards the frontier most.

\begin{proof}[Proof of Proposition~\ref{prop:frontier}]
Let $\mathcal A=\mathcal A(\mathcal S')$, $B=B(\mathcal S')$, and $K_{\mathrm{eff}}=K_{\mathrm{eff}}(\mathcal S')$, so that $\mathcal A=e^{B/\tau}K_{\mathrm{eff}}$ by the proof of Proposition~\ref{prop:objective-alignment}(i). By Lemma~\ref{lem:app-bookkeeping} with $s_{\min}=0$, the gain is
\begin{equation}
\Delta(s):=\tau\log\Big(1+\frac{e^{s/\tau}-1}{\mathcal A}\Big)
=\tau\log\Big(1+\frac{e^{(s-B)/\tau}-e^{-B/\tau}}{K_{\mathrm{eff}}}\Big),
\label{eq:app-frontier-exact}
\end{equation}
which is strictly increasing in $s$ because $e^{s/\tau}$ is.

\emph{Non-frontier bound.} If $s\le B$, then $e^{(s-B)/\tau}\le1$ and $e^{-B/\tau}>0$, so the fraction in Eq.~(\ref{eq:app-frontier-exact}) is less than $1/K_{\mathrm{eff}}$ and $\Delta(s)\le\tau\log(1+1/K_{\mathrm{eff}})\le\tau\log2$, using $K_{\mathrm{eff}}\ge1$.

\emph{Frontier bound.} Let $s=B+\omega$ with $\omega>0$. Since $e^{-B/\tau}\le1$, the numerator satisfies $e^{\omega/\tau}-e^{-B/\tau}\ge e^{\omega/\tau}-1$, hence
\begin{equation}
\Delta(s)\ \ge\ \tau\log\Big(1+\frac{e^{\omega/\tau}-1}{K_{\mathrm{eff}}}\Big)\ \ge\ \tau\log\frac{e^{\omega/\tau}-1}{K_{\mathrm{eff}}}.
\label{eq:app-frontier-bound}
\end{equation}
The first expression exceeds the non-frontier bound $\tau\log(1+1/K_{\mathrm{eff}})$ if and only if $e^{\omega/\tau}-1>1$, that is, $\omega>\tau\log2$. If $\omega\ge\tau\log2$, then $e^{\omega/\tau}\ge2$, so $e^{\omega/\tau}-1\ge\frac12e^{\omega/\tau}$ and the right-hand side of Eq.~(\ref{eq:app-frontier-bound}) is at least $\tau\log\frac{e^{\omega/\tau}}{2K_{\mathrm{eff}}}=\omega-\tau\log(2K_{\mathrm{eff}})$. Replacing $K_{\mathrm{eff}}$ by its upper bound $K+1$ gives state-independent forms of both bounds.
\end{proof}

\subsection{Proof of Theorem~\ref{thm:coordination-advantage}}
\label{app:organization-proof}

\subsubsection{Matched-Marginal Comparison and Proof}
\label{app:coordination-proof}

Fix the snapshot $\mathcal S$, the $N$ Scientist models $\theta_{1:N}$, and the shared frozen Executor and evaluator, and write $b=B(\mathcal S)$ for the incumbent utility throughout this subsection. The Scientist models may differ across units but are the same in the two systems being compared. Both systems make exactly $N$ new evaluator calls, including failed attempts, from this snapshot; $K$ remains the common system-level budget. For a portfolio law $\pi$ with marginals $q_i$, the joint assignment--response policies are
\begin{equation}
\begin{aligned}
\pi(Z,U\mid\mathcal S)
&=\pi(Z\mid\mathcal S)\prod_{i=1}^{N}\pi^{\mathrm{sci}}_{\theta_i}(u_i\mid\mathcal S,z^i),\\
\pi^{\mathrm{par}}(Z,U\mid\mathcal S)
&=\prod_{i=1}^{N}\big[q_i(z^i\mid\mathcal S)\,\pi^{\mathrm{sci}}_{\theta_i}(u_i\mid\mathcal S,z^i)\big],
\end{aligned}
\label{eq:app-coordination-policies}
\end{equation}
so that the parallel comparison retains every Scientist model and matches each unit's assignment distribution. Let $\kappa_i(dy\mid\mathcal S,z^i)$ be the conditional law of unit $i$'s utility after its Scientist responds, the Executor implements the response, and the evaluator scores it; a failed call returns utility $0$. The conditional independence and per-unit dependence assumptions of Section~\ref{sec:theory-organization} give
\begin{equation}
\mathbb P_{\pi}(dy_{1:N}\mid\mathcal S)
=\int\prod_{i=1}^{N}\kappa_i(dy_i\mid\mathcal S,z^i)\,\pi(dZ\mid\mathcal S),
\label{eq:app-coordination-kernel}
\end{equation}
and likewise for $\pi^{\mathrm{par}}$: the same kernels occur in both laws, and only the coupling of the assignments changes. All probabilities and expectations below condition on this fixed snapshot and these fixed models.

\begin{proof}[Proof of Theorem~\ref{thm:coordination-advantage}]
\emph{Step 1: marginal matching and joint failure.}
For any measurable $A\subseteq[0,1]$, integrating Eq.~(\ref{eq:app-coordination-kernel}) over the other utilities yields
$\Pr_{\pi}(Y_i\in A\mid\mathcal S)=\int\kappa_i(A\mid\mathcal S,z^i)\,q_i(dz^i\mid\mathcal S)=\Pr_{\pi^{\mathrm{par}}}(Y_i\in A\mid\mathcal S)$,
so the unit marginals, including failure probabilities, agree; setting $A=[0,y]$ defines their common CDF $H_i(y)$. For the parallel law the product assignment measure gives
\begin{equation}
H_{\pi^{\mathrm{par}}}(y)
=\int\prod_{i=1}^{N}\kappa_i([0,y]\mid\mathcal S,z^i)\prod_{i=1}^{N}q_i(dz^i\mid\mathcal S)
=\prod_{i=1}^{N}H_i(y),
\label{eq:app-parallel-failure}
\end{equation}
whereas for $\pi$ the same integral against $\pi(dZ\mid\mathcal S)$ equals $H_\pi(y)$.

\emph{Step 2: from threshold coverage to expected best utility.}
Put $M_Y=\max_iY_i$. For each realized vector, $\max\{b,M_Y\}=b+\int_b^1\mathbf 1\{M_Y>y\}\,dy$, since the integral is the length of $[b,M_Y)$ when $M_Y>b$ and zero otherwise; the integrand is nonnegative and bounded, so expectation and integration interchange and
\begin{equation}
\mathbb E_{\pi'}[B_{\mathrm{new}}\mid\mathcal S]=b+\int_b^1\big(1-H_{\pi'}(y)\big)\,dy,
\qquad \pi'\in\{\pi,\pi^{\mathrm{par}}\}.
\label{eq:app-best-tail-expectation}
\end{equation}
With Eq.~(\ref{eq:app-parallel-failure}) this gives the closed form of $\mathbb E_{\pi^{\mathrm{par}}}[B_{\mathrm{new}}]$ and, by subtraction, the identity in part (i). Atoms in the utility distributions cause no difficulty because the pointwise identity uses the strict event $M_Y>y$.

\emph{Step 3: the ceiling.}
For any $c\in[b,1]$ and any realization, $\max\{b,Y_1,\ldots,Y_N\}\le c+\sum_i(Y_i-c)^{+}$, because $b\le c$ and $Y_i\le c+(Y_i-c)^{+}$ for each $i$. Taking $\mathbb E_\pi$ and using $\mathbb E(Y_i-c)^{+}=\int_c^1(1-H_i(y))\,dy$, which depends on the marginals only, gives $\mathbb E_\pi[B_{\mathrm{new}}]\le c+\sum_i\int_c^1(1-H_i(y))\,dy$; minimizing over $c$ yields $\mathbb E_\pi[B_{\mathrm{new}}]\le\mathrm{MN}(H)$ and hence the inequality in part (i). This is the Meilijson--N\'adas bound for the $N+1$ variables $b,Y_1,\ldots,Y_N$, whose minimization over all real $c$ is attained in $[b,1]$ (for $c<b$ the term $(b-c)^{+}$ raises the expression to at least its value at $c=b$, and for $c>1$ it equals $c$); it is attained by some coupling of the marginals~\citep{meilijson1979convex}, as \citet{lai1976maximally} showed for identically distributed marginals, hence sharp over all matched laws, but it need not be attained by a law of the form (\ref{eq:app-coordination-kernel}), whose dependence is created through the assignments alone.

\emph{Step 4: the potential.}
By Lemma~\ref{lem:app-bookkeeping}, the validated candidates $X$ of the round satisfy $\mathcal A(\mathcal S\oplus X)=\mathcal A(\mathcal S)+\sum_{c\in X}(e^{s(c)/\tau}-1)=\mathcal A(\mathcal S)+\Delta\mathcal A$, since a failed unit has $Y_i=0$ and contributes $e^{0}-1=0$; hence $r^{\Phi}=\tau\log\frac{\mathcal A(\mathcal S)+\Delta\mathcal A}{\mathcal A(\mathcal S)}=\psi(\Delta\mathcal A)$. The function $\psi$ is strictly concave and increasing on $[0,\infty)$, and $\mathbb E_\pi[\Delta\mathcal A]=\sum_i\int_0^1(e^{y/\tau}-1)\,dH_i(y)$ is determined by the marginals, so it is the same under every matched law. Jensen's inequality gives $\mathbb E_\pi[\psi(\Delta\mathcal A)]\le\psi(\mathbb E[\Delta\mathcal A])$, with equality if and only if $\Delta\mathcal A$ is $\pi$-a.s.\ constant by strict concavity. Under $\pi^{\mathrm{par}}$ the summands of $\Delta\mathcal A$ are independent, so $\Delta\mathcal A$ is degenerate only if every summand is, that is, only if every $Y_i$ is degenerate, and otherwise the inequality is strict.

\emph{Step 5: negative association.}
Suppose $(Y_1,\ldots,Y_N)$ is negatively associated under $\pi$. Negative association is preserved by coordinatewise increasing maps~\citep[Property~P6]{joagdev1983negative}, so $(e^{Y_i/\tau}-1)_i$ is negatively associated, and for negatively associated summands the sum is dominated in the convex order by the sum of independent copies with the same marginals~\citep[Theorem~1]{shao2000comparison}: $\mathbb E_\pi[\varphi(\Delta\mathcal A)]\le\mathbb E_{\pi^{\mathrm{par}}}[\varphi(\Delta\mathcal A)]$ for every convex $\varphi$, since under $\pi^{\mathrm{par}}$ the summands are exactly such independent copies. Taking $\varphi=-\psi$ gives part (iii). Permutation distributions are negatively associated~\citep[Theorem~2.11]{joagdev1983negative}; the utility vector is one whenever $N$ distinct directions are assigned to the $N$ units by a uniformly random permutation and each unit's utility is determined by its direction. The property extends to utilities of the form $Y_i=f_i(X_i,\epsilon_i)$ with $X_i$ the direction-determined value, $\epsilon_1,\ldots,\epsilon_N$ independent of each other and of the permutation, and each $f_i$ coordinatewise increasing, because the union of independent negatively associated families is negatively associated and increasing functions of disjoint subsets preserve the property~\citep[Properties~P7 and~P6]{joagdev1983negative}. Without such structure the condition can fail, for instance when the noise scale depends on the direction.
\end{proof}

\begin{remark}[Why part (ii) is stated separately]
\label{rem:app-diagonal-counterexample}
The sign of the best-of-round gain in part (i) does not control the sign of the potential gain. Take $N=2$, $\tau=0.05$, $K=48$, a root of utility $b=0.40$ and no other node, three directions whose utilities are deterministically $0.40$, $0.45$, and $0.50$, unit marginals $(0.32,0.48,0.20)$ over them, and the assignment coupling that places mass $0.04$ on $(0.40,0.40)$, $0.28$ on each of $(0.40,0.45)$ and $(0.45,0.40)$, and $0.20$ on each of $(0.45,0.50)$ and $(0.50,0.45)$. Then $\prod_iH_i(y)-H_\pi(y)\ge0$ for every $y$ and the best-of-round gain is $+5.1\times10^{-3}$, but the potential gain is $-8\times10^{-5}$: the coupling reduces joint failure on the diagonal, yet the law of $\Delta\mathcal A$ it induces is not dominated in the convex order by that of $\pi^{\mathrm{par}}$ (its variance is even slightly smaller), so the concavity of $\psi$ does not favor it. The potential is a function of the whole utility vector rather than of its maximum, and is governed by the convex order of $\Delta\mathcal A$, which parts (ii)--(iii) address directly.
\end{remark}

\paragraph{Two-unit interpretation.}
When $N=2$, let $I_i(y)=\mathbf 1\{Y_i\le y\}$. Marginal matching yields $\prod_iH_i(y)-H_\pi(y)=-\operatorname{Cov}_\pi\big(I_1(y),I_2(y)\mid\mathcal S\big)$: negatively correlated threshold failures translate directly into improved best-of-round utility while preserving both marginal utility distributions.

\subsubsection{A Complementary-Direction Construction}
\label{app:coordination-construction}

Let $N\ge2$ and suppose the snapshot offers $N$ feasible research directions. For every unit, the same direction deterministically yields a validated utility $s_{\mathrm{hi}}>b=B(\mathcal S)$ and each of the other $N-1$ directions yields utility $s_{\mathrm{lo}}\in[0,b]$. Under the parallel law, every unit independently chooses a uniformly random direction. Under the coordinated law, draw a uniformly random permutation of the $N$ directions and assign one to each unit. Every unit has the same uniform assignment marginal under both laws, and both make exactly $N$ evaluator calls.

For $y\in[b,s_{\mathrm{hi}})$ each marginal CDF is $(N-1)/N$. The parallel law misses the successful direction with probability $(1-1/N)^N$, whereas the permutation always assigns it to one unit; for $y\ge s_{\mathrm{hi}}$ both joint CDFs equal $1$. Hence
\begin{equation}
\begin{aligned}
\prod_iH_i(y)-H_{\mathrm{perm}}(y)&=(1-1/N)^N\,\mathbf 1\{b\le y<s_{\mathrm{hi}}\},\\
\mathbb E_{\mathrm{perm}}[B_{\mathrm{new}}]-\mathbb E_{\mathrm{par}}[B_{\mathrm{new}}]&=(s_{\mathrm{hi}}-b)(1-1/N)^N.
\end{aligned}
\label{eq:app-permutation-gap}
\end{equation}
In particular, for $N=2$ the probability of finding utility $s_{\mathrm{hi}}$ is $3/4$ under independent assignments and $1$ under coordination, although each unit finds it with probability $1/2$ in both systems. The permutation law attains the ceiling of part (i): each unit's utility is $s_{\mathrm{hi}}$ with probability $1/N$ and $s_{\mathrm{lo}}$ otherwise, so for $c\in[b,s_{\mathrm{hi}}]$ the expression $c+\sum_i\mathbb E(Y_i-c)^{+}$ equals $c+(s_{\mathrm{hi}}-c)=s_{\mathrm{hi}}$, whence $\mathrm{MN}(H)=s_{\mathrm{hi}}=\mathbb E_{\mathrm{perm}}[B_{\mathrm{new}}]$.

It also attains the ceiling of part (ii) for every $\tau>0$. Let $R_{\mathrm{hi}}$ count the units assigned to the successful direction; then $\Delta\mathcal A=N(e^{s_{\mathrm{lo}}/\tau}-1)+R_{\mathrm{hi}}\,(e^{s_{\mathrm{hi}}/\tau}-e^{s_{\mathrm{lo}}/\tau})$. Under the permutation law $R_{\mathrm{hi}}=1$, so $\Delta\mathcal A$ is constant and $\mathbb E_{\mathrm{perm}}[\psi(\Delta\mathcal A)]=\psi(\mathbb E[\Delta\mathcal A])$; under independent assignments $R_{\mathrm{hi}}\sim\operatorname{Binomial}(N,1/N)$ has mean $1$ and positive variance, so strict concavity gives $\mathbb E_{\mathrm{par}}[\psi(\Delta\mathcal A)]<\psi(\mathbb E[\Delta\mathcal A])$, and the potential gap $\psi(\mathbb E[\Delta\mathcal A])-\mathbb E_{\mathrm{par}}[\psi(\Delta\mathcal A)]$ is strictly positive.

A full-support version is obtained from the assignment law $\pi^\lambda=(1-\lambda)\pi^{\mathrm{par}}+\lambda\pi^{\mathrm{perm}}$ with $0<\lambda<1$. This mixture preserves every uniform marginal and assigns positive probability to every one of the $N^N$ portfolios, and both the best-of-round and the potential gains over $\pi^{\mathrm{par}}$ equal $\lambda$ times their values under $\pi^{\mathrm{perm}}$, since expectations are affine in the law. In particular, the expected round reward of the Coordinator increases linearly in the weight $\lambda$ it places on the complementary law.

\subsection{Private Adaptation and Proof of Proposition~\ref{prop:private-adaptation}}
\label{app:both-levels-proof}

\subsubsection{Setting}
\label{app:private-specialization}

Fix a snapshot $\mathcal S$ and a full portfolio $Z$. Let $\bar r^{\Phi}(W_1,\ldots,W_N)=\bar r^{\Phi}(W_{1:N};\mathcal S,Z)$ be the expected round reward $\mathbb E[r^{\Phi}\mid\mathcal S,Z]$ when Scientist $i$ responds from effective weights $W_i$, with the expectation over the Scientists' responses, execution, and evaluation; conditioning on the full $Z$ retains every unit's contribution to the shared reward, although Scientist $i$ observes only $(\mathcal S,z^i)$. Every Scientist starts from the common model with weights $W_0$. Each private adapter and the shared comparator have effective weights $W(U,V)=W_0+\gamma UV$ with $U\in\mathbb R^{d_{\mathrm{out}}\times r}$, $V\in\mathbb R^{r\times d_{\mathrm{in}}}$, scale $\gamma>0$, and the common zero-output initialization $(U,V)=(0,V_0)$, with the implemented settings in Appendix~\ref{app:hparams}. Write $P_V=V_0^{\top}V_0$ and
\begin{equation}
\begin{aligned}
G_i&=\nabla_{W_i}\bar r^{\Phi}(W_0,\ldots,W_0),\qquad
\Gamma_i=G_iV_0^{\top},\qquad \bar G=\tfrac1N\textstyle\sum_iG_i,\qquad \bar\Gamma=\tfrac1N\textstyle\sum_i\Gamma_i,\\
\mathcal D(Z;\mathcal S)&=\tfrac1{N^2}\textstyle\sum_{i<j}\|\Gamma_i-\Gamma_j\|_F^2,\qquad
\mathcal D_\phi(\mathcal S)=\mathbb E_{Z\sim\pi^{\mathrm{coord}}_\phi(\cdot\mid\mathcal S)}[\mathcal D(Z;\mathcal S)].
\end{aligned}
\label{eq:dispersion}
\end{equation}
For multiple adapted matrices, regard each $W_i$ as their concatenated collection; inner products and squared norms are summed over matrices, and $\|V_0\|_2$ denotes the largest factor spectral norm. By Proposition~\ref{prop:objective-alignment}(iv) applied to Scientist $i$'s block at the fixed portfolio, $G_i$ is a population sequence-score gradient; the distinction from the token-normalized surrogate is given in Appendix~\ref{app:token-normalization}.

The two schemes are compared on the joint objective $\bar r^{\Phi}$. Under \emph{private} adaptation, each Scientist takes one population factor-gradient ascent step of size $\eta$ on its own adapter, giving $W_i^{+}$. Under a \emph{shared} adapter, one adapter trained on the pooled responses with the same mean-reduced surrogate takes a step of the same size; its population gradient is the average $\bar G$ of the per-Scientist gradients, so its update is the average of the private updates, and every Scientist then responds from the same weights $W_{\mathrm{sh}}^{+}$. The improvements per Scientist are
\begin{equation}
\begin{aligned}
\mathcal I_{\mathrm{priv}}(Z)&=\tfrac1N\big[\bar r^{\Phi}(W_1^{+},\ldots,W_N^{+})-\bar r^{\Phi}(W_0,\ldots,W_0)\big],\\
\mathcal I_{\mathrm{sh}}(Z)&=\tfrac1N\big[\bar r^{\Phi}(W_{\mathrm{sh}}^{+},\ldots,W_{\mathrm{sh}}^{+})-\bar r^{\Phi}(W_0,\ldots,W_0)\big].
\end{aligned}
\label{eq:app-improvements}
\end{equation}
A sum-reduced pooled loss would instead move the shared adapter by $N$ times as much; the comparison below is at matched per-sample step size, which is the mean-reduced form of Eq.~(\ref{eq:grpo_objective}).

\begin{lemma}[Pairwise variance identity]
\label{lem:app-pairwise-variance}
For matrices $A_1,\ldots,A_N$ with mean $\bar A=\frac1N\sum_iA_i$,
$\frac1N\sum_i\|A_i\|_F^2-\|\bar A\|_F^2=\frac1{N^2}\sum_{i<j}\|A_i-A_j\|_F^2$.
\end{lemma}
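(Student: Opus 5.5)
The plan is to expand the squared norms through Frobenius inner products and then regroup the cross terms as pairwise differences. Write $S=\sum_iA_i$, so that $\bar A=S/N$ and
\[
\|\bar A\|_F^2=\frac1{N^2}\Big(\sum_i\|A_i\|_F^2+2\sum_{i<j}\langle A_i,A_j\rangle_F\Big),
\]
using bilinearity and symmetry of $\langle\cdot,\cdot\rangle_F$ (for blocks made of several matrices, this holds matrix by matrix and the pieces are summed, as fixed in Appendix~\ref{app:notation}).

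Next, expand the right-hand side pairwise. Since $\|A_i-A_j\|_F^2=\|A_i\|_F^2+\|A_j\|_F^2-2\langle A_i,A_j\rangle_F$, summing over $i<j$ gives
\[
\sum_{i<j}\|A_i-A_j\|_F^2=(N-1)\sum_i\|A_i\|_F^2-2\sum_{i<j}\langle A_i,A_j\rangle_F,
\]
because each index $i$ appears in exactly $N-1$ unordered pairs.

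Finally, compute the left-hand side directly:
\[
\frac1N\sum_i\|A_i\|_F^2-\|\bar A\|_F^2=\frac{N}{N^2}\sum_i\|A_i\|_F^2-\frac1{N^2}\sum_i\|A_i\|_F^2-\frac{2}{N^2}\sum_{i<j}\langle A_i,A_j\rangle_F,
\]
which equals $\frac1{N^2}\big[(N-1)\sum_i\|A_i\|_F^2-2\sum_{i<j}\langle A_i,A_j\rangle_F\big]$, the pairwise sum divided by $N^2$.

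There is no real obstacle. The only point needing care is the multiplicity count $N-1$ for the diagonal terms together with the factor $2$ on the cross terms. A sanity check at $N=2$ confirms the constants: both sides equal $\frac14\|A_1-A_2\|_F^2$.
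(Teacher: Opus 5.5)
Your proof is correct and takes essentially the same approach as the paper: expand the squared norms by bilinearity and match the diagonal and cross terms. The paper only organizes the bookkeeping over ordered pairs, writing $\sum_{i<j}\|A_i-A_j\|_F^2=\frac12\sum_{i,j}\big(\|A_i\|_F^2+\|A_j\|_F^2-2\langle A_i,A_j\rangle_F\big)=N\sum_i\|A_i\|_F^2-\|\sum_iA_i\|_F^2$, which avoids your explicit $N-1$ multiplicity count.
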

\begin{proof}
Expanding the squares,
$\sum_{i<j}\|A_i-A_j\|_F^2=\frac12\sum_{i,j}(\|A_i\|_F^2+\|A_j\|_F^2-2\langle A_i,A_j\rangle_F)=N\sum_i\|A_i\|_F^2-\|\sum_iA_i\|_F^2$.
Divide by $N^2$ and use $\sum_iA_i=N\bar A$.
\end{proof}

\begin{lemma}[Private-adapter specialization]
\label{lem:private-specialization}
In the setting above, one population-gradient step gives $\Delta W_i^{+}=\eta\gamma^2G_iP_V$ for private adapters and $\Delta W_{\mathrm{sh}}^{+}=\eta\gamma^2\bar GP_V=\frac1N\sum_i\Delta W_i^{+}$ for the shared adapter, so $\Delta W_i^{+}-\Delta W_j^{+}=\eta\gamma^2(\Gamma_i-\Gamma_j)V_0$. If $\bar r^{\Phi}(\cdot;\mathcal S,Z)$ has an $L_r$-Lipschitz gradient on the update segments, then
\begin{equation}
\mathcal I_{\mathrm{priv}}(Z)-\mathcal I_{\mathrm{sh}}(Z)=\eta\gamma^2\mathcal D(Z;\mathcal S)+R_\eta(Z),
\qquad
|R_\eta(Z)|\le L_r\eta^2\gamma^4\|V_0\|_2^2\,\tfrac1N\textstyle\sum_i\|\Gamma_i\|_F^2 .
\label{eq:private-gain}
\end{equation}
If moreover $\mathbb E_Z[\frac1N\sum_i\|\Gamma_i\|_F^2\mid\mathcal S]<\infty$, then $\mathbb E_Z[\mathcal I_{\mathrm{priv}}-\mathcal I_{\mathrm{sh}}\mid\mathcal S]=\eta\gamma^2\mathcal D_\phi(\mathcal S)+O(\eta^2)$, which is strictly positive for all sufficiently small $\eta>0$ whenever $\mathcal D_\phi(\mathcal S)>0$; if $\mathcal D_\phi(\mathcal S)=0$, the two schemes produce the same weights almost surely.
\end{lemma}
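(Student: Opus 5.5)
The plan is a direct chain-rule computation followed by a second-order Taylor remainder. First, compute the factor gradients at the zero-output initialization $(U_i,V_i)=(0,V_0)$. Since $W_i=W_0+\gamma U_iV_i$, the chain rule gives $\nabla_{U_i}\bar r^{\Phi}=\gamma G_iV_i^{\top}=\gamma G_iV_0^{\top}=\gamma\Gamma_i$ and $\nabla_{V_i}\bar r^{\Phi}=\gamma U_i^{\top}G_i=0$ at the initialization. One ascent step of size $\eta$ therefore moves only $U_i$, to $\eta\gamma\Gamma_i$, with $V_i=V_0$ unchanged. The resulting weight change is $\Delta W_i^{+}=\gamma(\eta\gamma\Gamma_i)V_0=\eta\gamma^2G_iP_V$.

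For the shared adapter, the mean-reduced pooled loss has population gradient equal to the average of the per-Scientist gradients at the common model. Its $U$-step is then $\eta\gamma\bar\Gamma$, which gives $\Delta W_{\mathrm{sh}}^{+}=\eta\gamma^2\bar GP_V=\frac1N\sum_i\Delta W_i^{+}$. The difference identity $\Delta W_i^{+}-\Delta W_j^{+}=\eta\gamma^2(\Gamma_i-\Gamma_j)V_0$ follows immediately.

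Second, expand $\bar r^{\Phi}$ to first order around $(W_0,\ldots,W_0)$ along each update segment. For the private step, the linear term is $\sum_i\langle G_i,\Delta W_i^{+}\rangle_F=\eta\gamma^2\sum_i\operatorname{tr}(G_i^{\top}G_iV_0^{\top}V_0)=\eta\gamma^2\sum_i\|\Gamma_i\|_F^2$, using cyclicity of the trace. For the shared step, every block moves by $\Delta W_{\mathrm{sh}}^{+}$, so the linear term is $\langle\sum_iG_i,\eta\gamma^2\bar GP_V\rangle_F=N\eta\gamma^2\|\bar\Gamma\|_F^2$. Dividing both by $N$ and applying Lemma~\ref{lem:app-pairwise-variance} with $A_i=\Gamma_i$ turns the difference of first-order terms into exactly $\eta\gamma^2\mathcal D(Z;\mathcal S)$.

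For the remainder, I will use the standard $L_r$-smooth Taylor bound $|f(x+h)-f(x)-\langle\nabla f(x),h\rangle|\le\frac{L_r}{2}\|h\|^2$ on each scheme. The squared displacements are bounded via $\|\Gamma_iV_0\|_F\le\|\Gamma_i\|_F\|V_0\|_2$:
\[
\|(\Delta W_i^{+})_i\|^2\le\eta^2\gamma^4\|V_0\|_2^2\textstyle\sum_i\|\Gamma_i\|_F^2,
\qquad
N\|\Delta W_{\mathrm{sh}}^{+}\|^2\le\eta^2\gamma^4\|V_0\|_2^2N\|\bar\Gamma\|_F^2\le\eta^2\gamma^4\|V_0\|_2^2\textstyle\sum_i\|\Gamma_i\|_F^2,
\]
where the last step uses Jensen. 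Summing the two remainders, each carrying the factor $L_r/2$, and dividing by $N$ gives the stated bound on $R_\eta(Z)$. The main point to check is that the Taylor expansion is legitimate on the weight segments, which is exactly what the Lipschitz hypothesis "on the update segments" provides. The remaining step is bookkeeping for multiple adapted matrices: norms and inner products are summed over matrices, and $\|V_0\|_2$ is taken as the largest factor spectral norm.

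Finally, take the expectation over $Z\sim\pi^{\mathrm{coord}}_\phi$. The integrability assumption makes $\mathbb E_Z|R_\eta|=O(\eta^2)$, which gives $\eta\gamma^2\mathcal D_\phi(\mathcal S)+O(\eta^2)$; this is positive for small $\eta$ whenever $\mathcal D_\phi>0$. If $\mathcal D_\phi=0$, then $\mathcal D(Z;\mathcal S)=0$ almost surely, so all $\Gamma_i$ coincide almost surely. The private updates $\eta\gamma^2\Gamma_iV_0$ then all equal the shared update, and the two schemes produce identical weights.

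Proposition~\ref{prop:private-adaptation} follows by reading off the first-order terms computed above. Its last clause also follows: identical assignments $z^i$ combined with identical initial weights and the symmetry of the shared reward make $G_i$ equal across $i$. That symmetry step is the one place needing care, since $\bar r^{\Phi}$ must be exchangeable in the units, which holds because the reward depends on the multiset of utilities.
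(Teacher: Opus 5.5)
Your proposal is correct and matches the paper's proof essentially step for step. The shared steps are: the chain-rule factor gradients $(\gamma\Gamma_i,0)$ at the zero-output initialization; the trace-cyclicity evaluation of the linear terms followed by the pairwise variance identity; the $L_r$-smooth quadratic remainder controlled via $\|\Gamma_iV_0\|_F\le\|\Gamma_i\|_F\|V_0\|_2$ and $\|\bar\Gamma\|_F^2\le\frac1N\sum_i\|\Gamma_i\|_F^2$; and the averaging over $Z$, with $\Gamma_i=\bar\Gamma$ almost surely in the degenerate case.
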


\begin{proof}
\emph{Step 1: effective updates.}
For any differentiable $\bar r^{\Phi}$, the differential $dW=\gamma\,dU\,V+\gamma\,U\,dV$ gives
$\nabla_U\bar r^{\Phi}(W(U,V))=\gamma\nabla_W\bar r^{\Phi}(W(U,V))V^{\top}$ and $\nabla_V\bar r^{\Phi}(W(U,V))=\gamma U^{\top}\nabla_W\bar r^{\Phi}(W(U,V))$, by cyclicity of the trace. At $(0,V_0)$, Scientist $i$ therefore has factor gradients $(\gamma\Gamma_i,0)$, so $U_i^{+}=\eta\gamma\Gamma_i$, $V_i^{+}=V_0$, and
\begin{equation}
\Delta W_i^{+}=\gamma U_i^{+}V_i^{+}=\eta\gamma^2G_iP_V,
\qquad
\Delta W_{\mathrm{sh}}^{+}=\eta\gamma^2\bar GP_V=\tfrac1N\textstyle\sum_i\Delta W_i^{+},
\label{eq:app-private-update}
\end{equation}
and $\Delta W_i^{+}-\Delta W_j^{+}=\eta\gamma^2(\Gamma_i-\Gamma_j)V_0$.

\emph{Step 2: first-order gains and remainders.}
For any matrix $A$ of the shape of $G_i$, cyclicity of the trace and the spectral-norm inequality give
\begin{equation}
\langle A,AP_V\rangle_F=\|AV_0^{\top}\|_F^2,
\qquad \|AP_V\|_F=\|(AV_0^{\top})V_0\|_F\le\|AV_0^{\top}\|_F\|V_0\|_2 .
\label{eq:app-projection}
\end{equation}
For a joint displacement $\Delta=(\Delta_1,\ldots,\Delta_N)$, the fundamental theorem of calculus yields
\begin{equation}
\bar r^{\Phi}(W_0+\Delta_1,\ldots,W_0+\Delta_N)-\bar r^{\Phi}(W_0,\ldots,W_0)=\textstyle\sum_i\langle G_i,\Delta_i\rangle_F+\varrho(\Delta),
\ \ |\varrho(\Delta)|\le\tfrac{L_r}{2}\textstyle\sum_i\|\Delta_i\|_F^2,
\label{eq:app-quadratic}
\end{equation}
by integrating the Lipschitz bound along the segment and applying Cauchy--Schwarz. For the private step, $\sum_i\langle G_i,\Delta W_i^{+}\rangle_F=\eta\gamma^2\sum_i\|\Gamma_i\|_F^2$ by Eq.~(\ref{eq:app-projection}); for the shared step, $\sum_i\langle G_i,\Delta W_{\mathrm{sh}}^{+}\rangle_F=\eta\gamma^2N\langle\bar G,\bar GP_V\rangle_F=\eta\gamma^2N\|\bar\Gamma\|_F^2$. Dividing by $N$,
\begin{equation}
\mathcal I_{\mathrm{priv}}(Z)=\eta\gamma^2\tfrac1N\textstyle\sum_i\|\Gamma_i\|_F^2+\tfrac1N\varrho(\Delta W^{+}_{1:N}),
\qquad
\mathcal I_{\mathrm{sh}}(Z)=\eta\gamma^2\|\bar\Gamma\|_F^2+\tfrac1N\varrho(\Delta W^{+}_{\mathrm{sh}},\ldots,\Delta W^{+}_{\mathrm{sh}}),
\label{eq:app-private-improvement}
\end{equation}
and Lemma~\ref{lem:app-pairwise-variance} turns the difference of the first-order terms into $\eta\gamma^2\mathcal D(Z;\mathcal S)$, which is Eq.~(\ref{eq:private-gain}) with $R_\eta(Z)$ the difference of the two remainders. By Eqs.~(\ref{eq:app-projection})--(\ref{eq:app-quadratic}) and $\|\bar\Gamma\|_F^2\le\frac1N\sum_i\|\Gamma_i\|_F^2$,
\begin{equation}
|R_\eta(Z)|\le\frac{L\eta^2\gamma^4}{2}\Big[\tfrac1N\textstyle\sum_i\|G_iP_V\|_F^2+\|\bar GP_V\|_F^2\Big]
\le L_r\eta^2\gamma^4\|V_0\|_2^2\,\tfrac1N\textstyle\sum_i\|\Gamma_i\|_F^2 .
\label{eq:app-remainder}
\end{equation}

\emph{Step 3: averaging over portfolios.}
Taking conditional expectations in Eq.~(\ref{eq:private-gain}) and using Eq.~(\ref{eq:app-remainder}) with the integrability assumption gives $\mathbb E_Z[\mathcal I_{\mathrm{priv}}-\mathcal I_{\mathrm{sh}}\mid\mathcal S]\ge\eta\gamma^2\mathcal D_\phi(\mathcal S)-C_\phi(\mathcal S)\eta^2$ with $C_\phi(\mathcal S)=L_r\gamma^4\|V_0\|_2^2\,\mathbb E_Z[\frac1N\sum_i\|\Gamma_i\|_F^2\mid\mathcal S]<\infty$, which is positive for $0<\eta<\gamma^2\mathcal D_\phi(\mathcal S)/C_\phi(\mathcal S)$. If $\mathcal D_\phi(\mathcal S)=0$, then $\Gamma_i=\bar\Gamma$ for every $i$ almost surely, and Eq.~(\ref{eq:app-private-update}) gives $\Delta W_i^{+}=\Delta W_{\mathrm{sh}}^{+}$.
\end{proof}

The zero-output initialization can also be imposed at a later common model by absorbing its effective weights into $W_0$ and initializing fresh factors $(0,V_0)$ there.

\subsubsection{Proof of Proposition~\ref{prop:private-adaptation}}

\begin{proof}[Proof of Proposition~\ref{prop:private-adaptation}]
The first-order terms of Eq.~(\ref{eq:app-private-improvement}), rewritten with Lemma~\ref{lem:app-pairwise-variance}, are $\eta\gamma^2(\|\bar\Gamma\|_F^2+\mathcal D(Z;\mathcal S))$ for private adapters and $\eta\gamma^2\|\bar\Gamma\|_F^2$ for the shared adapter; for differentiable $\bar r^{\Phi}$ they are $\eta$ times the derivatives at $\eta=0$ of $\eta\mapsto\bar r^{\Phi}(W_0+\Delta W^{+}_{1:N}(\eta))$ by the chain rule, and under the Lipschitz condition of Lemma~\ref{lem:private-specialization} the remainders are $O(\eta^2)$ by Eq.~(\ref{eq:app-remainder}).

For the second claim, suppose $z^1=\cdots=z^N=z$ and that the assignment is the only Scientist-specific input. Define $\Psi(u_1,\ldots,u_N)=\mathbb E[\Phi_{\tau,K}(\mathcal S\oplus X)-\Phi_{\tau,K}(\mathcal S)\mid u_1,\ldots,u_N]$, where execution and evaluation act identically and independently on each $(\mathcal S,z,u_i)$. The potential depends only on the multiset of validated utilities (Lemma~\ref{lem:app-bookkeeping}), so $\Psi$ is permutation-invariant, and $\bar r^{\Phi}(W_1,\ldots,W_N)=\mathbb E[\Psi(u_1,\ldots,u_N)]$ with $u_i\sim\pi^{\mathrm{sci}}_{W_i}(\cdot\mid\mathcal S,z)$ drawn independently. Hence $\bar r^{\Phi}$ is a symmetric function of $(W_1,\ldots,W_N)$, and its partial gradients at the symmetric point $(W_0,\ldots,W_0)$ coincide: $G_i=G_j$ for all $i,j$. Consequently $\Gamma_i=\Gamma_j$, $\mathcal D(Z;\mathcal S)=0$, and by Eq.~(\ref{eq:app-private-update}) the private updates are identical, preserving adapter symmetry.
\end{proof}

\subsection{Proof of Theorem~\ref{thm:regret}}
\label{app:regret-proof}

Throughout, $F(\Theta)=J_{\tau,K}(\Theta)$ on $\Omega$, $F^{\star}=\sup_{\Omega}F$, $J_T^{\mathcal H}=\sup_{\Pi\in\mathcal H}J_T(\Pi;\mathcal P)$, $\varepsilon_{\mathrm{cov}}=s^{\star}-J_T^{\mathcal H}$, and $\varepsilon_{\mathrm{opt}}(m)=F^{\star}-\mathbb E[F(\Theta^{(m)})]$. The proof decomposes the regret into $\varepsilon_{\mathrm{cov}}$, the smoothing cost, and $\varepsilon_{\mathrm{opt}}(M)$ (Lemma~\ref{lem:app-regret-decomposition}), and bounds $\varepsilon_{\mathrm{opt}}(M)$ by a rate for two-block ascent with biased estimates (Proposition~\ref{prop:block-ascent-rate}). Lemma~\ref{lem:app-coverage} bounds $\varepsilon_{\mathrm{cov}}$ under a coverage condition, and Appendix~\ref{app:regret-any-state} extends everything to an arbitrary starting state.

\subsubsection{Decomposition and Coverage}

\begin{lemma}[Regret decomposition]
\label{lem:app-regret-decomposition}
For every $\Theta\in\Omega$,
$s^{\star}-J_T(\Pi_\Theta;\mathcal P)\le\varepsilon_{\mathrm{cov}}+\tau\log(K+1)+F^{\star}-F(\Theta)$.
\end{lemma}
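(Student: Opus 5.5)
The plan is to write the regret as a telescoping sum of three differences and bound each one separately. For fixed $\Theta\in\Omega$,
\[
s^{\star}-J_T(\Pi_\Theta;\mathcal P)=\big(s^{\star}-J_T^{\mathcal H}\big)+\big(J_T^{\mathcal H}-F^{\star}\big)+\big(F^{\star}-F(\Theta)\big)+\big(F(\Theta)-J_T(\Pi_\Theta;\mathcal P)\big).
\]
The first bracket is $\varepsilon_{\mathrm{cov}}$ by definition. The third bracket is kept as it is.

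The fourth bracket is at most $0$. This is the upper half of Proposition~\ref{prop:objective-alignment}(i): $J_{\tau,K}(\Theta)\le J_T(\Pi_\Theta;\mathcal P)$ for every $\Theta$, and $F=J_{\tau,K}$.

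The second bracket needs the lower half of Proposition~\ref{prop:objective-alignment}(i). That half gives $J_T(\Pi_{\Theta'};\mathcal P)\le F(\Theta')+\tau\log(K+1)$ for every $\Theta'\in\Omega$. Since $F(\Theta')\le F^{\star}$, every $\Pi_{\Theta'}\in\mathcal H$ satisfies $J_T(\Pi_{\Theta'};\mathcal P)\le F^{\star}+\tau\log(K+1)$. Taking the supremum over $\mathcal H=\{\Pi_{\Theta'}:\Theta'\in\Omega\}$ gives $J_T^{\mathcal H}-F^{\star}\le\tau\log(K+1)$. Adding the four bounds yields the lemma.

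The only point needing care is this supremum step. $J_T^{\mathcal H}$ and $F^{\star}$ are both suprema over the same index set $\Omega$, and the pointwise two-sided bound carries over to them with no attainment assumption: fix any $\Theta'$, bound it, then take the supremum. Everything else is direct substitution. The bound is deterministic in $\Theta$. It therefore also holds for the random iterate $\Theta^{(M)}$, which will later be needed in expectation for Theorem~\ref{thm:regret}.
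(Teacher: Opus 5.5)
Your proof is correct and follows the paper's argument: both apply the upper bound of Proposition~\ref{prop:objective-alignment}(i) pointwise at $\Theta$, and take the supremum over $\Omega$ in the other side to get $J_T^{\mathcal H}\le F^{\star}+\tau\log(K+1)$. Your four-term telescoping split is the paper's two-term split $(s^{\star}-J_T^{\mathcal H})+(J_T^{\mathcal H}-J_T(\Pi_\Theta;\mathcal P))$ with the second term expanded.
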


\begin{proof}
By Proposition~\ref{prop:objective-alignment}(i), for every $\Theta'\in\Omega$,
\begin{equation}
F(\Theta')\le J_T(\Pi_{\Theta'};\mathcal P)\le F(\Theta')+\tau\log(K+1).
\label{eq:app-policy-sandwich}
\end{equation}
Taking the supremum over $\Theta'\in\Omega$ in the right inequality gives $J_T^{\mathcal H}\le F^{\star}+\tau\log(K+1)$, and the left inequality at $\Theta'=\Theta$ gives $J_T(\Pi_\Theta;\mathcal P)\ge F(\Theta)$. Hence
$s^{\star}-J_T(\Pi_\Theta;\mathcal P)=\big(s^{\star}-J_T^{\mathcal H}\big)+\big(J_T^{\mathcal H}-J_T(\Pi_\Theta;\mathcal P)\big)\le\varepsilon_{\mathrm{cov}}+\tau\log(K+1)+F^{\star}-F(\Theta)$.
\end{proof}

\begin{lemma}[Coverage]
\label{lem:app-coverage}
Suppose some $\Pi^{\dagger}\in\mathcal H$ produces at every evaluator call, conditioned on any history, a valid candidate of utility at least $s^{\star}-\epsilon$ with probability at least $p_\epsilon>0$. Then $\varepsilon_{\mathrm{cov}}\le\epsilon+(s^{\star}-s_0)e^{-Kp_\epsilon}$.
\end{lemma}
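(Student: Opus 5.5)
Since $\Pi^{\dagger}\in\mathcal H$, we have $J_T^{\mathcal H}\ge J_T(\Pi^{\dagger};\mathcal P)$, so $\varepsilon_{\mathrm{cov}}\le s^{\star}-J_T(\Pi^{\dagger};\mathcal P)$. It therefore suffices to lower-bound the expected best-of-budget utility of $\Pi^{\dagger}$. The approach is a single-run threshold argument on the event that at least one of the $K$ evaluator calls yields a valid candidate of utility at least $s^{\star}-\epsilon$. Call this the success event $E$.

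First, we bound the terminal best utility pointwise. On $E$, that candidate is validated and hence a node of $\mathcal S_{\mathrm{end}}$. By Eq.~(\ref{eq:app-best-is-max}), $B(\mathcal S_{\mathrm{end}})\ge s^{\star}-\epsilon$. On $E^{c}$, the root is always present, so $B(\mathcal S_{\mathrm{end}})\ge s_0$. Hence
\begin{equation*}
s^{\star}-B(\mathcal S_{\mathrm{end}})\le\epsilon\,\mathbf 1_E+(s^{\star}-s_0)\,\mathbf 1_{E^{c}}\le\epsilon+(s^{\star}-s_0)\,\mathbf 1_{E^{c}},
\end{equation*}
using $s^{\star}\ge s_0$. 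Taking expectations gives $s^{\star}-J_T(\Pi^{\dagger};\mathcal P)\le\epsilon+(s^{\star}-s_0)\Pr(E^{c})$.

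Second, we bound $\Pr(E^{c})$ by the conditional-probability hypothesis and a chaining argument. Order all calls of the run by call index $k=1,\ldots,K$, including calls issued concurrently within a round; concurrent calls are ordered arbitrarily and conditioned on the preceding ones, which the hypothesis "conditioned on any history" permits. Let $F_k$ be the event that call $k$ does not succeed. By hypothesis, $\Pr(F_k\mid\text{history before call }k)\le1-p_\epsilon$. By the tower property applied inductively,
\begin{equation*}
\Pr\Big(\bigcap_{k\le K}F_k\Big)\le(1-p_\epsilon)^K\le e^{-Kp_\epsilon}.
\end{equation*}
Combining the two steps gives the claim.

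The main obstacle is the bookkeeping in the second step. We must make sure that exactly $K$ calls are made, or handle the case where $\Pi^{\dagger}$ stops early. If fewer calls occur, the bound only weakens, so I would either note that the hypothesis implies calls continue until the budget is exhausted, or pad with dummy calls. We must also ensure the call-level conditioning is legitimate when calls within a round are concurrent. I would handle that by defining the history filtration at the call level and reading the hypothesis as holding with respect to it.
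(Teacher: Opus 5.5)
Your proposal is correct and follows essentially the same route as the paper: you bound $\varepsilon_{\mathrm{cov}}$ by the regret of $\Pi^{\dagger}$, split on whether some call succeeds (using the root to guarantee $B(\mathcal S_{\mathrm{end}})\ge s_0$ otherwise), and chain the per-call failure bound through a call-level filtration, with within-round calls ordered arbitrarily, to get $(1-p_\epsilon)^K\le e^{-Kp_\epsilon}$. Your early-stopping worry is already settled by the setup, since $\mathcal S_{\mathrm{end}}$ is defined as the state at budget exhaustion, so all $K$ calls are made; dummy padding calls would not satisfy the hypothesis anyway and are not needed.
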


\begin{proof}
Index the evaluator calls of a run of $\Pi^{\dagger}$ by $j=1,\ldots,K$, ordering the calls of one round arbitrarily, let $\mathcal K_{j-1}$ be the history before call $j$, including the outcomes of earlier calls of the same round, let $\mathsf S_j$ be the event that call $j$ returns a validated candidate with utility at least $s^{\star}-\epsilon$, and let $\mathsf E_j=\bigcap_{j'\le j}\mathsf S_{j'}^{c}$, with $\mathsf E_0$ the sure event. The hypothesis states $\mathbb P(\mathsf S_j\mid\mathcal K_{j-1})\ge p_\epsilon$. Since $\mathsf E_{j-1}\in\mathcal K_{j-1}$ and $\mathsf E_j=\mathsf E_{j-1}\cap\mathsf S_j^{c}$,
\begin{equation}
\mathbb P(\mathsf E_j)=\mathbb E\big[\mathbf 1\{\mathsf E_{j-1}\}\,\mathbb P(\mathsf S_j^{c}\mid\mathcal K_{j-1})\big]\le(1-p_\epsilon)\,\mathbb P(\mathsf E_{j-1}),
\qquad\text{so}\qquad
\mathbb P(\mathsf E_K)\le(1-p_\epsilon)^{K}\le e^{-Kp_\epsilon}.
\label{eq:app-coverage-failure}
\end{equation}
On $\mathsf E_K^{c}$ some validated candidate has utility at least $s^{\star}-\epsilon$, so $s^{\star}-B(\mathcal S_{\mathrm{end}})\le\epsilon$; on $\mathsf E_K$ the root remains, so $B(\mathcal S_{\mathrm{end}})\ge s_0$. Therefore
$s^{\star}-J_T(\Pi^{\dagger};\mathcal P)=\mathbb E[s^{\star}-B(\mathcal S_{\mathrm{end}})]\le\epsilon\,\mathbb P(\mathsf E_K^{c})+(s^{\star}-s_0)\,\mathbb P(\mathsf E_K)\le\epsilon+(s^{\star}-s_0)e^{-Kp_\epsilon}$,
and $\varepsilon_{\mathrm{cov}}\le s^{\star}-J_T(\Pi^{\dagger};\mathcal P)$ because $\Pi^{\dagger}\in\mathcal H$.
\end{proof}

\subsubsection{Two-Block Ascent with Biased Estimates}
\label{app:block-rate}

Write $\vartheta=(\theta_1,\ldots,\theta_N)$ for the stacked Scientist block. As in Section~\ref{sec:theory-regret}, $\widehat\nabla_m$ is supported on the block $\mathsf b_m$ updated at step $m$ and is zero outside it, and $\nabla_{\mathsf b}F$ denotes the gradient of $F$ restricted to block $\mathsf b$ and padded with zeros likewise, so that $\|\nabla F\|^2=\|\nabla_\phi F\|^2+\|\nabla_\vartheta F\|^2$. The updates of Algorithm~\ref{alg:matched-group} are
\begin{equation}
\Theta^{(m+1)}=\Theta^{(m)}+\eta\,\widehat\nabla_m,
\qquad m=0,\ldots,2C-1,
\qquad \mathsf b_{2j}=\phi,\quad \mathsf b_{2j+1}=\vartheta,
\label{eq:app-two-block-scheme}
\end{equation}
where $\Theta^{(m)}$ is $\mathcal F_m$-measurable and the filtrations are nested. Since $\Omega$ is convex and almost surely contains every iterate, all update segments lie in $\Omega$, and the condition $\mu\le L$ in (A1) gives $q=1-\eta\mu/4\in[15/16,1)$ for $\eta\le1/(4L)$; this condition is automatic whenever $\Omega$ contains $\Theta+\nabla F(\Theta)/L$ for every $\Theta\in\Omega$ (Step~3 below).

Assumption (A2) is where the algorithm enters. The estimate $\widehat\nabla_m$ is the negative gradient of Eq.~(\ref{eq:grpo_objective}) on the group collected at $\Theta^{(m)}$, evaluated at the behavior model, where the importance ratios equal one and clipping is inactive. By Proposition~\ref{prop:objective-alignment}(iv), the leave-one-out group estimate is unbiased for the current-round part of the block gradient at the snapshot at which the group is drawn; the bias of $\widehat\nabla_m$ with respect to $\nabla_{\mathsf b_m}F(\Theta^{(m)})$ therefore consists of the omitted downstream terms $R^{\Phi}_{t+1:\mathrm{end}}$ of Eq.~(\ref{eq:block-gradient}), the dependence of $F$ on rounds other than the one at which the group is drawn, and the deviations of the implemented surrogate from the leave-one-out estimate (Appendix~\ref{app:token-normalization}); $\delta$ bounds their sum. The variance bound $\sigma^2$ is that of a group estimate, which averages $G$ forks.

\begin{proposition}[Two-block ascent with biased estimates]
\label{prop:block-ascent-rate}
Under (A1)--(A2), for $0<\eta\le1/(4L)$ and $q=1-\eta\mu/4$,
\begin{equation}
\varepsilon_{\mathrm{opt}}(2C)\le q^{C}\varepsilon_{\mathrm{opt}}(0)+\frac{5L\eta\sigma^2}{\mu}+\frac{8\delta^2}{\mu}.
\label{eq:app-two-block-rate}
\end{equation}
For an odd number of updates, $\varepsilon_{\mathrm{opt}}(2C+1)\le\varepsilon_{\mathrm{opt}}(2C)+\frac34\eta\delta^2+\frac12L\eta^2\sigma^2$.
\end{proposition}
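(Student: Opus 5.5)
The plan is a one-step ascent inequality for a single block update, then a combination of the two steps of a cycle into a full-gradient inequality, then the Polyak--\L{}ojasiewicz condition in (A1) to get a per-cycle contraction, and finally a geometric sum.

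\emph{Step 1: one block step.} Fix $m$ and write $g_m=\nabla_{\mathsf b_m}F(\Theta^{(m)})$ and $\widehat\nabla_m=g_m+e_m+\xi_m$, where $e_m=\bar\nabla_m-g_m$ satisfies $\|e_m\|\le\delta$, and $\mathbb E[\xi_m\mid\mathcal F_m]=0$ with $\mathbb E[\|\xi_m\|^2\mid\mathcal F_m]\le\sigma^2$ by (A2). All segments lie in the convex set $\Omega$, so the $L$-smoothness of (A1) gives
$F(\Theta^{(m+1)})\ge F(\Theta^{(m)})+\eta\langle\nabla F(\Theta^{(m)}),\widehat\nabla_m\rangle-\tfrac{L\eta^2}{2}\|\widehat\nabla_m\|^2$.
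Since $\widehat\nabla_m$ is supported on $\mathsf b_m$, the inner product only involves $g_m$. Taking $\mathbb E[\cdot\mid\mathcal F_m]$ turns $\|\widehat\nabla_m\|^2$ into $\|g_m+e_m\|^2+\sigma^2$ at most. I then use Young's inequality $\langle g_m,e_m\rangle\ge-\tfrac14\|g_m\|^2-\delta^2$, together with $\|g_m+e_m\|^2\le2\|g_m\|^2+2\delta^2$ and $L\eta\le\tfrac14$. This should give
$\mathbb E[F(\Theta^{(m+1)})\mid\mathcal F_m]\ge F(\Theta^{(m)})+\tfrac{\eta}{4}\|g_m\|^2-c_1\eta\delta^2-\tfrac{L\eta^2}{2}\sigma^2$
with $c_1\approx\tfrac54$. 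Dropping the nonnegative $\|g_m\|^2$ term and choosing the Young constant so that the $\|g_m\|^2$ terms cancel exactly should give the odd-step claim, a loss of at most $\tfrac34\eta\delta^2+\tfrac12L\eta^2\sigma^2$.

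\emph{Step 2: from block gradients to the full gradient (main obstacle).} The PL inequality involves $\|\nabla F\|^2=\|\nabla_\phi F\|^2+\|\nabla_\vartheta F\|^2$ at one point. The cycle, however, supplies $\|\nabla_\phi F(\Theta^{(2j)})\|^2$ and $\|\nabla_\vartheta F(\Theta^{(2j+1)})\|^2$ at two different points. I will bridge them with the Lipschitz gradient:
$\|\nabla_\vartheta F(\Theta^{(2j)})\|^2\le2\|\nabla_\vartheta F(\Theta^{(2j+1)})\|^2+2L^2\eta^2\|\widehat\nabla_{2j}\|^2$.
I then bound $\mathbb E\|\widehat\nabla_{2j}\|^2\le 2\|g_{2j}\|^2+2\delta^2+\sigma^2$. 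With $L\eta\le\tfrac14$ we have $L^2\eta^2\le\tfrac1{16}$, so the $\|g_{2j}\|^2$ contribution is absorbed into the $\phi$-step gain. Summing the two steps should yield, with $F_j:=F(\Theta^{(2j)})$,
$\mathbb E[F_{j+1}\mid\mathcal F_{2j}]\ge F_j+\tfrac{\eta}{8}\|\nabla F(\Theta^{(2j)})\|^2-a\,\eta\delta^2-b\,L\eta^2\sigma^2$
for explicit small constants $a,b$. The delicate part is tracking these constants so that they land on $5$ and $8$. I expect to tune the Young weights here, possibly splitting $\tfrac{\eta}{4}$ unevenly between the two steps.

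\emph{Step 3: contraction and summation.} Apply (A1): $\tfrac12\|\nabla F\|^2\ge\mu(F^\star-F)$. Subtracting from $F^\star$ and taking full expectations gives
$\varepsilon_{\mathrm{opt}}(2j+2)\le(1-\tfrac{\eta\mu}{4})\varepsilon_{\mathrm{opt}}(2j)+a\eta\delta^2+bL\eta^2\sigma^2$.
Iterating $C$ times and bounding the geometric series by $(\eta\mu/4)^{-1}$ produces
$q^C\varepsilon_{\mathrm{opt}}(0)+4(bL\eta\sigma^2+a\delta^2)/\mu$,
which matches Eq.~(\ref{eq:app-two-block-rate}) once $4b\le5$ and $4a\le8$.

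Finally, $\mu\le L$ and $\eta\le1/(4L)$ give $q\in[15/16,1)$. The remark that $\mu\le L$ is automatic follows by applying smoothness from $\Theta$ to $\Theta+\nabla F(\Theta)/L\in\Omega$, which gives $F^\star-F(\Theta)\ge\|\nabla F\|^2/(2L)$, and combining this with PL.
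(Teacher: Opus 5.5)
Your proposal follows the paper's proof essentially step for step: the same split $\widehat\nabla_m=g_m+e_m+\xi_m$, the same cross-block bridge $\|\nabla_\vartheta F(\Theta^{(2j+1)})\|^2\ge\frac12\|\nabla_\vartheta F(\Theta^{(2j)})\|^2-L^2\eta^2\|\widehat\nabla_{2j}\|^2$, then PL and a geometric sum with $1/(1-q)=4/(\eta\mu)$. The one thing to fix is the Young weight in Step 1, because $\langle g_m,e_m\rangle\ge-\frac14\|g_m\|^2-\delta^2$ gives $c_1=\frac54$ and hence a $\delta^2$ floor of about $10\delta^2/\mu$ rather than $8\delta^2/\mu$. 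The paper instead uses $\langle g_m,e_m\rangle\ge-\frac12\|g_m\|^2-\frac12\delta^2$, which yields the single inequality $\mathbb E[F(\Theta^{(m+1)})\mid\mathcal F_m]\ge F(\Theta^{(m)})+\frac{\eta}{4}\|g_m\|^2-\frac34\eta\delta^2-\frac12L\eta^2\sigma^2$; this one inequality gives the odd-step claim directly, and in the cycle its per-cycle losses $\eta\delta^2(\frac32+\frac{L^2\eta^2}{2})\le2\eta\delta^2$ and $L\eta^2\sigma^2(1+\frac{L\eta}{4})\le\frac54L\eta^2\sigma^2$ produce exactly the constants $8$ and $5$.
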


\begin{proof}
\emph{Step 1: one block update.}
Condition on $\mathcal F_m$, write $v_m=\nabla_{\mathsf b_m}F(\Theta^{(m)})$, and decompose $\widehat\nabla_m=v_m+e_m+\xi_m$ with $e_m=\bar\nabla_m-v_m$, $\bar\nabla_m=\mathbb E[\widehat\nabla_m\mid\mathcal F_m]$, so that $\|e_m\|\le\delta$, $\mathbb E[\xi_m\mid\mathcal F_m]=0$, and $\mathbb E[\|\xi_m\|^2\mid\mathcal F_m]\le\sigma^2$ by (A2). By (A1) and since $\widehat\nabla_m$ is supported on $\mathsf b_m$, $F(\Theta^{(m+1)})\ge F(\Theta^{(m)})+\eta\langle v_m,\widehat\nabla_m\rangle-\frac{L\eta^2}{2}\|\widehat\nabla_m\|^2$. Now $\mathbb E[\langle v_m,\widehat\nabla_m\rangle\mid\mathcal F_m]=\|v_m\|^2+\langle v_m,e_m\rangle\ge\frac12\|v_m\|^2-\frac12\delta^2$ and $\mathbb E[\|\widehat\nabla_m\|^2\mid\mathcal F_m]=\|v_m+e_m\|^2+\mathbb E[\|\xi_m\|^2\mid\mathcal F_m]\le2\|v_m\|^2+2\delta^2+\sigma^2$, hence
\begin{equation}
\begin{aligned}
\mathbb E[F(\Theta^{(m+1)})\mid\mathcal F_m]
&\ge F(\Theta^{(m)})+\eta\Big(\frac12-L\eta\Big)\|v_m\|^2-\eta\delta^2\Big(\frac12+L\eta\Big)-\frac{L\eta^2\sigma^2}{2}\\
&\ge F(\Theta^{(m)})+\frac{\eta}{4}\|v_m\|^2-\frac34\eta\delta^2-\frac{L\eta^2\sigma^2}{2},
\end{aligned}
\label{eq:app-half-step}
\end{equation}
where the last step uses $\eta\le1/(4L)$.

\emph{Step 2: a complete cycle.}
Write $v=\nabla_\phi F(\Theta^{(2j)})$, $w_0=\nabla_\vartheta F(\Theta^{(2j)})$, and $w=\nabla_\vartheta F(\Theta^{(2j+1)})$. By (A1), $\|w-w_0\|\le L\eta\|\widehat\nabla_{2j}\|$, and $\|a\|^2\ge\frac12\|a'\|^2-\|a-a'\|^2$ gives
\begin{equation}
\|w\|^2\ge\tfrac12\|w_0\|^2-L^2\eta^2\|\widehat\nabla_{2j}\|^2,
\qquad
\mathbb E[\|\widehat\nabla_{2j}\|^2\mid\mathcal F_{2j}]\le2\|v\|^2+2\delta^2+\sigma^2 .
\label{eq:app-cross-block}
\end{equation}
Applying Eq.~(\ref{eq:app-half-step}) at $m=2j+1$, taking $\mathbb E[\,\cdot\mid\mathcal F_{2j}]$, inserting Eq.~(\ref{eq:app-cross-block}), and then applying Eq.~(\ref{eq:app-half-step}) at $m=2j$,
\begin{equation}
\begin{aligned}
\mathbb E[F(\Theta^{(2j+2)})\mid\mathcal F_{2j}]
&\ge F(\Theta^{(2j)})+\Big(\frac{\eta}{4}-\frac{L^2\eta^3}{2}\Big)\|v\|^2+\frac{\eta}{8}\|w_0\|^2\\
&\qquad-\eta\delta^2\Big(\frac32+\frac{L^2\eta^2}{2}\Big)-L\eta^2\sigma^2\Big(1+\frac{L\eta}{4}\Big)\\
&\ge F(\Theta^{(2j)})+\frac{\eta}{8}\|\nabla F(\Theta^{(2j)})\|^2-2\eta\delta^2-\frac54L\eta^2\sigma^2\\
&\ge F(\Theta^{(2j)})+\frac{\eta\mu}{4}\big(F^{\star}-F(\Theta^{(2j)})\big)-2\eta\delta^2-\frac54L\eta^2\sigma^2,
\end{aligned}
\label{eq:app-full-step}
\end{equation}
where the third line uses $\eta\le1/(4L)$, so that $\frac{\eta}{4}-\frac{L^2\eta^3}{2}\ge\frac{7\eta}{32}\ge\frac{\eta}{8}$, $\frac32+\frac{L^2\eta^2}{2}\le2$, and $1+\frac{L\eta}{4}\le\frac54$, and the last step uses the PL inequality in (A1). Subtracting from $F^{\star}$ and taking expectations, $\varepsilon_{\mathrm{opt}}(2j+2)\le q\,\varepsilon_{\mathrm{opt}}(2j)+2\eta\delta^2+\frac54L\eta^2\sigma^2$, and unrolling over $C$ cycles with $\sum_{j<C}q^j\le\frac{1}{1-q}=\frac{4}{\eta\mu}$,
\begin{equation}
\varepsilon_{\mathrm{opt}}(2C)\le q^{C}\varepsilon_{\mathrm{opt}}(0)+\frac{4}{\eta\mu}\Big(2\eta\delta^2+\frac54L\eta^2\sigma^2\Big)
=q^{C}\varepsilon_{\mathrm{opt}}(0)+\frac{8\delta^2}{\mu}+\frac{5L\eta\sigma^2}{\mu}.
\label{eq:app-unrolled}
\end{equation}

\emph{Step 3: an unmatched final block.}
Dropping the nonnegative gradient term in Eq.~(\ref{eq:app-half-step}) at $m=2C$ and taking expectations gives the odd-$M$ statement. Finally, if $\Theta+\nabla F(\Theta)/L\in\Omega$, then by smoothness $F^{\star}-F(\Theta)\ge F(\Theta+\nabla F(\Theta)/L)-F(\Theta)\ge\frac{1}{2L}\|\nabla F(\Theta)\|^2$, which with the PL inequality gives $\mu\le L$ wherever $\nabla F\ne0$, so the condition $\mu\le L$ in (A1) is then implied.
\end{proof}

\begin{proof}[Proof of Theorem~\ref{thm:regret}]
Apply Lemma~\ref{lem:app-regret-decomposition} at $\Theta=\Theta^{(M)}$ and take expectations over the $M=2C$ updates: $s^{\star}-\mathbb E[J_T(\Pi_{\Theta^{(M)}};\mathcal P)]\le\varepsilon_{\mathrm{cov}}+\tau\log(K+1)+\varepsilon_{\mathrm{opt}}(2C)$. Bound the last term by Proposition~\ref{prop:block-ascent-rate}, noting $\varepsilon_{\mathrm{opt}}(0)=F^{\star}-F(\Theta^{(0)})$ since $\Theta^{(0)}$ is deterministic.
\end{proof}

\subsubsection{Arbitrary Starting States and the Interleaved Run}
\label{app:regret-any-state}

For a state $\mathcal S$ with $k(\mathcal S)$ evaluator calls remaining, let $J_T(\Pi;\mathcal S)$ and $J_{\tau,K}(\Theta;\mathcal S)$ be the expected best utility and the expected potential at budget exhaustion when $\Pi$, respectively $\Pi_\Theta$, is run from $\mathcal S$ without further updates, and write $F_{\mathcal S}(\Theta)=J_{\tau,K}(\Theta;\mathcal S)$, $F^{\star}_{\mathcal S}=\sup_{\Omega}F_{\mathcal S}$, and $\varepsilon_{\mathrm{cov}}(\mathcal S)=s^{\star}-\sup_{\Pi\in\mathcal H}J_T(\Pi;\mathcal S)$. Since $B(\mathcal S)\ge s_0\ge0$, Proposition~\ref{prop:objective-alignment}(i) applies to every state reached from $\mathcal S$, so the proof of Lemma~\ref{lem:app-regret-decomposition} gives, for every $\Theta\in\Omega$,
\begin{equation}
s^{\star}-J_T(\Pi_\Theta;\mathcal S)\le\varepsilon_{\mathrm{cov}}(\mathcal S)+\tau\log(K+1)+F^{\star}_{\mathcal S}-F_{\mathcal S}(\Theta),
\label{eq:app-state-decomposition}
\end{equation}
and Lemma~\ref{lem:app-coverage} holds with the coverage condition imposed on rollouts from $\mathcal S$, $K$ replaced by $k(\mathcal S)$, and $s_0$ by $B(\mathcal S)$, giving $\varepsilon_{\mathrm{cov}}(\mathcal S)\le\epsilon+(s^{\star}-B(\mathcal S))e^{-k(\mathcal S)p_\epsilon}$. For a fixed starting state, Theorem~\ref{thm:regret} therefore holds verbatim with $\mathcal P$ replaced by $\mathcal S$ and (A1)--(A2) imposed on $F_{\mathcal S}$.

In \method the adaptation groups and the final inference phase share one budget: candidates produced during adaptation are committed to the tree, and once the remaining budget no longer admits a full group, the remainder is a rollout of $\Pi_{\Theta^{(M)}}$ without further updates (Appendix~\ref{app:hparams}). Let $\mathcal S_{\mathrm{ad}}$ be the state at which adaptation stops; $(\Theta^{(M)},\mathcal S_{\mathrm{ad}})$ is random through the adaptation run, and conditionally on it the final phase is a rollout of $\Pi_{\Theta^{(M)}}$ from $\mathcal S_{\mathrm{ad}}$, so by the tower property $\mathfrak R_T(\method)=s^{\star}-\mathbb E[B(\mathcal S_{\mathrm{end}})]=\mathbb E\big[s^{\star}-J_T(\Pi_{\Theta^{(M)}};\mathcal S_{\mathrm{ad}})\big]$. Applying Eq.~(\ref{eq:app-state-decomposition}) pathwise at $(\Theta^{(M)},\mathcal S_{\mathrm{ad}})$,
\begin{equation}
\mathfrak R_T(\method)\le\mathbb E[\varepsilon_{\mathrm{cov}}(\mathcal S_{\mathrm{ad}})]+\tau\log(K+1)+\mathbb E\big[F^{\star}_{\mathcal S_{\mathrm{ad}}}-F_{\mathcal S_{\mathrm{ad}}}(\Theta^{(M)})\big].
\label{eq:app-interleaved-decomposition}
\end{equation}
The last expectation is not covered by Proposition~\ref{prop:block-ascent-rate} directly, because $F_{\mathcal S_{\mathrm{ad}}}$ depends on the whole adaptation run and is not $\mathcal F_m$-measurable. We therefore assume a snapshot-mismatch bound: (A3) there is $\varepsilon_{\mathrm{snap}}\ge0$ such that $\big|\big(F^{\star}_{\mathcal S_{\mathrm{ad}}}-F_{\mathcal S_{\mathrm{ad}}}(\Theta)\big)-\big(F^{\star}-F(\Theta)\big)\big|\le\varepsilon_{\mathrm{snap}}$ for all $\Theta\in\Omega$ almost surely, where $F$ is the objective of Theorem~\ref{thm:regret} from the initial state. Then $\mathbb E[F^{\star}_{\mathcal S_{\mathrm{ad}}}-F_{\mathcal S_{\mathrm{ad}}}(\Theta^{(M)})]\le\varepsilon_{\mathrm{opt}}(M)+\varepsilon_{\mathrm{snap}}$, and Proposition~\ref{prop:block-ascent-rate} yields
\begin{equation}
\mathfrak R_T(\method)\le\mathbb E[\varepsilon_{\mathrm{cov}}(\mathcal S_{\mathrm{ad}})]+\tau\log(K+1)+q^{C}\big(F^{\star}-F(\Theta^{(0)})\big)+\frac{5L\eta\sigma^2+8\delta^2}{\mu}+\varepsilon_{\mathrm{snap}}.
\label{eq:app-interleaved-bound}
\end{equation}
The coverage term involves the length $k(\mathcal S_{\mathrm{ad}})$ of the inference phase and the incumbent $B(\mathcal S_{\mathrm{ad}})$ that adaptation produced, and $\varepsilon_{\mathrm{snap}}$ measures how much the surrogate optimality gap changes when it is measured from $\mathcal S_{\mathrm{ad}}$ rather than from the initial state, which is controlled by the budget and the candidates committed during adaptation.

\begin{remark}[Rate, noise, bias, and parallelism]
\label{rem:app-rate-assumptions}
With $\delta=\sigma=0$, Eq.~(\ref{eq:app-two-block-rate}) gives geometric decay to zero; the two floors are the price of gradient noise and bias under a Polyak--{\L}ojasiewicz condition (see \citealp{karimi2016pl} for the stochastic case). The cross-block term in Eq.~(\ref{eq:app-cross-block}) is what the Gauss--Seidel order pays for evaluating the Scientist gradient after the Coordinator has moved; it costs a constant factor, not a dependence on $N$. Parallel Scientist updates avoid an explicit $1/(N+1)$ block-selection factor because the exponent counts Coordinator--Scientist cycles rather than individual blocks; $L$, $\mu$, $\sigma^2$, and $\delta$ may still depend on $N$.
\end{remark}

\begin{remark}[Private-gradient energy decomposition]
\label{rem:dispersion-rate}
At any Scientist update, write $g_i=\nabla_{\theta_i}F$, $\bar g=\frac1N\sum_ig_i$, and $\mathcal D^\vartheta=\frac1N\sum_{i<j}\|g_i-g_j\|^2$. Lemma~\ref{lem:app-pairwise-variance} gives
$\|\nabla_\vartheta F\|^2=\sum_i\|g_i\|^2=N\|\bar g\|^2+\mathcal D^\vartheta$,
so dispersion contributes additively to the gradient energy that drives Eq.~(\ref{eq:app-half-step}). At a common model $\theta_1=\cdots=\theta_N=\theta$, a shared adapter with the mean-reduced pooled loss moves every Scientist by $\eta\bar g$ while private steps move Scientist $i$ by $\eta g_i$; their first-order ascents are $\eta N\|\bar g\|^2$ and $\eta(N\|\bar g\|^2+\mathcal D^\vartheta)$, the parameter-space counterpart of Lemma~\ref{lem:private-specialization}. A uniform PL constant, however, lower-bounds the ratio of gradient energy to objective gap along all iterates; a larger dispersion at one snapshot does not by itself yield a larger $\mu$.
\end{remark}

\section{Experimental Details}
\label{app:experimental-details}

\subsection{Tasks, Metrics, and Reporting Anchors}
\label{app:tasks}

The 12 MLGym tasks span tabular prediction, image classification, natural-language understanding and modeling, game-theoretic strategy design, and sequential control. They comprise 11 of the 13 MLGym-Bench tasks (all except image captioning and 3-SAT) and the Titanic task distributed with the MLGym framework. Table~\ref{tab:tasks} lists the 12 tasks with their metric, training timeout, and the two anchors used for normalization: the shipped baseline score and the human-best reference. Across task families, agents edit ML pipelines, train classifiers or language models, implement game strategies, and train control policies under task-specific evaluators. The 10 MLE-bench tasks of Table~\ref{tab:mlgym-main-results} are Kaggle competitions on tabular, text, image, scientific, and medical data. Nine are from the 75-competition MLE-bench test set (Plant Pathology is the 2020 FGVC7 competition), and Spaceship Titanic is from its development split.

\begin{table}[h]
\centering
\caption{Tasks, metrics, timeouts, and reporting anchors. Human-best references are independently measured as described below; for Titanic and MNLI, re-measured baseline values are shown in parentheses.}
\label{tab:tasks}
\small
\begin{tabular}{llccc}
\toprule
Task & Metric & Timeout & Baseline & Human best \\
\midrule
Titanic & accuracy & 1800\,s & 0.766 (0.943) & 0.830 \\
House Prices & $R^2$ & 1800\,s & 0.880 & 0.990 \\
Fashion-MNIST & accuracy & 1800\,s & 0.848 & 0.968 \\
CIFAR-10 & accuracy & 1800\,s & 0.497 & 0.994 \\
MNLI & accuracy (\%) & 2400\,s & 52.51 (50.81) & 92.5 \\
Language Modeling & validation loss & 2400\,s & 4.238 & 3.280 \\
Battle of the Sexes & mean payoff & 1800\,s & 1.023 & 1.667 \\
Iterated Prisoner's Dilemma & mean payoff & 1800\,s & 2.372 & 3.000 \\
Colonel Blotto & mean score & 1800\,s & $-0.248$ & 0.500 \\
MountainCar & mean reward & 1800\,s & 53.26 & 100.0 \\
Breakout & mean reward & 1800\,s & 41.93 & 100.0 \\
MetaMaze & mean reward & 1800\,s & 20.12 & 100.0 \\
\bottomrule
\end{tabular}
\end{table}

\paragraph{Human-best references.}
We independently measure the Human Best anchors by evaluating strong human-designed reference solutions under each task's standard benchmark protocol. They are used only for cross-task reporting normalization.

\subsection{Private Evaluators}
\label{app:integrity}

We score all candidates with private evaluators that run outside the agent workspace. Test labels, evaluator state, and protected scoring files are inaccessible to the research agents and are restored before each validation. For Fashion-MNIST, CIFAR-10, and MNLI, we hold out a test split from the labeled data and keep its labels in the private evaluator. For House Prices, we use MLGym's split of the 1460 labeled rows (1022 train, 146 validation, 292 test) with test labels moved to the private directory. Image and language candidates run in a sandbox with network access disabled and task data exposed through a permission-enforcing local filesystem.

\paragraph{Game-theory evaluators.}
Each scoring plays 10 rounds $\times$ 10{,}000 Monte-Carlo simulations and reports the mean payoff per round. The evaluator isolates opponent randomness from the candidate, protects the opponent and scorer files, and rebuilds a clean evaluation directory for each scoring.

\paragraph{Control-task evaluators.}
MountainCar, Breakout, and MetaMaze share one evaluator, byte-identical to MLGym's up to the environment id and episode cap. A scoring loads the submission's five checkpoints and rolls each out on five fixed seeds with 256 vectorized environments (6{,}400 rollouts), reporting the mean reward. Rollouts are stochastic but fully determined by the fixed seeds within one software build; because frozen seeds produce different networks under different jax/flax versions, the baseline anchors in Table~\ref{tab:tasks} are re-measured on our stack.

\paragraph{Language Modeling evaluator.}
The score is cross-entropy on a fixed 10{,}485{,}760-token FineWeb validation shard (320 deterministic batches of $32\times1024$ tokens, GPT-2 BPE). The training script never reads the validation shard; inside the sandbox the shard path is blanked, and the private scorer, which runs outside the sandbox, reads the real one.

\subsection{Models, Harness, and Candidate Execution}
\label{app:hparams}

\paragraph{Executor and judge.}
All methods share the same Executor, Claude Haiku~4.5 operating a tool loop with read, write, run, and validate actions inside the candidate workspace. The Executor is frozen throughout; it implements the plan it is given and returns the candidate together with execution and validation feedback. A per-batch judge, also Haiku~4.5, discards malformed hypotheses (empty, off-schema, or inexecutable plans) before execution in every arm; a discarded hypothesis is regenerated and does not consume a scored candidate.

\paragraph{Policies and adapters.}
\method uses Qwen3-4B-Instruct-2507 as the frozen backbone for both the Coordinator and the Scientists, served locally in bfloat16, with four independent LoRA adapters: one coordination adapter $\phi$ and one private Scientist adapter $\theta_i$ per unit ($N{=}3$). Each adapter has rank $r{=}32$, scaling $\alpha{=}64$, no dropout, and is applied to the seven linear projections of every transformer block (q, k, v, o, gate, up, down); each adapter has 66M parameters. The Coordinator emits the complete portfolio in one schema-constrained JSON completion of at most 1200 tokens containing, for each unit, a parent identifier chosen from a retrieved shortlist of at most 16 tree nodes (at most two nodes from the same family) and a free-text directive with three fields (research question, observed bottleneck, constraints). Each Scientist emits a hypothesis and an execution plan of at most 3400 tokens for its assignment. All adapters are reset to their initial checkpoint at the start of every task.

\paragraph{Adaptation.}
Adaptation follows Section~\ref{sec:coadaptation}. A group consists of $G{=}8$ forks of one frozen snapshot, each executing all $N{=}3$ units, hence 24 scored candidates per group; Coordinator groups and Scientist groups alternate strictly, and each group is followed by one optimizer step on its active block, after which the updated serving adapters are used for the next group. When the remaining scored-candidate budget is insufficient for another full matched-snapshot group, the remainder is used for inference without an additional update. The leave-one-out advantage of Eq.~(\ref{eq:grpo_objective}) is standardized by the within-group standard deviation, broadcast to all response tokens, and combined with the asymmetric clipping range and the KL penalty to the frozen task-start policy. Table~\ref{tab:hparams} lists the values.

\begin{table}[h]
\centering
\caption{Test-time adaptation hyperparameters for \method.}
\label{tab:hparams}
\small
\begin{tabular}{ll}
\toprule
Backbone & Qwen3-4B-Instruct-2507 (frozen, bfloat16) \\
Adapters & 1 Coordinator + 3 Scientist LoRA, $r{=}32$, $\alpha{=}64$, dropout 0 \\
Research units $N$ / forks per group $G$ & 3 / 8 \\
Candidates per group & $G{\times}N = 24$ \\
Optimizer steps per group & 1 \\
Clipping range $[1-\epsilon_{\mathrm{lo}},1+\epsilon_{\mathrm{hi}}]$ & $[0.8,\,1.28]$ \\
KL coefficient $\beta_{\mathrm{KL}}$ (to task-start policy) & $0.01$ \\
Learning rate / optimizer / gradient clip & $10^{-5}$ / AdamW / 1.0 \\
Sampling temperature / top-$p$ & $1.0$ / $1.0$ \\
Max new tokens, Coordinator / Scientist & 1200 / 3400 \\
Potential temperature $\tau$ & $0.05$ \\
Parent shortlist size / family cap & 16 / 2 \\
\bottomrule
\end{tabular}
\end{table}

\subsection{Baseline Implementations}
\label{app:baselines}

The three baselines are re-implemented in the same harness as \method while preserving the core search mechanisms of their cited methods. AIRA~\citep{toledo2026ai} formulates research as graph-based search over candidate solutions: it selects previously evaluated nodes, applies LLM-driven Draft, Debug, and Improve operators, and adds the evaluated artifact back to the search graph. MLEvolve~\citep{du2026mlevolve} uses Progressive MCGS, augmenting tree search with graph-based reference edges for cross-branch information flow, a progressive exploration-to-exploitation schedule, and retrospective memory for task-specific experience reuse. CORAL~\citep{coral} runs multiple autonomous research agents asynchronously; agents explore and reflect in parallel, communicate discoveries through shared persistent memory, and use periodic heartbeat-style interventions to sustain and redirect the search. All three use Haiku~4.5 as a frozen Scientist together with the shared Executor, evaluator, tree bookkeeping, and judge. We additionally run ARTS$^{*}$~\citep{arts}, the test-time-training variant of ARTS, in which a single reasoning Scientist inspects prior execution logs, diagnoses whether failures stem from faulty implementations or weak hypotheses, selects which tree node to expand, and is periodically fine-tuned with LoRA to distill the search tree into its weights. ARTS$^{*}$ uses the Qwen3-4B-Instruct-2507 backbone of \method{} together with the shared Executor, evaluator, tree bookkeeping, and judge.

\subsection{Ablation Protocol}
\label{app:ablation_protocol}

\paragraph{Variants and budget.}
We compare full \method{}, Coordinator-only, Scientist-only, and
Frozen on Iterated Prisoner's Dilemma (PD) and Colonel Blotto
(CB), using three independent seeds per variant.
The variants share the task-start policy initialization,
backbone, prompts, sampling settings, Executor, judge, and
tree bookkeeping. They differ only in which optimizer steps
are enabled: full \method{} updates both policy levels,
Coordinator-only updates the Coordinator, Scientist-only
updates the private Scientist adapters, and Frozen disables
both updates while retaining the same tree-based search.
Each run uses 72 candidate evaluations, comprising two
24-candidate adaptation groups and a final 24-candidate
evaluation batch reserved before the run. The reserved batch
is excluded from training. Timeouts and validation failures
consume budget in every variant. Model and optimization
settings follow Appendix~\ref{app:hparams}.

\paragraph{Adaptation schedule.}
The first group, denoted $b_{\mathrm{pre}}$, is collected from
the task-start snapshot before any parameter update.
Following the Coordinator-group protocol of Section~\ref{sec:coadaptation},
it consists of $G=8$ independent forks, each sampling a fresh
joint portfolio for $N=3$ research units and executing their
Scientists' responses. The snapshot and behavior policies
remain fixed during collection. Full \method{} and Coordinator-only
then take one optimizer step on the Coordinator adapter;
Scientist-only and Frozen skip this step. Validated candidates
are committed to the discovery tree in every variant.

The second group, denoted $b_{\mathrm{mid}}$, follows the
Scientist-group protocol. From the resulting snapshot, the
current Coordinator samples one portfolio, which is held fixed
across eight forks. Each Scientist independently samples its
response in each fork, producing another 24 candidate
evaluations. Full \method{} and Scientist-only then take one
optimizer step on each private Scientist adapter, with these
updates performed in parallel; Coordinator-only and Frozen
skip this step. Validated candidates are again committed to
the tree. After each enabled update, the updated serving
adapters are used for subsequent generation. Full \method{}
therefore completes one Coordinator--Scientist adaptation
cycle before the final batch is collected.

\paragraph{Post-cycle evaluation.}
After the second adaptation group has been processed and all
enabled updates applied, we generate a fresh 24-candidate
batch $b_{\mathrm{post}}$ using the resulting policies.
This batch consists of eight independent forks of the
post-adaptation snapshot. Each fork samples a new joint
portfolio and fresh responses from all three Scientists.
Thus, $b_{\mathrm{pre}}$ and $b_{\mathrm{post}}$ use the same
portfolio-resampling scheme, whereas $b_{\mathrm{mid}}$
holds one portfolio fixed for Scientist training.

All policy parameters, the discovery state, and the parent
shortlist remain fixed throughout collection of
$b_{\mathrm{post}}$. Candidates and feedback from one
evaluation fork are not exposed to the others, and no
optimizer step is taken on this batch. Each variant is
evaluated from the tree produced by its own adaptation
groups, so the comparison includes how the enabled updates
shape the subsequent discovery process.

\paragraph{Reported statistics.}
A valid candidate counts as a threshold hit when its raw
private-evaluator score (Appendix~\ref{app:integrity}) is at least $2.62$ on PD or
$0.245$ on CB, and the same thresholds apply to all variants and seeds.
For each seed and each reported batch, we
compute the median raw score over valid candidates and the
number of threshold hits among the 24 attempted
evaluations. Failed evaluations count as non-hits.

Table~\ref{tab:adaptation_ablation} reports
$b_{\mathrm{pre}}\!\rightarrow b_{\mathrm{post}}$,
averaging each statistic over the three seeds.
Batch medians are computed separately within each seed
before averaging, rather than by pooling candidates across
seeds. Both statistics use only candidates generated in the
corresponding batch, not candidates accumulated in the tree.

\begin{table}[t]
\centering
\small
\setlength{\tabcolsep}{5pt}
\caption{
Adaptation ablation on Prisoner's Dilemma (PD) and Colonel
Blotto (CB), averaged over three independent seeds.
Each entry reports
$b_{\mathrm{pre}}\!\rightarrow b_{\mathrm{post}}$:
the initial 24-candidate batch and a fresh 24-candidate batch
generated after all enabled updates in one scheduled
Coordinator--Scientist cycle.
``Median'' is the mean of the per-seed medians over valid
candidates; ``hits'' is the mean number of threshold hits
(score at least $2.62$ on PD or $0.245$ on CB) per 24 attempted evaluations.
}
\label{tab:adaptation_ablation}
\begin{tabular}{@{}lcccc@{}}
\toprule
& \multicolumn{2}{c}{PD}
& \multicolumn{2}{c}{CB} \\
\cmidrule(lr){2-3}
\cmidrule(lr){4-5}
Variant
& Median
& Hits / 24
& Median
& Hits / 24 \\
\midrule
Full \method{}
& $2.35 \rightarrow 2.57$
& $2 \rightarrow 10$
& $0.000 \rightarrow 0.010$
& $2 \rightarrow 6$ \\
Coordinator-only
& $2.35 \rightarrow 2.44$
& $2 \rightarrow 5$
& $0.000 \rightarrow 0.004$
& $2 \rightarrow 3$ \\
Scientist-only
& $2.35 \rightarrow 2.48$
& $2 \rightarrow 6$
& $0.000 \rightarrow 0.006$
& $2 \rightarrow 4$ \\
Frozen
& $2.35 \rightarrow 2.37$
& $2 \rightarrow 3$
& $0.000 \rightarrow 0.002$
& $2 \rightarrow 2$ \\
\bottomrule
\end{tabular}
\end{table}

\subsection{Aggregate Metrics}
\label{app:aggregate}

Figure~\ref{fig:aggregate} is computed from the three-seed mean per-task best scores of Table~\ref{tab:mlgym-main-results} and the anchors of Table~\ref{tab:tasks}. Scores are direction-corrected and normalized against the shipped baseline and human-best reference, $z=\operatorname{clip}_{[0,1]}\big((\tilde{s}-\tilde{s}_{0})/(\tilde{s}_{\mathrm{human}}-\tilde{s}_{0})\big)$, where $\tilde{s}$ increases with quality; Table~\ref{tab:normalized} lists the resulting values. Median, Mean, and IQM (the mean of the middle two quartiles) are taken over the 12 tasks; Optimality Gap is $\mathbb{E}[1-z]$, which under clipping equals $1-\mathrm{Mean}$. Confidence intervals are 95\% percentile intervals from a bootstrap that resamples the 12 task-level three-seed means with replacement (5{,}000 replicates, fixed bootstrap seed). Titanic clips to $z{=}1$ for every method, since all exceed the 0.830 reference.

\begin{table}[h]
\centering
\caption{Normalized, clipped scores $z$ behind Figure~\ref{fig:aggregate}.}
\label{tab:normalized}
\small
\begin{tabular}{lccccc}
\toprule
Task & MLEvolve & AIRA & CORAL & ARTS$^{*}$ & \method \\
\midrule
Titanic & 1.00 & 1.00 & 1.00 & 1.00 & 1.00 \\
House Prices & 0.45 & 0.45 & 0.40 & 0.46 & 0.57 \\
Fashion-MNIST & 0.74 & 0.82 & 0.79 & 0.81 & 0.91 \\
CIFAR-10 & 0.92 & 0.94 & 0.94 & 0.93 & 0.93 \\
MNLI & 0.79 & 0.79 & 0.79 & 0.79 & 0.79 \\
Language Modeling & 0.49 & 0.58 & 0.60 & 0.59 & 0.60 \\
Battle of the Sexes & 0.66 & 0.66 & 0.66 & 0.66 & 0.66 \\
Prisoner's Dilemma & 0.43 & 0.42 & 0.42 & 0.43 & 0.44 \\
Colonel Blotto & 0.67 & 0.67 & 0.67 & 0.67 & 0.67 \\
MountainCar & 0.31 & 0.38 & 0.61 & 0.67 & 0.86 \\
Breakout & 0.14 & 0.55 & 0.51 & 0.63 & 0.74 \\
MetaMaze & 0.48 & 0.65 & 0.69 & 0.69 & 0.76 \\
\bottomrule
\end{tabular}
\end{table}

\subsection{Protocol and Compute}
\label{app:protocol}

\paragraph{Budget accounting.}
Within each task, all methods receive the same nominal scored-candidate budget, so comparisons are matched by scored candidates rather than wall-clock time. Every candidate is executed under the task training timeout and scored by the private evaluator. A candidate that exceeds the timeout or fails validation consumes one evaluator call, receives failure utility $s_{\min}{=}0$ for the training potential, and is not added to the derivation tree.

\paragraph{Scores and seeds.}
Main benchmark results and adaptation ablations are averaged over three independent seeds. Table~\ref{tab:mlgym-main-results} reports the mean of each seed's best held-out score within budget, and Figure~\ref{fig:training-dynamics} aggregates the corresponding anytime best-so-far trajectories. Table~\ref{tab:adaptation_ablation} analogously averages the per-seed ablation statistics. Normalized scores use the anchors of Table~\ref{tab:tasks}; this reporting scale is distinct from the bounded training utility $s(c)=g_m(q_m(c))\in[0,1]$ used by the potential, which serves only task-time adaptation.

\paragraph{Compute.}
Experiments run on four NVIDIA A100 GPUs. Each method run takes approximately 9 hours, with test-time adaptation updating only the LoRA blocks described in Table~\ref{tab:hparams}.

\paragraph{Broader impact.}
More capable automated discovery systems can reduce the cost of exploring large hypothesis spaces and accelerate iterative scientific and machine-learning experimentation. The same autonomy can also propagate erroneous hypotheses or be applied in sensitive domains; practical deployments should therefore retain task-appropriate evaluation and human oversight.

\end{document}